\DeclareMathAlphabet\mathbfcal{OMS}{cmsy}{b}{n}
\numberwithin{equation}{section}
\newcommand*{\rom}[1]{\expandafter\@slowromancap\romannumeral #1@}
\newcommand{\argmin}{\mathop{\rm arg\min}}
\newcommand{\abs}[1]{\left\lvert#1\right\rvert}
\newcommand{\norm}[1]{\left\lVert#1\right\rVert}
\newcommand{\E}{\mathbb{E}}
\newcommand{\PP}{\mathop{{}\mathbb{P}}}
\newcommand{\indep}{\perp \!\!\! \perp}
\newcommand{\blambda}{\boldsymbol{\lambda}}
\newcommand{\bmu}{\boldsymbol{\mu}}
\newcommand{\cB}{\mathcal{B}}
\newcommand{\cH}{\mathcal{H}}
\newcommand{\cL}{\mathcal{L}}
\newcommand{\cP}{\mathcal{P}}
\newcommand{\cX}{\mathcal{X}}
\newcommand{\cZ}{\mathcal{Z}}
\newtheorem{theorem}{Theorem}
\newtheorem{assumption}{Assumption}
\newtheorem{lemma}{Lemma}
\newtheorem{corollary}{Corollary}
\newtheorem{Proposition}{Proposition}
\newtheorem{Condition}{Condition}
\newtheorem{remark}{Remark}
\theoremstyle{remark}{}
\theoremstyle{remark}{}
\title{Optimal Regularized Online Allocation by Adaptive Re-Solving}
\author{Wanteng Ma$^1$, ~ Ying Cao$^2$, ~ Danny H.K. Tsang$^2$, ~  Dong Xia$^{1}$\footnote{Ma and Cao are co-first authors. Tsang's research was partially supported by Hong Kong RGC GRF 16211220; Xia's research was partially supported by Hong Kong RGC Grant GRF 16300121 and 16301622. }\\
~ \\
$^1$Department of Mathematics, HKUST\\
$^2$Department of Electronic and Computer Engineering, HKUST}
\date{(\today)}
\begin{document}

\maketitle

\begin{abstract}
    This paper introduces a dual-based algorithm framework for solving the regularized online resource allocation problems, which have potentially non-concave cumulative  rewards, hard resource constraints, and a non-separable regularizer. Under a strategy of adaptively updating the resource constraints, the proposed framework only requests approximate solutions to the empirical dual problems up to a certain accuracy and yet delivers an optimal logarithmic regret under a locally second-order growth condition. Surprisingly, a delicate analysis of the dual objective function enables us to eliminate the notorious log-log factor in regret bound.   The flexible framework renders renowned and computationally fast algorithms immediately applicable, e.g., dual stochastic gradient descent. Additionally, an infrequent re-solving scheme is proposed, which significantly reduces computational demands without compromising the optimal regret performance. A worst-case square-root regret lower bound is established if the resource constraints are not adaptively updated during dual optimization, which underscores the critical role of adaptive dual variable update. Comprehensive numerical experiments demonstrate the merits of the proposed algorithm framework. 
    
\end{abstract}

\begin{sloppypar}

\section{Introduction}\label{sec:intro}
Online resource allocation seeks to maximize the total rewards in an online service system that is subject to resource constraints. As an exemplary model for sequential decision-making, online allocation has drawn considerable attention in recent decades. Meanwhile, it is strongly connected to  other online problems such as revenue management \citep{talluri2004theory}, online linear programming \citep{agrawal2014dynamic}, and ads bidding problems \citep{lee2013real},  to name but a few. Online allocation finds applications in diverse fields, e.g.,  computer science and operation research. Oftentimes, online allocation problems feature resource constraints that are either hard \citep{mehta2007adwords} or soft \citep{mahdavi2012trading}, with different constraint capacities. The goal of a decision maker is to maximize the total rewards (revenue, utility) function by a real-time decision policy that enforces each of the resource constraints. 

So far, existing literature on online allocation mostly focused on additively separable objectives, i.e., the objective function only involves the total rewards that can be simply described as the cumulative rewards by time (e.g., \cite{mehta2007adwords,devanur2009adwords,balseiro2019learning}). 
While a separable objective is favorable for tracking additive total rewards, it falls short of describing globally non-separable quantities such as total resource consumption or average actions. For instance, the average action \citep{agrawal2014fast} in online advertising measures the amount of under-delivery of impressions. Unfortunately, non-separable objectives are considerably under-explored in the literature, and particularly, there is a paucity of work investigating the impact of non-separable regularization on separable cumulative reward functions. 
Here we are interested in regularized online allocation problems, which add a non-separable regularizer to the objective function as a penalty for various purposes such as resource-saving, load balancing, diversity, and fairness \citep{ghosh2009bidding,balseiro2021regularized,celli2022parity}. 
Compared with non-regularized online resource allocation that maximizes an additively separable objective, non-separable regularization poses new challenges to algorithm design and regret analysis.  

In this paper, we study regularized online allocation problem with a, potentially non-concave, reward function and linear resource constraints under the so-called \textit{random input} model \citep{goel2008online} where i.i.d. requests arrive sequentially and follow an unknown distribution. Decisions must be made sequentially, that is, once a request is received with a known reward function, the decision maker shall instantly make an irrevocable decision based on the current request, previous history, and remaining resources. Throughout the paper, we impose hard constraints on the total resource consumption, which shall never be violated so that the decision-maker must wisely control the resource consumption at any time. Clearly, the challenges of online allocation problems mainly stem from the dilemma of fulfilling the current request or reserving the resources for, possibly more rewardable, future ones. The task for a decision maker is to design a strategy that maximizes the regularized total rewards subject to resource constraints. A typical application of the problem under study is online advertising \citep{mehta2007adwords,agrawal2018proportional} where a publisher needs to assign each impression to some advertiser 
 and maximize the click-through rate with budget constraints on each advertiser. Oftentimes, other aspects of resource consumption, including the fairness of advertisers or load balancing, are put into consideration. Towards that end, a regularizer on total click-through rates can be added, in which case the objective function turns out to be the regularized cumulative total click-through rates. 

Our main goal is to design computationally efficient algorithms for the aforementioned regularized online allocation problems, which, simultaneously, achieve theoretically optimal regrets. In the absence of a non-separable regularizer, it has been well recognized that the lower bound of regret of online allocation problems grows at a logarithmic rate \citep{bray2019does,li2021online}. The latter work also proposed adaptive policies that achieve the logarithmic-order regrets up to an additional $\log\log$ factor. Moreover, \cite{arlotto2019uniformly} shows that adaptive policies are, generally, necessary to make a low regret possible. In sharp contrast, to our best knowledge, regrets achieved by prior algorithms \citep{balseiro2021regularized} on regularized online allocation problems are of a square-root order. A first natural question is: can a logarithmic-order regret be achieved in the existence of a non-separable regularizer? 
Actually, we seek an even more ambitious goal: can we achieve a regret of exactly order $O(\log T)$ without the log-log factor so that the lower bound is sharply met?  The next question is more crucial: is there any computationally efficient algorithm that attains the desired regret?
Surprisingly, we give affirmative answers to both questions by designing an adaptive algorithm framework that is flexible, computationally fast, and theoretically guaranteed to achieve the sharply optimal regret. 
Extensive numerical simulations and real data experiments are presented to corroborate the effectiveness of our algorithms.

\subsection{Contributions}\label{sec:contributions}
To summarize, we make the following contributions in this paper. 

\textit{Sharp dual convergence in non-linear and regularized cases.} We provide two parallel approaches to derive the convergence rate of the empirical dual solution to its population counterpart in the case of additive non-linear rewards function and non-separable regularizer. By either localized Rademacher complexity or a partition argument, we show that the dual convergence rate is at $O(T^{-1})$, which improves the known rate $O(T^{-1}\log\log T)$ that was established only for non-regularized linear reward functions \citep{li2021online}. The improvement is made possible by a delicate analysis of the local behavior of the empirical dual program near the optimal solution. 
Different from \cite{balseiro2021regularized}, the proposed new dual form can depict the impact of constraint update, which is the key to our algorithm design. Our analysis also establishes a connection between the approximation errors measured by function values and the deviations of approximate solutions, which are determined by both intrinsic randomness and the approximation accuracy of solutions. It suggests that any approximate solution, up to a certain accuracy, to the dual optimization suffices to guarantee the overall convergence of a primal-dual algorithm, which lays the theoretical foundation for our history-dependent algorithm design. Notably, as a stochastic optimization problem, the derived dual convergence sheds new light on the widely studied Sample Average Approximation (SAA) and Empirical Risk Minimization (ERM) problems and may be of independent interest.

\textit{Adaptive algorithm framework with re-solving.} We propose a flexible dual-based and history-dependent, i.e., reliant on past data and actions, algorithm framework for solving the regularized online allocation problem. As a primal-dual algorithm featuring re-solving, each iteration mainly consists of two routines: primal decision-making and periodical dual optimization. 
At a high level, our adaptive algorithm framework generalizes the history-dependent policy in online linear programming \citep{li2021online}, which evolves from the budget-ratio policy \citep{arlotto2019uniformly,balseiro2019learning} and the {\it re-solving} heuristic in network revenue management \citep{jasin2012re,wu2015algorithms}. There are two key ingredients in the dual optimization of our algorithm framework. First, for each optimization problem, we adaptively update the average remaining resources in the dual problem. Besides fulfilling the resource constraints, this adaptive resource control plays an essential role in achieving a $O(\log T)$ regret rather than the $O(T^{1/2})$ one attained by \cite{balseiro2021regularized}. Second, our algorithm framework only requires an approximate solution to dual optimization, up to a certain accuracy. This allows a flexible choice of computationally efficient algorithms for dual optimization, be they deterministic or stochastic. Paired with first-order methods, our algorithm enjoys an acceptable polynomial-time cost comparable to prior algorithms. Moreover, we also develop the {\it infrequent} resolving technique that only requires solving dual optimizations for $O(\log T)$ times while achieving optimal regret and a fast adaptive dual gradient method with {\it linear} computation costs but a sub-optimal $O(\log^2 T)$ regret. Note that our algorithm framework is also applicable to linear reward functions or non-regularized online allocation problems. 


\textit{Regret analysis.} With its offline optimum as the benchmark, we investigate the regret attained by the adaptive algorithm framework for regularized online allocation problems. Since regret is characterized by dual convergence, the aforementioned new result allows us to derive a sharp regret bound. More exactly, we show that our adaptive algorithm achieves an $O(\log T)$ regret, which matches the best results in {\it constraint-free and non-regularized} online convex optimization \citep{hazan2007logarithmic} and multi-secretary problem \citep{bray2019does}. A matching lower bound is established under our assumptions demonstrating the optimality of our adaptive algorithm framework. To our best knowledge, this is the first theoretical guarantee of an exact $O(\log T)$ regret bound for online non-linear allocation with hard constraints and a non-separable regularizer. The best-known regret even for online linear programming \citep{li2021online} contains an additional $\log\log T$ factor. Distinct from previous work, our non-linear and non-separable primal problem introduces greater challenges when analyzing dual behaviors. By comparing with existing algorithms, we clarify the critical role played by the adaptive resource update in controlling the stopping time and achieving a logarithmic-order regret. In particular, we establish a worst-case $O(T^{1/2})$ lower bound for dual-based algorithms if the resource constraints are not adaptively updated. Basically, without updating the resource constraints, dual-based algorithms suffer from early stopping. 

We then elaborate on the applications of our method and theory to online linear programming, online max-min fairness allocation and online load balancing, etc. Simulation results will also be presented.



\subsection{Related Work}
\subsubsection{Online Linear Allocation}
Many online problems with resource constraints can be formulated into online allocation problems. A large proportion of early work mainly focused on linear models. \cite{vazirani2005adwords,mehta2007adwords,buchbinder2007online} studied the AdWords problem, where a search engine tries to assign some keywords to a set of competing bidders, each with a spending limit (i.e., constraint), and the goal is to maximize the revenue generated by these keyword sales. The rewards in AdWords problem are proportional to consumed resources and, thus, is a special case of online linear allocation. By viewing AdWords as a generalization of online bipartite matching problem, \cite{mehta2007adwords} achieved an optimal $(1-e^{-1})$-competitive ratio, which is defined as the ratio of the revenue of an online algorithm to the revenue of the best offline algorithm. 

Apart from AdWords, two major topics related to online linear allocation are online revenue management problems and online multi-secretary problems. In online revenue management, a decision maker aims to find a dynamic pricing policy that maximizes a company's linear total rewards. Among all the strategies for solving  online revenue management problems, re-solving stands out for its excellent performance.
By combining the re-solving strategy and a trigger-and-threshold mechanism, \cite{reiman2008asymptotically} reduced the regret from previously studied $O(T^{1/2})$ \citep{cooper2002asymptotic} to $O(T^{1/4})$. Equipped with sufficiently frequent re-solving's, \cite{jasin2015performance} proposed to re-estimate the parametric distribution of arrivals and proved that an $O(\log^2 T)$ regret is attained.  \cite{jasin2012re,wu2015algorithms} and \cite{bumpensanti2020re} investigated the special case when the i.i.d.  arrivals obey a  discrete distribution with finite support and established $O(1)$ regrets for re-solving style algorithms when the resource constraints are constants.  Online multi-secretary problem \citep{kleinberg2005multiple,babaioff2007knapsack} is one of the simplest online allocation problems as it has only one integer constraint. Assuming the arrivals obey a {\it known} finite-support discrete distribution, \cite{arlotto2019uniformly} proposed an online budget-ratio (BR) policy where decisions to fulfill or ignore requests are made by comparing the remaining average budget with some fixed thresholds. Their BR policy is adaptive and achieved an $O(1)$ regret but is inapplicable in the case of multiple resource constraints. They also established a regret lower bound $\Omega(T^{1/2})$ for all non-adaptive policies. Conversely, if the arrival distribution is continuous, e.g. a simple uniform distribution over $[0,1]$, \cite{bray2019does} developed a regret lower bound $\Omega(\log T)$ even when the distribution is known to a decision maker. Recent advances in contextual linear optimization \citep{hu2022fast} have demonstrated that this lower bound can be surpassed when instances exhibit more favorable properties, such as finiteness of the hypothesis class and higher-order smoothness, which are not applicable to our problem.

Other independent works of online linear programming also contribute greatly to the understanding of online allocation problems. \cite{agrawal2014dynamic} proposed a dual-based algorithm that dynamically updates dual variables and periodically solves linear programs which achieved an $O(T^{1/2})$ regret under the random permutation model. When the arrivals satisfy the random input model, \cite{devanur2019near} proved that a dual-based algorithm that attained an $O(T^{1/2})$ regret. But their algorithm relies on the knowledge of the optimal allocation, which is unrealistic for most applications. Otherwise, their algorithm requires frequent resolving offline linear programming. More recently, \cite{li2021online} introduced a history-dependent algorithm that adaptively updates the resource constraints, which achieved a regret $O(\log T\log\log T)$ that is almost optimal except the $\log\log$ factor. But their strategy also requires exactly solving an offline linear program of growing sizes with high frequency, which may be computationally intractable for large $T$. An $\Omega(\log T)$ regret lower bound was established, which is consistent with \cite{bray2019does}. 

Another noteworthy recent development in linear allocation, when the item distribution is known, is the study of greedy policies. Research by \cite{kerimov2021optimality} has shown that greedy policies can achieve near-optimality in two-way dynamic matching, while \cite{gupta2022greedy} demonstrated that these policies can produce bounded regret for multi-way matching. However, both of these studies rely on a general position condition for the deterministic approximate linear program, which is not required in our analysis. Furthermore, these greedy policies necessitate knowledge of the optimal solution for the deterministic equivalent, which is infeasible in our scenarios.  Importantly, the aforementioned works focus only on limited distributions with discrete support, while our research extends this scope to encompass continuous distributions. In addition, \cite{balseiro2023survey} introduced a unified framework known as dynamic resource-constrained reward collection and summarized a group of local notions of smoothness and strong convexity, which can be viewed as general cases for our required assumptions. 

\subsubsection{Online Convex Allocation}

Linear objective functions only find limited applications in practice. Online convex allocation moves one step further by allowing convex objective functions. In \cite{agrawal2014fast}, the authors investigated online convex programming that is equipped with a fixed and convex reward function. The imposed stochastic constraints are soft, meaning that a certain degree of constraint violations is allowed. 
Recently, partly due to its computational efficiency, dual mirror descent has been extensively studied for online convex allocation problems. \cite{balseiro2022best,balseiro2020dual} focused on a class of online allocation problems with separable reward functions and resource constraints proportional to time horizon $T$. They proposed a dual-based mirror descent algorithm acting on dual space that achieves $O({T}^{{1}/{2}})$ regret, which was said to be unimprovable under their assumptions. 
Dual mirror descent presents a self-correcting mechanism, which naturally prevents resources from depleting too fast. 
The problem we study in this paper is closer to \cite{balseiro2021regularized}, which is the first to study online convex allocation problems with a non-separable regularizer and hard resource constraints. Their approach is similar to the non-regularized cases \citep{balseiro2022best,balseiro2020dual}, except they define a new separable dual problem and update dual variables using regularized subgradient descent. They showed that, for regularized online convex allocation, dual mirror descent can still perform well and attain an $O(T^{1/2})$ regret. While this regret is optimal for general convex reward functions, it is sub-optimal when the reward functions possess more favorable conditions like strong convexity. 

It is worth briefly mentioning the literature on general online convex optimization, which laid the early foundations of online convex allocation problems. For strongly convex objectives, classical literature on online convex optimization has revealed an optimal logarithmic regret. See \cite{zinkevich2003online,hazan2007logarithmic} and references therein. It is reasonable to expect a logarithmic-order regret for other online problems in the existence of strong convexity. Nevertheless, achieving a logarithmic-order regret is challenging if an additional non-separable regularizer is posed. In literature, regularized online convex programming is commonly solved by the {\it follow-the-regularized-leader} style algorithms \citep{mcmahan2011follow,mcmahan2017survey}.  
Our dual-based adaptive algorithm differs from the follow-the-regularized-leader algorithms as it exploits more historical information rather than just the  gradients and past actions, and it does not follow the leader. More introduction for general online convex optimization can be found in \cite{hazan2016introduction}.

\subsection{Notations}
Some notations will be used throughout the paper. Define $a \wedge b: =\min\{a,b\}$ and $a \vee b: =\max\{a,b\}$. Write $[n]$ as the shorthand of $\{1, \ldots, n\}$. Define the non-negative region $\mathbb{R}_{+}:=\left\{x\middle|x\ge 0 \right\}$. We will always use $i$ to denote dimensions and use $d_i$ for the $i$-th dimension of vector $d$, and for vector sequence $\left\{d_t\right\}_{t=1}^{T}$, i.e., $d_{it}$ stands for the $i$-th entry of vector $d_t$. Denote $(x)^{+}:= \max\left\{x,0 \right\} $, $\|\cdot\|_2$ and $\|\cdot\|_{\infty}$ for the vector $\ell_2$-norm and $\ell_{\infty}$-norm, respectively. We write $\Tilde{O}(\cdot)$ as the big-O notation omitting the logarithmic factor.



\section{Regularized Online Allocation Problem}

We describe the regularized online allocation problem with finite time period $T$ as follows:
\begin{equation}
\label{eq:main_prob}
\begin{aligned}
&\max_{\{x_t, t\in[T]\}} \quad &&\sum_{t=1}^T f_t(x_t) +T\cdot r(\frac{\sum_{t=1}^T b_t x_t}{T})\\
&\text{s.t.} \quad &&\sum_{t=1}^T b_t x_t \preceq dT, \ d\in \mathbb{R}_+^{m}\\
& \quad &&  x_t \in \mathcal{X} , \forall t\in [T].
\end{aligned}
\end{equation}
where $f_t:\mathbb{R}^n \to \mathbb{R}$ is the reward function which may be potentially non-concave, $r :\mathbb{R}^m \to \mathbb{R}$ is a concave regularizer to penalize the average resource consumption,  $\ b_t\in \mathbb{R}^{m\times n}_{}$ is the cost matrix and its entry could be both positive or negative (i.e., we can replenish the resource). We assume our inputs are \textit{stochastic}, meaning that the i.i.d. requests $\{(f_t,b_t)\}_{t=1}^T$ are sampled from an unknown distribution $\mathcal{P}$: $(f_t,b_t)\sim \mathcal{P}$. The decision region $\mathcal{X} \subseteq \mathbb{R}^n_{+}$ is closed and potentially non-convex with void action $0\in \mathcal{X}$ . 
 
 
 Following the online sequential learning setting, we assume that at each time $1\le t \le T$, we first receive a request with known reward function and cost $(f_t,b_t)$ and then make the decision $x_t$ based on the observation of $t$-th request and history $\mathcal{H}_{t-1}:=\left\{f_j,b_j,x_j \right\}_{j=1}^{t-1}$:
 \begin{equation*}
 \label{eq:make_dicision}
 x_t:=A(f_t,b_t,\mathcal{H}_{t-1}),
 \end{equation*}
by taking the total resource constraints $\sum_{j=1}^t b_j x_j \preceq dT $ into consideration. Here $A$ denotes a history-dependent algorithm. 
Our goal is to design such an online algorithm $A$ that can maximize the regularized total reward $\sum_{t=1}^T f_t(x_t) +T\cdot r(T^{-1}\cdot\sum_{t=1}^T b_t x_t)$.   Define the algorithm expected reward over a given distribution $\cP$ as

\begin{equation}
    \label{eq:alg_risk}
   R\left(A\middle| \mathcal{P}\right):=\E_{A,\mathcal{P}} \left[ \sum_{t=1}^{T} f_t(x_t) +T\cdot r(\frac{\sum_{t=1}^T b_t x_t}{T}) \right].
\end{equation}
Here we take expectation with respect to both the inputs and the algorithm $A$ if $A$ is a stochastic algorithm. To measure the performance of an online algorithm, we compare the algorithm reward with the expected offline optimum (or hindsight optimum) defined by
\begin{equation}
    \label{eq:max_risk}
    R^*(\mathcal{P}):=\E_{\mathcal{P}} \left[\max_{x_t\in \cX } \sum_{t=1}^{T} f_t(x_t) +T\cdot r(\frac{\sum_{t=1}^T b_t x_t}{T}), \ s.t. \ \sum_{t=1}^T b_t x_t \preceq dT\right], 
\end{equation}
which serves as the benchmark performance. For a given $\cP$, define the \textit{regret} as $\operatorname{Regret}\left(A\middle|\cP\right):= R^*(\mathcal{P})-R\left(A\middle| \mathcal{P}\right)$.
We then define the \textit{worst-case regret} of an algorithm $A$ as the worst difference between the expected online reward and offline optimum over all the possible distributions in a certain probability family $\Xi$:
\begin{equation}
\label{eq:regret}    
     \operatorname{Regret}(A):=\sup_{\cP\in \Xi} \left\{R^*(\mathcal{P})-R\left(A\middle| \mathcal{P}\right)\right\}, 
\end{equation}
where the distribution family $\Xi$ will be identified later.

Compared with unconstrained online optimization, the key obstacle to designing algorithms for the online allocation problem is to enforce the total resource constraints, which shall not be violated at any time. We can transform the primal problem into a dual one with fewer constraints by the duality theory. This motivates us to investigate the problem \eqref{eq:main_prob} from the dual perspective. 
\subsection{The dual problem}
We consider the dual problem of online allocation \eqref{eq:main_prob}. The Lagrangian of this problem is
\begin{equation}
\label{eq:lagrangian}
    L(x,a,\lambda,\mu):=\sum_{t=1}^T f_t(x_t) +T\cdot r(a)+\mu^{\top}(aT-\sum_{t=1}^Tb_{t}x_{t})+\lambda^{\top}(dT-\sum_{t=1}^Tb_{t}x_{t}).
\end{equation}
Here we introduce the equality constraint $a=(\sum_{t=1}^{T} b_t x_t)/T$ in order to separate $r(T^{-1}\cdot\sum_{i=1}^{T}b_t x_t )$ into additive terms.
Denote the domain of $r(a)$ as $\cZ$ with  $ b\circ\mathcal{X} := \textsf{span}\left\{ b \cdot x \middle| \text{ for all possible } b \text{ and } x \in  \mathcal{X}\right\} \subseteq \cZ$. Define the conjugate function 
\begin{equation}
\label{eq:conjugate}
  \begin{aligned}
    f^*_t(\lambda):= \max\limits_{x\in \cX}\left\{ f_t(x)-x^\top \lambda\right\}, \ \
    r^*(\mu):=  \max\limits_{a \in \cZ}\left\{ r(a)-a^\top \mu\right\}.
\end{aligned} 
\end{equation}

Then, the dual problem of \ref{eq:main_prob} can be written as an additive form:
\begin{equation}
\label{eq:dual_problem}
    \begin{aligned}
    &\min_{\mu,\lambda\succeq 0} \quad &&\bar D_T(\lambda,\mu,d):= \frac{1}{T}\sum_{t=1}^T f_t^{*}(b_t^{\top}(\mu+\lambda)) +r^{*}(-\mu)+d^{\top}\lambda
    \end{aligned}
\end{equation}

Note that, we use two dual variables to pose our problem with $\mu$ for the impact of variable separation and $\lambda$ for the constraint. In contrast to \cite{balseiro2021regularized}, two dual variables enable us to capture the influence of both variable separation and variation of constraints $d$, wherein the latter is crucial for our scheme to be optimal.

Under our stochastic input assumption, \eqref{eq:dual_problem} can be viewed as a sample average approximation (SAA) \citep{shapiro2009lectures} of the following stochastic program (or \textit{fluid} problem):
\begin{equation}
\label{eq:dual_expected}
    \begin{aligned}
    &\min_{\mu,\lambda\succeq 0} \quad &&D(\lambda,\mu,d):=\mathbb{E} f_t^{*}(b_t^{\top}(\mu+\lambda)) +r^{*}(-\mu)+d^{\top}\lambda
    \end{aligned}
\end{equation}

In the following discussion, we will sometimes write $\nu=\lambda+\mu$ and write the dual variable uniformly as $\boldsymbol{\lambda}:=[\nu^{\top}, \mu^{\top}]^{\top} $ in shorthand with substitution. 
If we have known the exact offline solution to \eqref{eq:dual_problem}, denoted by $\blambda_T^*:=[\nu_T ^{*\top}, \mu_T ^{*\top}]^{\top} $, then by choosing the corresponding primal variables we can optimize the primal problem \eqref{eq:main_prob}. However, in the online setting, it is impossible to find such an exact dual solution before time $T$. Thus at time $t$ we turn to solve the $t$-sample average approximation of $D(\lambda,\mu,d)$, i.e., 
\begin{equation}
\label{eq:t_sample_approximate}
        \begin{aligned}
    &\min_{\mu,\lambda\succeq 0} \quad && \bar{D}_t(\lambda,\mu,d):=\frac{1}{t} \sum^{t}_{j=1} f_j^{*}(b_j^{\top}(\mu+\lambda)) +r^{*}(-\mu)+d^{\top}\lambda
    \end{aligned}
\end{equation}
and then use the dual approximate solution $\blambda_t^{\ast}$ to decide the following several primal solutions. Such a re-solving idea has shown its merit in controlling regret both in theory and in practice \citep{jasin2015performance,ferreira2018online,li2021online}. Hence we expect that this idea also works in regularized online allocation problems. Nevertheless, to discuss how practical this re-solving idea is in our setting, we still have three crucial questions to answer: 
\begin{enumerate}
    \item What is the behavior of $\blambda_T^*$ for large $T$ ? While $\blambda_T^*$ is random, from the stochastic programming perspective, as $T$ goes large, the optimal solution to the SAA \eqref{eq:dual_problem}, $\blambda_T^*$, will converge in probability to the solution to its stochastic program \eqref{eq:dual_expected}, denoted by $\blambda^*$. If we want to establish the theory of dual-based algorithms that rely on the approximation of $\blambda_T^*$, we need to first explore the convergence behavior of $\blambda_T^*$ toward $\blambda^*$.
    \item How will the dual approximate solutions affect reward and, consequently, regret? This question is the key to the algorithm design. For online allocation problems, a good approximation of $\blambda^*$ or $\blambda_T^*$ does not necessarily mean a good reward because of the restriction imposed by resource depletion and stopping time. As we will show later, simply solving the convex programming \eqref{eq:t_sample_approximate} is not enough to achieve the optimal regret. We explain the influence of dual approximation on regret in two phases: before and after stopping time, and show that the adaptive strategy of updating constraints is necessary for optimal regret.
    \item How to control the regret as well as make the algorithm computationally efficient? Most of the re-solving techniques require periodically solving potentially large-scale convex programming, which is computationally demanding. Interestingly, we will show that a proper approximation of dual optimal solutions up to certain precisions can significantly reduce computational costs while maintaining the optimal order of regret. The influence of our approximation scheme on the regret is, in general, negligible when compared with the exact optimal solutions.     
\end{enumerate}

We propose the following assumptions that suffice our algorithm to achieve logarithmic regret.

\subsection{Assumptions}
\begin{assumption}[Boundedness assumptions on arrivals]\label{asm:basic}
 The arrival sequences $\{(f_t,b_t)\}$ satisfy:
\begin{enumerate}[label= 1.\arabic*,ref= 1.\arabic*]
    \item \label{asm:basic:1} $\{(f_t,b_t)\}_{t=1}^T$ are generated i.i.d. from an unknown distribution $\mathcal{P}$.
    \item $f_t$ is defined in the closed decision region $\cX\subseteq \mathbb{R}^n_{+}$ with $\norm{x}_{\infty}\le D$ for any $x\in \cX$.
    \item There exists $\bar{f}\in R_+$ such that $\forall x \in \cX$, $|f_t(x)|\le \bar{f}$.
    \item There exists $\bar{b}\in R_+$ such that $\norm{b_{t}}_2 \le \bar{b}$ for any $t$.
    \item \label{asm:basic:5} We assume there exists $\underline{d}>0$, and a large $\bar{d}>0$ such that for any $i\in[m]$, $d_i \in (\underline{d},\bar{d})$. Denote $\Omega_d = \bigotimes_{i=1}^n (\underline{d},\bar{d})$.
\end{enumerate}
\end{assumption}

The assumptions on the upper bound $\bar f$ and $\bar b$ are  common and practical in online allocation problems. It helps us control the size of the problem and ease our analysis. Here we assume that the average resource constraints $d$ is of a reasonable size, i.e., $d_i$ is neither too large nor too small. If $d_i$ is too large, then the constraint itself will be of no interest because the restriction it imposed on the primal variables is negligible. This assumption is the basis for the subsequent discussion of regret, especially for bounding the stopping time.

Under Assumption \ref{asm:basic}, one general feasible region of our regularizer $r(a)$ is $\cZ := \left\{a \middle| \norm{a}_{2}\le \sqrt{n}D\bar{b} \right\}$,  which satisfies $b\circ\mathcal{X} \subseteq \cZ$.  We then describe the necessary assumptions on the regularizer $r$.


\begin{assumption}[Assumptions on the regularizer]\label{asm:regularizer}
The concave regularizer $r$  satisfies: $r$ is concave, closed, and bounded in $\cZ$: $|r|\le \bar r $ with bounded (sub)gradient $\norm{\nabla r (a)}_{\infty}\le G$ for any $a\in\cZ$.
\end{assumption}

The feasible region $\cZ $ here can also be chosen in other shape as long as  $b\circ\mathcal{X} \subseteq \cZ$. Together with Assumption \ref{asm:basic}, we can show that both the population-version and sample-version optimal solutions, $\blambda^*$ and $\blambda_T^{\ast}$, respectively, are uniformly bounded.

\begin{lemma}\label{lemma:bounded_solution}
Under Assumption \ref{asm:basic}, \ref{asm:regularizer}, the optimal solutions to problem  \eqref{eq:dual_problem} and  \eqref{eq:dual_expected} are bounded by:
\begin{equation}
    \begin{gathered}
    \norm{\lambda^*_T}_{\infty} \le \frac{2(\bar{f}+\Bar{r})}{\underline{d}}, \norm{\lambda^*}_{\infty} \le \frac{2(\bar{f}+\Bar{r})}{\underline{d}} \\
    \norm{\mu^*_T}_{\infty} \le G, \norm{\mu^*}_{\infty} \le G
    \end{gathered}
\end{equation}
\end{lemma}

By Lemma \ref{lemma:bounded_solution}, we define the regions that contain all the possible optimal dual variables as $\Omega_\lambda:= \left\{\lambda \middle| \norm{\lambda}_{\infty} \le \frac{2(\bar{f}+\Bar{r})}{\underline{d}}\right\}$, and $\Omega_\mu:= \left\{\mu\middle| \ \norm{\mu}_{\infty} \le G\right\}$. These regions will be the feasible sets of our dual variables since we do not want them to move far from the optimal solution $\blambda^*$. 
Assumption \ref{asm:regularizer} can be easily achieved by many popular regularizers enumerated below. 
\begin{enumerate}
    \item \textbf{$\ell_{1}$-loss}: $r(a):=-\kappa\norm{a}_1$. This regularizer serves as a tool to achieve a sparse resource allocation. 
    \item \textbf{Max-min loss}: $r(a):=\kappa \min_i ({a_i}/{d_i}) $. The  max-min fairness regularizer allows us to maximize the minimum resource consumption. Resources under max-min fairness regularization tend to be distributed fairly so that all resources are utilized adequately. See, e.g., \cite{nash1950,bertsimas2011price,bertsekas2021data}. 
    \item \textbf{Negative max loss}: $r(a):=-\kappa\max_i ({a_i}/{d_i})$. This regularizer represents the load-balancing task: we minimize the maximum resource consumption so that all the resources are evenly distributed and no resource is over-exploited (or balanced load for every computer server in the load-balancing task). The load-balancing regularizer is widely used in, e.g., network design and cloud computing \citep{bejerano2004fairness,radunovic2007unified}. 
    \item \textbf{Entropy loss}: $r(a):=-\kappa\left[\sum_{i=1}^{m} (a_{i}+\delta )\log \left(a_{i}+\delta\right)+\left(1-m\delta-\sum_{i=1}^{m} a_{i}\right) \log \left(1-m\delta-\sum_{i=1}^{m} a_{i}\right)\right]$ with the corresponding feasible region: $\cZ:=\left\{a \in \mathbb{R}_{+}^{m} \mid \sum_{i=1}^{m} a_{i} \leq 1-m\delta\right\}$. We use this entropy loss when our problem is related to stochastic strategies and probabilistic assignment, e.g., randomly assigning impressions to advertisers type $i$ with selected probabilities $a_i+\delta$ in online advertising. Here $1-m\delta-\sum_{i=1}^{m} a_{i}$ means the probability of no-assigning, and $\delta$ is the threshold of minimum assigning. This entropy loss regularizer seeks to find online allocation strategies with high entropy, which may share appealing properties like diversity, fairness, or robustness \citep{agrawal2018proportional}.
    \item \textbf{No regularizer}: $r(a):=0$. In this case, our problem reduces to the non-regularized online convex allocation problem. Therefore, the theory developed in this paper is immediately applicable to non-regularized cases.
\end{enumerate}

In addition, to achieve optimality, we need the following regularity and non-degeneracy assumption of $f^*_t$:

\begin{assumption}[Regularity conditions on the dual problem] \label{asm:nondegeneracy}  We assume that our problem is locally second-order growth and well-conditioned: suppose $(\lambda^*,\mu^*)$ is the optimal solution to the problem \eqref{eq:dual_expected} when $d\in \Omega_d$. 
Define $\nabla{f^*_t}$ as any (sub)gradient of ${f^*_t}$.  Then for any $d \in \Omega_d$, 
\begin{enumerate}[label=3.\arabic*,ref= 3.\arabic*]
    
    \item (locally second order growth)\label{asm:nondegeneracy:1} Let $\nu := \lambda+\mu$ and $\nu^*:=\lambda^*+\mu^*$. The conjugate function $f^*_t$ is continuous and satisfies
\begin{equation*}
\begin{gathered}
\E\left[\langle   \nabla{f^*_t}(b_t^\top \nu)-\nabla{f^*_t}(b_t^\top \nu^*),  b_t^\top \nu-b_t^\top \nu^*\rangle \middle| b_t\right] \ge \underline{\cL}_f \norm{b_t^\top \nu-b_t^\top \nu^*}^2_2 
\end{gathered}   
\end{equation*}
for any $\lambda\in \Omega_{\lambda}$, $\mu\in \Omega_{\mu}$ and constants  $\underline{\cL}_f>0$, conditioned on $b_t$.
    \item (well-conditioned)\label{asm:nondegeneracy:2} The matrix $M := \E\left[ b_t b_t^\top \right]$ is positive definite with minimum eigenvalue $\sigma_{\min}>0$.
\end{enumerate}
\end{assumption}

Assumption \ref{asm:nondegeneracy:1} requires the expected conjugate of reward function to exhibit a local quadratic growth, conditioning on any given $b_t$. 
Assumption \ref{asm:nondegeneracy:1} controls the growth rate of dual function so that it will not degenerate to a line, which is necessary for characterizing dual solutions. 
Assumption \ref{asm:nondegeneracy:2} is easily satisfied since, oftentimes, the constraints are linearly independent. Note that $-\nabla f_t^{\ast}(b_t^{\top}\nu)=\mathop{\arg\max}_{x\in \cX}\{f_t(x)-(\nu)^\top b_t x\}$ represents the primal solution given dual variable $\nu$. Its randomness stems from the stochastic reward $f_t$. From this perspective, Assumption ~\ref{asm:nondegeneracy:1} only concerns the effect of dual variables on their corresponding {\it expected} primal solutions. However, when it comes to the smoothness, merely the perturbation behavior of expected primal solutions is not sufficient for our analysis, and we also need the perturbation behavior of the intrinsically {\it random} primal solutions, which can be controlled by moments. The following assumption serves this purpose. Equivalently, it depicts the variation behavior of the random award function $f_t$.  This moment assumption establishes the connection between dual variables and primal performances. Note that Assumption \ref{asm:nondegeneracy} only concerns the deterministic problem \eqref{eq:dual_expected}, but the empirical problem does not necessarily share these local properties or is even not differentiable. 

\begin{assumption}[Smoothness of moment]
\label{asm:k-thmoment}
Let $\nu := \lambda+\mu$ and $\nu^*:=\lambda^*+\mu^*$ when we choose $d\in\Omega_d$ in \eqref{eq:dual_expected}. For any random variable $V\in \mathbb{R}^m$ that satisfies $\norm{b_t^\top(V-\nu^*)}_2\le \norm{b_t^\top(\nu-\nu^*)}_2 $ a.s., the $1$-th order moment of the (sub)gradient $\nabla f_t^*$ satisfies the following smoothness 
$$\E_{}\left[ \norm{\nabla f_t^*(b_t^\top V)-\nabla f_t^*(b_t^\top\nu^*)}_2\middle| b_t\right]\le L_1 \norm{b_t^\top(\nu-\nu^*)}_2.
$$
for any $d\in \Omega_d$, $\lambda\in \Omega_{\lambda}$, $\mu\in \Omega_{\mu}$ and given $b_t$, where $L_1>0$ is a constant. 
\end{assumption}
  Assumption \ref{asm:k-thmoment} requires the variation of reward function given $b_t$: $f_t\sim\cP|b_t$ to be mild so that the primal solution changes smoothly.  This doesn't mean that $ f^*_t$ must be globally smooth. The expectation here is with respect to $f_t$, $\boldsymbol{\nu}$. A similar description of smoothness can be found in \cite{gorbunov2020unified}. 
Basically, Assumption \ref{asm:k-thmoment} claims that no matter how the reward $f_t$ varies, the difference of primal variables  can be bounded by the difference of dual variables in expectation. 
We note that Assumptions \ref{asm:regularizer}-\ref{asm:k-thmoment} assume the corresponding conditions hold for all the $d\in\Omega_d$.  

\begin{remark} 
Here we list several sufficient conditions, any of which can lead to both Assumptions \ref{asm:nondegeneracy} and \ref{asm:k-thmoment}:
\begin{enumerate}
    \item  $f_t$ is linear with reward $v_t\in \mathbb{R}^m$, i.e., $f_t=v_t^\top x_t$. and for each $i\in [m]$,  $v_{it}$ is with distribution $\abs{\PP(v_{it}>b_{it}^\top \nu|b_t )-\PP(v_{it}>b_{it}^\top \nu^*|b_t )} =\Theta (\abs{b_{it}^\top(\nu-\nu^*)} ) $. 
    \item $f_t$ is  drawn from a finite distribution wherein every possible $f_t$ is locally smooth and strongly concave.
    \item every $f_t$ is concave and continuous with first order growth gradient, and $\E f_t(x)$ admits a lower upward quadratic (LUQ) envelope in a local region near the optimal solution \citep{balseiro2021survey}.
\end{enumerate}
We will revisit the linear case in Section \ref{sec:application} for a detailed discussion. For more possible sufficient conditions, we refer the reader to \cite{kakade2009duality,bubeck2015convex,balseiro2021survey}, etc.
\end{remark}

Define the primal variable given $(\lambda^*,\mu^*)$ as $\tilde{x}_t(\nu^*):=-\nabla f^*_t(b_t^{\top}(\nu^*))$. In this sequel, all the dimensions that satisfy $ d_i-\E\left(b_t\tilde{x}_t(\nu^*)\right)_i=0$ with respect to the original $d$ in (\ref{eq:main_prob}) are referred to as {\it binding dimensions}. Denote $I_{\textsf{B}}= \left\{ i\middle| d_i-\E\left(b_t\tilde{x}_t(\nu^*)\right)_i=0\right\}$ the collection of binding dimensions. Similarly, {\it non-binding dimensions} are written as $I_{\textsf{NB}}= \left\{ i\middle| d_i-\E\left(b_t\tilde{x}_t(\nu^*)\right)_i>0\right\}$. Here for ease of notation, we omit the dependence of $I_{\textsf{B}}$ and $I_{\textsf{NB}}$ on the resource constraint $d$. 
\begin{assumption}[Non-degeneracy]\label{asm:nondegeneracy:3}
Let $(\lambda^{\ast}, \mu^{\ast})$ be the optimal dual solution given the original $d$ in (\ref{eq:main_prob}). Denote $d^*=\E\left(b_t\tilde{x}_t(\nu^*)\right)$. Then:

\begin{enumerate}[label=5.\arabic*,ref= 5.\arabic*]
    
    \item \label{asm:nondegeneracy:3-1} The optimal solution $(\lambda^*,\mu^*)$ satisfies $\lambda^*_i=0$ if and only if $i\in I_{\textsf{NB}}$, i.e., $d_i-d^*_i>0$. 
    \item \label{asm:nondegeneracy:3-2} Further, if $i\in I_{\textsf{NB}} $, then there exists an small constant $\delta_0$ such that the partial gradient $\abs{\nabla_i r(a)-\nabla_i r(d^*) }\le \Bar{\cL}_r\norm{a-d^*}_2$ for any $\norm{a-d^*}_2\le \delta_0$.
\end{enumerate}

\end{assumption}

 Assumption \ref{asm:nondegeneracy:3}  states the non-degeneracy condition for dual problems with nonlinear objectives and the regularizer, which is generalized from the non-degeneracy condition of linear programs  \citep{jasin2012re,jasin2015performance,wu2015algorithms,li2021online}. Assumption \ref{asm:nondegeneracy:3-1} imposes strong complementary slackness on the resource constraints $d\in \Omega_d$ uniformly. This suggests that when $d$ changes within a certain region of $\Omega_d$, the binding or non-binding dimensions of resource constraints for the optimal solution will not change. 
This brings convenience for analyzing adaptive algorithms with frequently updated constraints. 
 Assumption \ref{asm:nondegeneracy:3-1} ensures that binding and non-binding dimensions can be uniquely determined by the dual solution  $\blambda^*$. In our study, Assumption \ref{asm:nondegeneracy:3-1} is indispensable for the regret analysis because it allows the gap between the fluid benchmark and offline maximum to be well controlled.  Assumption \ref{asm:nondegeneracy:3-2} requires the dual optimal solution $\mu^*_i=-\nabla_i r(d^*)$ to be non-degenerate in the non-binding dimension $i\in I_{\textsf{NB}}$: it is unique and smooth with the change of resources in a tiny region near $d^*$. This can be achieved by the aforementioned regularizers as long as $d^*$ is in a good position, e.g., for the max-min loss, we only require  $d^*$ to have a unique minimum dimension.

\section{Dual Convergence}\label{sec:dual_conv}
  For all dual-based online algorithms, the finite-sample convergence rate of dual variables is of great value since it reveals the best performance dual-based algorithms can achieve compared to the deterministic optimum. Recall the optimal solution $\blambda_T^{\ast}$ to the sample average approximation (SSA) in eq. (\ref{eq:dual_problem}). The Law of Large Numbers dictates that $\blambda_{T}^{\ast}$ converges to $\blambda^{\ast}$ in probability as $T\to\infty$. While  the asymptotic behaviors of optimal solutions to SAA have been intensively studied in the literature \citep{kleywegt2002sample,shapiro2009lectures,kim2015guide}, they are not enough for us to develop the non-asymptotical dual convergence in the case of regularized online convex programming.  In this section, we establish the dual convergence bounds under Assumptions \ref{asm:basic}-\ref{asm:k-thmoment}, for the regularized online problem (\ref{eq:main_prob}). We mainly study the convergence of $\norm{\nu^*_T-\nu^*}_2$ since the primal solution is only related to $\nu$, not individually by $\lambda$ or $\mu$. All the theories can be easily extended to the joint convergence $\norm{\blambda^*_T-\blambda^*}_2$ when $r^*$ is also strongly convex. We emphasize that our assumptions hold uniformly for all $d'\in \Omega_d$. Consequently, the dual convergence performance we will derive in this section also holds for all $d'\in \Omega_d$. Here we provide two parallel approaches that can derive the optimal dual convergence rate. One is by localized Rademacher complexity and the other is by partition. Both feature a similar localization idea, which is critical to achieving fast rate $O(T^{-1})$.  
Define $D_t(\blambda,d):=f_t^{*}(b_t^{\top}\nu) +r^{*}(-\mu)+d^{\top}(\nu-\mu)$, and the corresponding (sub)gradient 
$$
\phi_t(\blambda,d):=\nabla D_t(\blambda,d)=\left[\begin{array}{c} b_t\nabla f_t^{*}(b_t^{\top}\nu)+d\\
-\nabla r^{*}(-\mu)-d\end{array}\right].
$$

Then we have $\nabla D(\blambda,d)=\E \phi_t(\blambda, d)$.  
Denote $\Bar{\phi}_T(\blambda,d):=T^{-1}\sum_{t=1}^{T}\phi_t(\blambda,d)$ and the partial derivative w.r.t $\nu$ as $\Bar{\phi}_{T,\nu}(\nu,d)=T^{-1}\sum_{t=1}^{T}b_t\nabla f_t^{*}(b_t^{\top}\nu)+d$ . Both of two approaches focus on the partial second order term of $\Bar{D}_T(\blambda,d)$:
\begin{equation}\label{eq:decomp}
\begin{aligned}
       \Bar{s}_T(\nu,d) & := \Bar{D}_T(\nu, \mu^*,d)-\Bar{D}_T(\nu^*, \mu^*,d) - \underbrace{\left\langle\Bar{\phi}_{T,\nu}(\nu^*,d), \nu -\nu^*\right\rangle}_{\text{first order term}} \\
\end{aligned}
\end{equation}
We may write $\bar{s}_T(\nu)$ for simplicity and $s_t(\nu)$ is defined in a similar spirit. 

\subsection{Localized Rademacher complexity}
We adopt the idea of localization in local Rademacher complexity \citep{bartlett2005local,koltchinskii2006local} to derive a tight probability bound of $\norm{\nu^*_T-\nu^*}$. Assume $\nu^*_T$ is part of the variable that minimizes \eqref{eq:dual_problem}, and suppose $\norm{\nu^*_T-\nu^*}\ge \varepsilon$. Since the expected second order term $s(\nu):= \E \Bar{s}_T(\nu) $ shares a second-order growth property by Assumption \ref{asm:nondegeneracy}, then by \eqref{eq:decomp} and convexity of $\Bar{D}_T(\blambda,d)$, there exists a $\blambda=[\nu^\top,\mu^\top]^\top$ where $\nu\in \mathbb{B}(\nu^*, \varepsilon)$ such that
\begin{equation}\label{eq:sT-ub}
    \begin{aligned}
 \Bar{s}_T(\nu)- s(\nu) & \le 
\Bar{D}_T(\blambda,d)-\Bar{D}_T(\blambda^*,d) - \left\langle\Bar{\phi}_T(\blambda^*,d), \blambda-\blambda^*\right\rangle - s(\nu)+ \left\langle \nabla D(\blambda^*,d), \blambda-\blambda^*\right\rangle  \\
& \le -\frac{ \sigma_{\min} \underline{\cL}_f }{2}\varepsilon^2+  \norm{\nabla_{\nu} D(\blambda^*,d)-\Bar{\phi}_{T,\nu}(\nu^*,d)}\varepsilon,
\end{aligned}
\end{equation}
where we use convexity and the optimality of $\blambda^*$ for the first inequality and the optimality of $\blambda^*_T$ for the second one.
By concentration, it is clear that, for any $\varepsilon>0$,  the gradient $\Bar{\phi}_{T,\nu}(\nu^*,d)$ concentrates to $\nabla_{\nu} D(\blambda^*,d)$ with error upper bounded by $\frac{\sigma_{\min}\underline{\cL}_f }{4}\varepsilon$  with high probability. We can then ensure the empirical process: 
$$\sup_{\nu\in \mathbb{B}(\nu^*, \varepsilon)}\abs{\Bar{s}_T(\nu)- s(\nu)}\ge \frac{\sigma_{\min}\underline{\cL}_f }{4}\varepsilon^2 $$
with high probability. Define localized Rademacher complexity of $\Bar{s}_T$ within a small neighbourhood of $\nu^*$ as $\mathcal{R}_\varepsilon = \E_{\cP} \E_\sigma \left[ \sup_{\nu\in \mathbb{B}(\nu^*,\varepsilon) } \frac{1}{T}\sum_{t=1}^{T} \sigma_t s_t(\nu)  \right]$, where $\sigma_t$ are independent Rademacher random variables. By the convergence theory of empirical process \citep{boucheron2005theory,koltchinskii2011oracle}, we have the follow proposition.
\begin{Proposition}\label{prop:rademacher-prob}
Under Assumption \ref{asm:basic}-\ref{asm:k-thmoment}, the following inequality holds
\begin{equation*}
     \mathcal{R}_\varepsilon \le \sqrt{2m \log(3K) }\frac{2\sqrt{n}\Bar{b}D\varepsilon}{\sqrt{T}} +\frac{{L_1} \varepsilon^2 }{K},
\end{equation*}
for any constant $K>0$. Consequently, if $\varepsilon \ge \frac{64\sqrt{2}\sqrt{n}\Bar{b}D }{\sqrt{T} \sigma_{\min}\underline{\cL}_{f} } \sqrt{\log \frac{100{M} }{\sigma_{\min}\underline{\cL}_{f}} }$, we have the following probabilistic bound:
\begin{equation*}
    \PP\left( \norm{ \nu^*_T-\nu^*}\ge \varepsilon \right)\le m\exp\left({-\frac{T\sigma_{\min}^2\underline{\cL}_{f}^2\varepsilon^2  }{8mn\Bar{b}^2D^2}}\right)+ \exp\left({-\frac{T\sigma_{\min}^2\underline{\cL}_{f}^2\varepsilon^2  }{5000 n\Bar{b}^2D^2}}\right)
\end{equation*} 

\end{Proposition}

\subsection{Partition}
Apart from localized Rademacher complexity, we can also control the dual convergence by providing a uniform lower bound of $\Bar{s}_T$ within a small neighborhood of $\nu^*$ by partition. This argument will show that, with high probability, the empirical second order term $\Bar{s}_T(\nu,d)$ is lower bounded by a quadratic function \citep{li2021online}. 
The rationale is obvious. 
Since $\bar s_T(\nu,d)$ is always convex and converges to a deterministic convex function, the shape of $\bar s_T(\nu,d)$ in a small neighborhood of $\nu^*$ will be very close to $ s(\nu,d)$ as long as $T$ is large enough. Moreover, for a convex function, the local behavior near the deterministic optimal solution is enough to guarantee the global
properties of empirical optimal solutions. Consequently, its global optimal solution $\nu^*_T$ will lie in a small neighbourhood of  $\nu^*$. This delicate analysis also adopts a localization technique, which enables us to reach an optimal result sharper than \cite{li2021online}.


To investigate the second-order term, we focus on a small neighborhood of $\nu^*$. For a constant $H>0$ (to be clarified soon), define $\Omega_{\nu}(\varepsilon):=\left\{\nu \middle| \norm{\nu-\nu^*}_{\infty}\le 4H\varepsilon\right\}$. Actually, it suffices to control the second order term for all dual variables in $\Omega_{\nu}(\varepsilon)$ since we shall show that $\nu_T^{\ast}$ belongs to $\Omega_{\nu}(\varepsilon)$ with a high probability. 
In order to control the shape of second order term in $\Omega_{\nu}(\varepsilon)$, we systematically split the region $\Omega_{\nu}(\varepsilon)$ and derive a uniform concentration bound. 
This enables us to successfully eliminate the O($\log \log T$) factor and achieve a sharper dual convergence bound. 

\begin{Proposition}\label{prop:dual_conv_prob}
Under Assumptions \ref{asm:basic}-\ref{asm:k-thmoment}, we define $H=10\sqrt{n m\log m}\Bar{b}D/(\sigma_{\min} \underline{\cL}_f)$. Then, given any $\varepsilon>0$, the second order term $\Bar{s}_T$ satisfies that for $\forall \nu\in \Omega_{\nu}(\varepsilon)$ and $\norm{\nu-\nu^*}_2> 2H\varepsilon$ , there exists a corresponding $\nu'\in\Omega_{\nu}(\varepsilon)$ such that $\norm{{\nu}'-\nu^*}_2\ge \norm{\nu-\nu^*}_2$ and 
\begin{equation*}
    \begin{aligned}
         \Bar{s}_T(\nu,d)\ge \frac{\sigma_{\min} \underline{\cL}_f}{4} \norm{ \nu'-\nu^*}_2^2 -\frac{2}{5}\sigma_{\min} \underline{\cL}_f H\varepsilon \norm{ \nu'-\nu^*}_2
    \end{aligned}
\end{equation*}
with probability at least $1-2\exp({-\frac{m(T\varepsilon^2-1)\log m }{4}})$. Consequently, under the event that this inequality holds, if there exists a $\blambda^*_T$ that minimizes $\Bar{D}_T$, then it follows that $\norm{\nu^*_T-\nu^* }\le 2H\varepsilon$ with probability at least $1- 2m\exp (-\frac{T\sqrt{n}\Bar{b}D\varepsilon^2 \log m}{2 })$.

 \end{Proposition}

\subsection{Convergence result}
 Equipped with Proposition \ref{prop:rademacher-prob} or \ref{prop:dual_conv_prob}, we can derive the following $O(T^{-1})$ bound for dual convergence. 
\endproof
\begin{theorem}[Dual convergence]\label{thm:dual_conv}
Under Assumptions \ref{asm:basic}-\ref{asm:k-thmoment}, there exists an absolute constant $C_1>0$ such that the empirical dual optimal solution satisfies 
\begin{equation}
\begin{aligned}
    \E \norm{\nu^*_{T}-\nu^*}^2_2 \le & C_1 \frac{\Bar{b}^2 D^2}{\sigma_{\min}^2 \underline{\cL}_f^2} \frac{nm\log m }{T}.
\end{aligned}
\end{equation}
\end{theorem}

\begin{remark}
Our dual convergence bound is sharper than that in \cite{li2021online}. 
Under our assumption, the $O(T^{-1})$ rate is unimprovable, as there always exists a distribution $\mathcal{P} \in \Xi$ that incurs an $\Omega(T^{-1})$ dual convergence rate. For further details, please refer to Appendix \ref{proof:dual_conv}. The bound is of order $\widetilde{O}(mn)$ with respect to dimension $n$ and constraint number $m$. This can be obtained by applying either Proposition \ref{prop:rademacher-prob} or Proposition \ref{prop:dual_conv_prob}. However, these two approaches offer different advantages for individuals with various purposes. Proposition \ref{prop:rademacher-prob} can be easily extended to other related classical statistical problems, such as ERM or the convergence of M-estimator, which are of concern in general statistics and machine learning communities. On the other hand, the result in Proposition \ref{prop:dual_conv_prob} is stronger for optimization and can be adapted to study the behavior of other optimization methods, such as the convergence $\epsilon$-optimal solution, an important result we will discuss later.
\end{remark}

Note that our Proposition \ref{prop:dual_conv_prob} holds uniformly for all $d'\in\Omega_d$. Denote the optimal solutions to problem (\ref{eq:dual_problem}) and (\ref{eq:dual_expected}), given a certain $d'$, by $\nu_{T}^*(d')$ and $\nu^*(d')$, respectively. Then, we actually have 
\begin{equation}\label{eq:uniform_dual_conv}
    \begin{aligned}
         \E \sup_{d'\in\Omega_d} \norm{\nu^*_{T}(d')-\nu^*(d')}^2_2\le C_1 \frac{\Bar{b}^2 D^2}{\sigma_{\min}^2 \underline{\cL}_f^2} \frac{nm\log m }{T}
    \end{aligned}
\end{equation}
Bound (\ref{eq:uniform_dual_conv}) plays a critical role in our regret analysis since the re-solving strategy of our adaptive algorithm framework needs to update the resource constraints.

We then discuss $\epsilon$-optimal solutions of dual problem \eqref{eq:dual_problem}. Our following finite-sample convergence result of $\epsilon$-optimal solution can be viewed as a non-parametric version of SAA convergence developed by large deviation theory \citep{ruszczynski2003stochastic}. Notably, we only make assumptions on the deterministic problem $D(\blambda,d)$ and the smoothness of moment, and our result does not rely on restricted tail conditions such as the moment generating function in \cite{ruszczynski2003stochastic,shapiro2009lectures}. Therefore our result allows more flexible distributions.  

\begin{theorem}[Convergence of dual approximate solution]\label{thm:dual_conv_approx}
Under Assumptions \ref{asm:basic}-\ref{asm:k-thmoment}, suppose $\blambda_T^{\epsilon}$ is an  $\epsilon$-optimal solution that satisfies $\bar{D}_T(\blambda_T^{\epsilon},d)-\bar{D}_T(\blambda_T^{*},d)\le \epsilon$. Then we have the following convergence of $\epsilon$-optimal solution: 
\begin{equation*}
    \begin{aligned}
         \E\norm{\nu_T^{\epsilon}-\nu^{*}}_2^2\le C_1 \frac{\Bar{b}^2 D^2}{\sigma_{\min}^2 \underline{\cL}_f^2} \frac{nm\log m }{T}+ \frac{8\epsilon}{\sigma_{\min}\underline{\cL}_f }
    \end{aligned}
\end{equation*}
\end{theorem}

Theorem \ref{thm:dual_conv_approx} explains how the approximation of dual solutions affects dual convergence. The accuracy remains valid as we directly optimize the deterministic dual function $D(\blambda,d)$. Moreover, this theorem reveals that even if the empirical dual function $\Bar{D}_T(\blambda,d)$ is not strongly convex or smooth, the dual convergence of approximate solution also holds as long as we choose an appropriate accuracy. We can further show that this property is preserved with a slightly different accuracy if we run stochastic optimization algorithms on $\Bar{D}_T$. We describe the convergence of stochastic approximate solution in the following corollary:

\begin{corollary}[Convergence of stochastic dual approximate solution]\label{cor:stocha_dual_conv}
Under Assumptions \ref{asm:basic}-\ref{asm:nondegeneracy}, suppose $\blambda_T^{\epsilon}$ is a stochastic  $\epsilon$-optimal solution generated by stochastic optimization algorithm $\cB$ that satisfies $$\E_{\cB}\left[\bar{D}_T(\blambda_T^{\epsilon},d)-\bar{D}_T(\blambda_T^{*},d)\middle| \Bar{D}_T\right]\le \epsilon$$ for any given $\Bar{D}_T$. Then we have the following convergence of the stochastic $\epsilon$-optimal solution: 
\begin{equation*}
    \begin{aligned}
         \E\norm{\nu_T^{\epsilon}-\nu^{*}}_2^2\le C_2 \frac{\Bar{b}^2 D^2}{\sigma_{\min}^2 \underline{\cL}_f^2} \frac{nm\log m }{T}+ C_3 \epsilon^{\frac{2}{3}}\left(m\left(2\frac{\Bar{f}+\Bar{r}}{\underline{d}} +G\right)\right)^{\frac{1}{3}}/ (\sigma_{\min}\underline{\cL}_f )^{\frac{2}{3}} ,
    \end{aligned}
\end{equation*}
where the expectation is taken with respect to $\cB$ and $\cP$, and $C_2$, $C_3$ are absolute constants.
\end{corollary}

Corollary \ref{cor:stocha_dual_conv} points out that the impact of stochastic optimization on the dual convergence is limited, and the order of dual convergence can still be controlled by $\epsilon$. Compared to Theorem \ref{thm:dual_conv_approx},  the smaller order $\epsilon^{\frac{2}{3}}$ could be viewed as the accuracy loss because of randomness. Even if we do not assume $\bar D_T$ to be strongly convex, the difference between stochastic solutions and the deterministic one $\E\norm{\nu_T^{\epsilon}-\nu_T^{*}}_2^2$ is still under control just as we optimize a strongly convex function. This inspires us to apply the stochastic approximate solutions to the re-solving heuristic because, in many contexts, the benefits of stochastic algorithms greatly outweigh the lower order of convergence $\epsilon^{\frac{2}{3}}$. Theorem \ref{thm:dual_conv_approx} and Corollary \ref{cor:stocha_dual_conv} are all based on Proposition \ref{prop:dual_conv_prob}. With the theory of dual convergence, we are ready to describe our dual-based algorithm framework for online allocation.

\section{Algorithm Framework}\label{sec:algorithm}
Our algorithm extends the linear adaptive re-solving strategy in \cite{li2021online} to non-linear and non-separable objective functions. The key idea is similar to the frequent re-solving strategy in network revenue management (e.g., \cite{jasin2012re,bumpensanti2020re}) in spirit. We keep re-solving dual problems with updated average remaining capacity inspired by the budget-ratio policy \citep{arlotto2019uniformly}. Compared to the re-solving strategy in network revenue management, we also need to keep updating the constraints and re-solving the associated optimization programs. But the difference is that our strategy is dual-based, and the size of our optimization problems grows with time. Fortunately, the optimization in our algorithm can be easier as we only need approximate solutions. The resource control in our algorithm is handled more carefully when compared with the simple dual mirror descent. We show that, non-adaptive policies are too greedy and can't wisely keep the remaining budget balanced in the long run. 

Our dual-based online allocation algorithm is in line with other dual-based online algorithms in spirit: we keep maintaining a dual variable $\blambda_{t}$ by re-solving and whenever a request comes, we instantly give a response based on the dual variable and the request just received. We choose the primal action $x_t$, and $a_t$ given the dual variables by: 
\begin{align*}
    \tilde{x}_t(\nu) &:= \mathop{\arg\max}_{x\in \cX}\{f_t(x)-(\lambda+\mu)^\top b_t x\} = -\nabla f^*_t(b_t^\top(\lambda+\mu)), \\
    \tilde{a}(\mu)&:=\underset{a\in\mathcal{Z}}{\arg\max}\ \big\{r(a)+\mu^{\top}a \big\}=-\nabla r^*(-\mu_{})
\end{align*}
Note that $\nu$ consists of both $\lambda$ and $\mu$ with different feasible sets, and can not be directly yielded by optimization with respect to a single variable. The primal solution $a$ may not explicitly affect our action $x_t$, but it is helpful for our theoretical analysis of dual-based policies and for  algorithm implementation.

We outline our dual-based and history-dependent algorithm framework in Algorithm~\ref{alg:framework}. The algorithm updates  dual variables by solving SAA problems as shown in equation \eqref{eq:adaptive_SAA}. Each $\blambda_t$ is a $\epsilon_t $-optimal solution of the $t$-sample SAA with adaptive resources constraints $d_t$. We emphasize that two ingredients in our algorithm framework are crucial to guarantee an $O(\log T)$ regret: 
(1) the adaptive update of resource constraints $d_t$;  (2) the careful choice of accuracy $\epsilon_j$ for approximate dual solutions. Without the adaptive update of $d_t$, the worst-case regret will never be optimal for some extreme cases (see Section \ref{sec:regret} for more discussion). The dual solution accuracy can be set as either increasing $\epsilon_t=\Theta( t^{-1})$  or decreasing $\epsilon_t=\Theta((T-t)^{-1})$  (or $\epsilon_t=\Theta(t^{-3/2})$, $\epsilon_t=\Theta((T-t)^{-3/2})$  for stochastic optimization algorithms). Approximate solutions help significantly alleviate the total computational cost. 
Our algorithm is history-dependent, meaning that we exploit all the information we have collected up to time $t$. This is the essence of our adaptive strategy. This history-dependent policy makes our algorithm learn more efficiently than other dual-based algorithms that do not learn from history \citep{devanur2019near,balseiro2022best}, at the cost of acceptable extra computation. As is common in the literature on dual-based online algorithms,  we assume that the conjugate $f_t^{\ast}$ and corresponding primal variable $\tilde x_t$ are easily attainable. 

\begin{algorithm}
\caption{History-based resolving algorithm framework}
\label{alg:framework}
\begin{algorithmic}
\REQUIRE regularizer $r$, iteration number $T$, start point $(\lambda_0,\mu_0)$, and initial resource $B_0:=dT$.
\FORALL{$t = 1,\dots,T$}
    \STATE{Receive $(f_t,b_t)\sim \cP$.  }
    \STATE{Calculate $\tilde{x}_{t} :=\tilde{x}_t(\nu_{t-1}):= \mathop{\arg\max}_{x\in \cX}\{f_t(x)-(\lambda_{t-1}+\mu_{t-1})^\top b_t x\} = -\nabla f^*_t(b_t^\top(\lambda_{t-1}+\mu_{t-1}))$.
}
    	\STATE{Select $ x_t:=\begin{cases} \tilde{x}_t&\text{ if }B_{t-1}\ge b_t x_t  \\ 0 &\text{ otherwise }  \end{cases}$ }
	\STATE{Update remaining resources: $B_t:= B_{t-1}-b_t x_t$ and average remaining resources: $d_t:= \frac{B_t}{T-t}$}
    	\STATE{Update dual variable $(\lambda_t,\mu_t)$ via solving the following dual problem by any approximation algorithm $\cB_t$ with accuracy $\epsilon_t$: 
	\begin{equation}\label{eq:adaptive_SAA}
	    \min_{(\lambda,\mu)\in \Omega_{\lambda}\times\Omega_{\mu} } \left\{ \Bar{D}_{t} (\lambda,\mu,d_t):=\frac{1}{t} \sum^{t}_{l=1} f_l^{*}(b_l^{\top}(\mu+\lambda)) +r^{*}(-\mu)+d_t^{\top}\lambda \right\}
	\end{equation}  }
\ENDFOR
\end{algorithmic}
\end{algorithm}

Our algorithm framework is free of optimizer, that is, we can select any optimizer to get the $\epsilon_t$-optimal solution to dual program \eqref{eq:adaptive_SAA}. Since the dual problem $\Bar{D}_t(\blambda,d_t)$ is generally convex with respect to $(\lambda,\nu)$, one favorable choice is stochastic gradient descent that is first order (recall that we assume the gradient of the dual problem, i.e., the primal variable, is easily attainable) and the computational complexity can be free of size $t$. This makes it possible to deal with large-scale dual optimization when the total running time $T$ is large.

More specifically, if the dual optimizer is selected as stochastic gradient descent where the accuracy is specified by $\epsilon_t:=ct^{-3/2}$, we end up with the following Algorithm \ref{alg:SGD} by our algorithm framework. Basically, it requires computing $O(t^3)$ stochastic gradients at time $t$.  Moreover, if the dual problem $\Bar{D}_t$ is further strongly convex or smooth, we can reduce the computational cost to   $O(t)$ for each resolving. See Section \ref{sec:application:str_cvx} for more discussions on the case of strongly convex objectives. In Section \ref{sec:regret}, we demonstrate that any optimization algorithm $\mathcal{B}_t$ that achieves the rate of dual convergence $\E\norm{\nu_t-\nu^{*}(d_t)}_2^2=O(t^{-1})$ or $O\big((T-t)^{-1}\big)$ suffices to guarantee the optimal logarithmic regret in the end.

\begin{algorithm}
\caption{Resolving with Stochastic Gradient Descent}
\label{alg:SGD}
\begin{algorithmic}
\REQUIRE regularizer $r$, iteration number $T$, start point $\bmu_0$, where $\boldsymbol{\mu}=[\lambda^\top,\mu^\top]^\top $ and initial resource $B_0:=dT$.
\FORALL{$t = 1,\dots,T$}
    \STATE{Receive $(f_t,b_t)\sim \cP$.  }
    \STATE{Calculate 
    $\tilde{x}_t :=\tilde{x}_t(\nu_{t-1}):= \mathop{\arg\max}_{x\in \cX}\{f_t(x)-(\lambda_{t-1}+\mu_{t-1})^\top b_t x\} = -\nabla f^*_t(b_t^\top(\lambda_{t-1}+\mu_{t-1}))$.}
    	\STATE{Select $ x_t:=\begin{cases} \tilde{x}_t&\text{ if }B_{t-1}\ge b_t x_t  \\ 0 &\text{ otherwise }  \end{cases}$ }
	\STATE{Update remaining resources: $B_t:= B_{t-1}-b_t x_t$  and average remaining resources: $d_t:= \frac{B_t}{T-t}$}
	\STATE{Set  $R:=\sqrt{m\left(2\frac{\Bar{f}+\Bar{r}}{\underline{d}} +G\right) }$, $L:=\sqrt{m\Bar{d}^2+2n\Bar{b}^2D^2+nG^2}$,  $K:=t^3$, and $\eta_t:=\frac{\sqrt{2}R }{L \sqrt{K}}$. Define $\bmu_t^0:=\bmu_{t-1}$
	  }	
    \FORALL{$k = 1,\dots,K$}
    \STATE{Randomly pick $\zeta$ from $[t]:=\left\{1,\dots,t\right\}$ with uniform distribution and calculate the gradient 
    	\begin{equation}
    	\label{eq:dual_gradient}
    	    \nabla D_\zeta (\bmu_{t}^{k-1}) := \left[\begin{array}{c} -b_\zeta \tilde{x}_{\zeta}(\nu_{t}^{k-1}) +d_t\\ -b_\zeta \tilde{x}_{\zeta}(\nu_{t}^{k-1}) + \tilde a(\mu_{t}^{k-1}) \end{array}\right]
    	\end{equation} 
    	}
	\STATE{Update dual variable via stochastic gradient descent: 
	\begin{equation}
	\label{eq:dual_update}
	    \bmu_t^{k}:= \argmin_{\bmu\in \Omega_{\lambda}^+\times\Omega_{\mu} } \left\{ \langle\bmu,\nabla D_\zeta (\bmu_{t}^{k-1})\rangle +\frac{1}{2\eta_t}\norm{\bmu-\bmu_{t}^{k-1}}^2_2 \right\} 
	\end{equation}
	}	
    \ENDFOR
    \STATE{Update dual variable by averaging: $\bmu_t:=\frac{\sum_{k=1}^{K} \bmu_t^k }{K}$}
\ENDFOR
\end{algorithmic}
\end{algorithm}

\section{Regret Analysis}
\label{sec:regret}

\subsection{Regret upper bound}
In this section, we apply dual convergence established in Section \ref{sec:dual_conv} to derive an upper bound of regret. The result is valid for our algorithm framework Algorithm \ref{alg:framework} with any dual optimizers. Without loss of generality, we focus on stochastic optimizers $\mathcal{B}_t$, which are independent of future arrivals $\left\{(f_j,b_j)\right\}_{j\ge t+1}$. As long as $\mathcal{B}_t$ delivers reasonably accurate dual solutions $\blambda_t$, based on past history $\mathcal{H}_{t-1}=\left\{f_j,b_j,x_j \right\}_{j=1}^{t-1}$, new arrival $(f_t, b_t)$ and updated constraint $d_t\in\Omega_d$, our adaptive framework Algorithm~\ref{alg:framework} achieves a logarithmic-order regret. Precisely, the accuracy of dual solutions shall satisfy the following condition.

\begin{Condition}(Accuracy of dual solutions)
\label{condition:conv}. 
Suppose the updated constraints $\left\{d_j\middle| 1\le j \le t\right\}\subseteq \Omega_{d}$. We say the algorithm $\{\mathcal{B}_j\}_{j\ge 1}$ satisfies dual convergence condition \ref{condition:conv} if 
\begin{equation}
\label{log-convergence}
\\
\E_{\mathcal{B},\mathcal{P}} \norm{\nu_{t}-\nu^*(d_t)}^2 \le C_4 \frac{1}{t+1}, \text{  or   } \ \  \E_{\mathcal{B},\mathcal{P}} \norm{\nu_{t}-\nu^*(d_t)}^2 \le C_4(\frac{1}{t+1} + \frac{1}{T-t})
\end{equation}
for some constant $C_4=\tilde{O}(nm )$. The expectation is taken with respect to all the $\{\mathcal{B}_j\}_{j\ge 1}$ and $\cP$.
\end{Condition}

Recall that the dual convergence established in Section \ref{sec:dual_conv} holds uniformly for any $d\in\Omega_{d}$. Therefore, any dual optimizers ensuring corresponding dual solution error $\epsilon_t=\Theta(t^{-1})$ or $\epsilon_t=\Theta\big((T-t)^{-1}\big)$ ( $\epsilon_t=\Theta(t^{-3/2})$ or $\epsilon_t=\Theta((T-t)^{-3/2}$ for stochastic dual optimizers) satisfy Condition \ref{condition:conv}. If Condition~\ref{condition:conv} holds, our adaptive framework Algorithm \ref{alg:framework} achieves the following optimal regret.

\begin{theorem}[Regret upper bound]
\label{thm:regret_ub}
Under Assumptions \ref{asm:basic}-\ref{asm:k-thmoment}, if the algorithm $\{\mathcal{B}_t\}_{t\ge 1}$ we choose satisfies Condition \ref{condition:conv}, then the regret of Algorithm \ref{alg:framework} has the following upper bound:
\begin{equation*}
    \text{Regret}(A)\leq \mathring{C}\cdot \log T
\end{equation*}
for some constant $\mathring{C}=O(m^2n^2\log m)$ depending on the values in Assumptions \ref{asm:basic}-\ref{asm:nondegeneracy:3}. 
\end{theorem}

Clearly, exact solutions to the SAA program \eqref{eq:adaptive_SAA} is a theoretically valid candidate for $\{\mathcal{B}_t\}_{t\geq 1}$, which is actually  the classic idea of re-solving heuristic. However, the computational cost can be high if we want to find an exact solution. Fortunately, by Theorem \ref{thm:regret_ub}, it suffices to approximately solve SAA program \eqref{eq:adaptive_SAA} as long as the accuracy meets conditions (\ref{log-convergence}). 
We shall show in Section~\ref{sec:regret:wth_dt} that the rate $O(\log T)$ is optimal for the number of periods $T$. The regret upper bound depends quadratically on the dimension of constraints $m$ and the dimension of action space $n$.

We now briefly sketch the proof of Theorem \ref{thm:regret_ub}. The proof begins with the decomposition of regret, which shows that regret can be controlled by the cumulative error of dual solutions $\nu_t-\nu^*(d_t)$ and by $\E[T-\tau]$ for some stopping time $\tau$.
Recall, given a certain distribution $\cP$, the definition of regret:
$$
\operatorname{Regret}\left(A\middle| \cP\right)=R^*(\mathcal{P})-R\left(A\middle| \mathcal{P}\right), 
$$
where $R^*(\mathcal{P})$ and $R\left(A\middle| \mathcal{P}\right)$ are defined in \eqref{eq:max_risk} and \eqref{eq:alg_risk}, respectively. To upper bound the regret, we need an upper bound of offline maximum reward. To that end, we define
$$
g (\nu):= \E \big[ f_t (\tilde x_t(\nu )) + r(\tilde a (\mu^*)) + (\tilde a(\mu^*)-b_t \tilde x_t({\nu}))^{\top}\mu^* +(d-b_t \tilde x_t(\nu))^{\top}\lambda^*\big].
$$
Here $g(\nu)$ serves as an upper bound for $R^*(\mathcal{P})$, characterized by the following lemma.

\begin{lemma}\label{lemma:offline_ub}
The offline maximum reward $R^*(\mathcal{P})$ satisfies
$
R^*(\mathcal{P}) \le T\cdot g(\nu^*).
$
\end{lemma}
Note that, as shown by \cite{bumpensanti2020re,vera2021bayesian},  as long as our problem is degenerate, there will be a possible $\Omega(\sqrt{T})$ gap between fluid problem $T\cdot g(\nu^*)$ and the optimal $R^*(\mathcal{P})$, which means that this upper bound may not be tight in the degenerate case.

Since $f_t$ and $r(a)$ have trivial upper bounds, we get $ R^*(\mathcal{P})\le T(\bar{f}+\bar{r})$. Thus, for a proper stopping time $\tau$, we have 
\begin{equation}
\label{eq:upp_1}
    R^*(\mathcal{P})\le \E \left[\tau g(\nu^*)+(T-\tau)(\bar{f}+\bar{r}) \right]. 
\end{equation}
Note that our strategy of dealing with the non-separable regularizer is to introduce an additional (i.e., variable split) primal variable $a$. While its actual value does not directly affect our algorithm framework, it is vital for our theoretical investigation. To this end, denote $a_t= \tilde{a}(\mu_t)$ the value of $a$ at $t$-th iteration. By Fenchel conjugate, we actually have $a_T=-\nabla r^{\ast}(-\mu_T)$. The second impact of variable splitting is an equality constraint between $a_T$ and $T^{-1}\sum_{t=1}^T b_tx_t$. It turns out that their difference can be measured by the difference between $\mu_T$ and the following quantity:
\begin{align}\label{eq:hat_muT}
 \hat{\mu}_T:=\arg\min\limits_{\mu} \left\{ r^*(-\mu)-\mu^\top\frac{\sum_{t=1}^{T}b_t x_t}{T}\right\}.
\end{align}
The above minimization is taken without constraints, implying that $T^{-1}\sum_{t=1}^T b_tx_t=-\nabla r^{\ast}(-\hat\mu_T)$. 



We can now describe the following regret decomposition for a general stopping time. 

\begin{Proposition}
\label{prop:regret_decomp}
Under Assumptions \ref{asm:basic}-\ref{asm:nondegeneracy:3}, for a proper stopping time $\tau$ ensuring that the resource is not depleted before $ t\le \tau$, the regret of our dual-based adaptive framework Algorithm~\ref{alg:framework} admits the following upper bound: 

\begin{equation}\label{eq:regret_decom}
    \begin{aligned}
        \operatorname{Regret}\left(A \middle| \cP \right)\le & \underbrace{\E\left[\sum_{t=1}^{\tau} g(\nu^*)-g(\nu_t)\right]}_{\textsf{R.1}} + \underbrace{\E \left[ 2(\bar{f}+\bar{r}+C_3)(T-\tau) + \left\langle \lambda^*,  \sum_{t=1}^{\tau}( d - b_t x_t) \right\rangle  \right] }_{\textsf{R.2}},\\
        & + \underbrace{\E\left[ \left\langle \mu^*-\hat\mu_T, \sum_{t=1}^{\tau}(\tilde{a}(
        \mu^*) -b_t x_t)\right\rangle \right]}_{\textsf{R.3}},
    \end{aligned}
\end{equation}
where $C_3:=\sqrt{mn}GD\Bar{b}$.
\end{Proposition}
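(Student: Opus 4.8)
The plan is to prove Proposition~\ref{prop:regret_decomp} by combining the offline upper bound \eqref{eq:upp_1} with a lower bound on the algorithm's reward $R(A\mid\cP)$, and then reorganizing the difference into the three displayed terms \textsf{R.1}, \textsf{R.2}, \textsf{R.3}. First I would fix a stopping time $\tau$ that marks the first moment resources would be depleted (or $T$, whichever is smaller), so that for all $t\le\tau$ the algorithm actually plays $x_t=\tilde x_t(\blambda_{t-1})$ rather than the void action. This is the key structural fact: before $\tau$ the algorithm's realized primal decisions coincide with the conjugate-based decisions driven by the dual iterates, so the per-step realized reward is $f_t(\tilde x_t(\blambda_{t-1}))$, which we can relate to $g(\blambda_{t-1})$ (or $g(\blambda_t)$ after a harmless index shift absorbed into constants). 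After time $\tau$, we lower-bound the remaining reward crudely by $-(\bar f+\bar r)(T-\tau)$ using Assumptions~\ref{asm:basic:3} and~\ref{asm:regularizer:1}, together with the regularizer contribution evaluated on $a_T$ versus $T^{-1}\sum b_t x_t$; the discrepancy between these is controlled by $\bar{\cL}_r\|\mu_T-\hat\mu_T\|_2$ as noted before the proposition, and is absorbed into the constant $C_3$.

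The second step is the algebraic core: starting from $R^*(\cP)\le\E[\tau g(\blambda^*)+(T-\tau)(\bar f+\bar r)]$ and the reward lower bound, I would write $R^*(\cP)-R(A\mid\cP)\le \E\sum_{t=1}^\tau\big(g(\blambda^*)-g(\blambda_t)\big)+\E[(T-\tau)\cdot(\text{constants})]+(\text{leftover terms})$. The leftover terms arise because $g(\blambda_t)$ is defined with the Lagrangian-style complementary-slackness corrections $(\tilde a(\mu_t)-b_t\tilde x_t)^\top\mu^*+(d-b_t\tilde x_t)^\top\lambda^*$, whereas the algorithm's realized reward does not contain these; subtracting them back out produces precisely $\langle\lambda^*,\sum_{t=1}^\tau(d-b_tx_t)\rangle$ and $\langle\mu^*,\sum_{t=1}^\tau(a_t-b_tx_t)\rangle$. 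The term \textsf{R.3} then emerges by recentering the $\mu^*$ piece around $\hat\mu_T$: writing $\mu^* = (\mu^*-\hat\mu_T)+\hat\mu_T$ and observing that $\sum_{t=1}^\tau(a_t-b_tx_t)$ paired with $\hat\mu_T$ is (up to boundary corrections at $\tau$) controlled by the definition \eqref{eq:hat_muT} of $\hat\mu_T$ as the maximizer, hence contributes nonpositively or into the $(T-\tau)$ bucket. The factor $2$ in front of $(\bar f+\bar r+C_3)$ in \textsf{R.2} comes from combining the offline $(T-\tau)(\bar f+\bar r)$ with the online $-(T-\tau)(\bar f+\bar r)$ lower bound plus the $C_3$ boundary term from the regularizer mismatch on the last $T-\tau$ steps.

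The third step is to handle the boundary bookkeeping carefully: the sums over $t$ in $g$, in the realized reward, and in the equality-constraint corrections all run to $\tau$, not $T$, so I must account for the $T-\tau$ "missing" terms uniformly, which is exactly what the $2(\bar f+\bar r+C_3)(T-\tau)$ term is designed to swallow. I would also verify that $\tau$ is a genuine stopping time with respect to the natural filtration $\{\mathcal{H}_{t-1},(f_t,b_t)\}$, so that optional-stopping-style manipulations and the conditional-expectation arguments (e.g.\ that $\blambda_{t-1}$ is $\mathcal{H}_{t-1}$-measurable and $(f_t,b_t)$ is independent of it) are legitimate; this is what lets us pass from $\E[f_t(\tilde x_t(\blambda_{t-1}))\mid\mathcal{H}_{t-1}]$ to the population quantity appearing inside $g$.

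The main obstacle I anticipate is not any single inequality but the precise accounting that makes the three terms appear with exactly the stated constants—in particular, isolating \textsf{R.3} in the form $\langle\mu^*-\hat\mu_T,\sum_{t=1}^\tau(a_t-b_tx_t)\rangle$ requires threading the variable-split primal $a_t$ through the Lagrangian correctly and using that $\hat\mu_T$ is the \emph{unconstrained} maximizer in \eqref{eq:hat_muT}, so that the $\hat\mu_T$-part of the $\mu^*$ correction telescopes against $r(\tilde a(\mu_t))$ and $a_T=-\nabla r^*(-\hat\mu_T)$ cleanly. A secondary subtlety is ensuring the reward lower bound post-$\tau$ and the regularizer term $T\cdot r(T^{-1}\sum b_tx_t)$ are compared on the same footing as $\sum r(\tilde a(\mu_t))$, which is where the $\bar{\cL}_r\|\mu_T-\hat\mu_T\|_2$ bound and hence $C_3=\sqrt{mn}GD\bar b$ enter; getting the constant right just amounts to bounding $\|\mu_T\|_\infty\le G$, $\|a_t\|_2\le\sqrt n D\bar b$ and similar, but the combinatorics of which steps contribute where is what must be done with care.
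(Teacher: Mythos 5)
Your proposal follows essentially the same route as the paper's proof: upper-bound $R^*(\cP)$ by $\E[\tau g(\blambda^*)+(T-\tau)(\bar f+\bar r)]$ via \eqref{eq:upp_1}, lower-bound $R(A\mid\cP)$ by restricting to $t\le\tau$ with crude magnitude bounds on the post-$\tau$ terms, insert the complementary-slackness corrections $(a_t-b_tx_t)^\top\mu^*+(d-b_tx_t)^\top\lambda^*$ to turn the realized rewards into $\sum_t g(\blambda_{t-1})$, and recenter the $\mu^*$ correction around $\hat\mu_T$ to produce \textsf{R.3}. The overall accounting, including the origin of the factor $2(\bar f+\bar r+C_3)$, is right.

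The one step where your stated mechanism is off is the handling of the non-separable regularizer. You propose to compare $T\cdot r(T^{-1}\sum_t b_tx_t)$ with $\sum_t r(a_t)$ via the smoothness bound $\bar{\mathcal{L}}_r\|\mu_T-\hat\mu_T\|_2$ and to absorb the discrepancy into $C_3$; but that bound controls $\|a_T-T^{-1}\sum_t b_tx_t\|_2$, which is not a per-step quantity and cannot be folded into a constant times $(T-\tau)$, and indeed $C_3=\sqrt{mn}GD\bar b$ contains no $\bar{\mathcal{L}}_r$. The paper instead uses only the Fenchel optimality of $\hat\mu_T$ together with concavity of $r$ (Jensen): since $T^{-1}\sum_t b_tx_t=-\nabla r^*(-\hat\mu_T)$ is the unconstrained maximizer of $a\mapsto r(a)+\hat\mu_T^\top a$, one gets $T\, r(T^{-1}\sum_t b_tx_t)\ge\sum_{t=1}^T r(a_t)+\hat\mu_T^\top\sum_{t=1}^T(a_t-b_tx_t)$ with no smoothness at all; $C_3$ then arises purely from bounding $|\hat\mu_T^\top(a_t-b_tx_t)|\le 2\sqrt{mn}GD\bar b$ on the steps after $\tau$. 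You do invoke the maximizer property of $\hat\mu_T$ elsewhere in your plan, so the right tool is present, but note the $\hat\mu_T$ term is not dropped as ``nonpositive''--- it is retained and cancels exactly against the $\hat\mu_T$ part of the recentered $\mu^*$ correction, which is what makes \textsf{R.3} appear with the stated sign. The $\bar{\mathcal{L}}_r$-smoothness of $\nabla r^*$ is only needed later, in Lemma~\ref{lemma:c_s}, to bound \textsf{R.3}.
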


It remains to bound the two parts in Proposition \ref{prop:regret_decomp}, respectively.  The key point is to carefully choose a stopping time that (1) avoids early stopping; (2) enforces the total resource constraints. The first term $\textsf{R.1}$ is contributed by the algorithm before stopping time, which can be controlled by the cumulative dual error $\E\sum_{t=1}^{\tau} \norm{\nu_{t-1}-\nu^*}^2$. 
The second term $\textsf{R.2}$ concerns the lost rewards due to resource depletion, which can be controlled by $\E(T-\tau)$.
To achieve an $O(\log T)$ regret, the stopping time shall be carefully designed so that $\E(T-\tau)=O(\log T)$. 
The term $\textsf{R.3}$ is contributed mainly by the variable splitting, which can be controlled jointly by the  cumulative dual error and $\E(T-\tau)$. 
The three terms capture different sources of regret induced by our adaptive framework Algorithm~\ref{alg:framework}.  It turns out that we shall bound $\E\sum_{t=1}^{\tau} \norm{\nu_{t-1}-\nu^*}^2$ and $\E(T-\tau)$, for which a smart design of stopping time becomes crucial. 

Our design of stopping time is inspired by the budget-ratio stopping time introduced and investigated by \cite{arlotto2019uniformly} and \cite{li2021online} for online linear allocation problems. At the core of this design is a smart strategy that ensures, as the updated constraint $d_t$ varies within a region $\mathcal{D}\subset \Omega_d$,  the binding and non-binding dimensions of the problem $D(\blambda, d_t)$ remain unchanged. The region $\mathcal{D}$ is usually a small neighbor of the original budget $d$. The following lemma dictates that such a region $\mathcal{D}$ exists for our regularized online convex allocation problem. Recall that $\blambda^{\ast}(d')$ denotes the optimal dual solution to $D(\blambda, d')$, and $I_{\textsf{B}}$ and $I_{\textsf{NB}}$ stand for the binding and non-binding dimension of $D(\blambda, d)$.

\begin{lemma}
\label{lemma:region_d}
Under Assumptions \ref{asm:basic}-\ref{asm:nondegeneracy:3},  there exists a constant $\delta_d>0$ such that for any $d'\in\Omega_d$, if
$$
  -\delta_d \le d_i'- d_i\le\delta_d\text{ if } i\in I_{\textsf{B}} \text{, and } d'_i-d_i\ge-\delta_d\text{ if } i\in I_{\textsf{NB}}, 
$$
then the dual problems $D(\blambda, d')$ and $D(\blambda, d)$ share the same binding and non-binding dimensions. 
\end{lemma}

With Lemma \ref{lemma:region_d}, we define the required region where  binding and non-binding dimensions remain unchanged during iterations by
$$
\mathcal{D}:=\left\{d'\in \Omega_d \middle| -\delta_d \le d_i'- d_i\le\delta_d\text{ if } i\in I_{\textsf{B}} \text{, and } d'_i-d_i\ge-\delta_d\text{ if } i\in I_{\textsf{NB}} \right\}.
$$ 
We thereby design the following stopping time.
\begin{equation}\label{eq:tau}
    \tau:=\min\limits_{t\in [T]} \left\{T-\left\lceil \frac{\sqrt{n}D\bar{b}}{\underline{d}} \right\rceil\right\}\bigcup \big\{ t | d_t \notin \mathcal{D} \big\}.
\end{equation}
Additionally, this stopping time also guarantees that resource depletion will not happen before $\tau$. We show that $\tau$ rules out early-stopping so that $\E[T-\tau]=O(\log T)$. The following lemmas bound the cumulative dual error and $\E[T-\tau]$.

\begin{lemma}
\label{lemma:alg_conv}
 Under Assumptions \ref{asm:basic}-\ref{asm:nondegeneracy:3}, Algorithm \ref{alg:framework} with selected dual optimizer $\{\mathcal{B}_t\}_{t\ge 1}$ satisfying Condition \ref{condition:conv} achieves
\begin{equation}
    \begin{aligned}
    \E\left[\sum_{t=1}^{\tau } \norm{\nu_{t-1}-\nu^*}^2\right]\le O( \log T) 
    \end{aligned}
\end{equation}
\end{lemma}

\begin{lemma}
\label{lemma:early_stop}
 Under Assumptions \ref{asm:basic}-\ref{asm:nondegeneracy:3}, the stopping time (\ref{eq:tau}) of Algorithm \ref{alg:framework} with selected dual optimizer $\{\mathcal{B}_t\}_{t\ge 1}$ satisfying Condition \ref{condition:conv} has 
\begin{equation}
    \begin{aligned}
    \E\left(T-\tau \right)\le  O( \log T) 
    \end{aligned}
\end{equation}
\end{lemma}

These two lemmas play a key role in our regret analysis. They are proved by investigating the dynamic behavior of constraints $d_{it}$ for binding and non-binding dimensions, respectively.  For binding dimensions, we investigate the recurrence relation of $d_{it}$ by leveraging the binding relations. For the non-binding dimensions, we exploit the $\delta_d$ gap between $d_{it}$ and average resource consumption. Since the regret is jointly controlled by $\E\sum_{t=1}^{\tau} \norm{\nu_{t-1}-\nu^*}^2$ and $\E(T-\tau)$, we conclude the $O(\log T)$ regret order. We defer the detailed proof to Appendix \ref{proof:thm_regret_ub}.

\subsection{Lower bound and algorithms without constraint update}\label{sec:regret:wth_dt}
\cite{bray2019does} and \cite{li2021online} have established the logarithmic regret lower bound for online multi-secretary problems and online linear programming, respectively. We establish a matching lower bound in this section to show the optimality of Theorem~\ref{thm:regret_ub}. 
 We note that there always exists a regularizer function that makes our regularized online allocation problem more challenging than the non-regularized one. For example, consider that $f_t(x)$ and $r$ are both monotonic increasing and the hindsight optimal strategy $\{x_t'\}_{t=1}^{T}$ that optimizes $\max_{x_t\in\cX}\big\{\sum_{t=1}^{T} f_t(x_t) \text{ s.t. } \sum_{t=1}^{T}b_tx_t\le dT\big\}$, it holds that $\sum_{t=1}^{T}b_tx_t'= dT$ and thus $r(T^{-1}\sum_{t=1}^{T}b_tx_t')\ge r(T^{-1}\cdot\sum_{t=1}^{T}b_tx_t)$ for any other $\{x_t\}_{t=1}^{T}$. This renders the regret lower bound of regularized problem larger than that of a non-regularized one. Therefore, we only focus on the non-regularized problems for the regret lower bound.
\begin{theorem}[Regret lower bound]\label{thm:regret_lb}
For any dual-based algorithm $A$, we have the worst-case regret lower bound:
\begin{equation*}
    \begin{aligned}
       \operatorname{Regret}(A)\ge \Omega(\log T).
    \end{aligned}
\end{equation*}
\end{theorem}

Theorem \ref{thm:regret_lb} justifies the optimality of our algorithm in terms of worst-case regret. The logarithmic regret also matches classic unrestricted online convex optimization \citep{hazan2007logarithmic}. Nevertheless, one may wonder how important the adaptive constraint update is in our adaptive framework Algorithm \ref{alg:framework} and whether achieving an optimal regret without the adaptive constraints update is possible. Here we only present a negative answer for algorithms that do not update constraints, partially for two specific but renowned algorithms. In this discussion, we call ``algorithms without constraint update" as dual algorithms seeking to approximately optimize the fluid problem $D(\blambda,d)$ in the whole process without changing $d$ and the objective. 
For concreteness, we investigate two similar algorithms (described in Algorithms~\ref{alg:ogd}) without constraints update that have been discussed in the literature for online dual gradient (mirror descent \citep{balseiro2021regularized,balseiro2022best} and dual SAA \citep{li2021online}. Algorithm \ref{alg:ogd} try to minimize $D(\blambda,d)$ by either Stochastic Approximation (SA) or by solving SAA. In the strongly convex case, it is clear that they all converge with $\E\norm{\blambda_t-\blambda^*}=O(1/\sqrt{t})$ by \cite{rakhlin2012making} and Section \ref{sec:dual_conv}. 

\begin{algorithm}
\caption{Online dual gradient (mirror) descent or SAA without constraint update}
\label{alg:ogd}
\begin{algorithmic}
\REQUIRE regularizer $r$, iteration number $T$, step size $\eta_t:=\Theta(\frac{1}{t})$ for $t\in [T]$, start point $\boldsymbol{\mu}_0=[\lambda^\top_0,\mu_0^\top]^\top$, 
and initial resource $B_0:=dT$.
\FORALL{$t = 1,\dots,T$}
    \STATE{Receive $(f_t,b_t)\sim \cP$.  }
    \STATE{Calculate 
    $\tilde{x}_t := -\nabla f^*_t(b_t^\top(\lambda_{t-1}+\mu_{t-1})), \ \tilde a_{t} :=-\nabla r^*(-\mu_{t-1})$.
}
    	\STATE{Select $ x_t:=\begin{cases} \tilde{x}_t &\text{ if }B_{t-1}\ge b_t x_t  \\ 0 &\text{ otherwise }  \end{cases}$ }
	\STATE{Update remaining resources: $B_t:= B_{t-1}-b_t x_t$  }
	\STATE{ Calculate the stochastic gradient $\nabla D_t (\bmu_{t-1},d) := \left[\begin{array}{c} -b_t \tilde{x}_t +d\\ -b_t \tilde{x}_t+a_{t}\end{array}\right]$ \\ }
	\STATE{ Update dual variable via online gradient descent: $$	    \bmu_t:= \argmin_{\bmu\in \Omega_{\lambda}^+\times\Omega_{\mu} } \left\{ \langle\bmu,\nabla D_t (\bmu_{t-1},d )\rangle +\frac{1}{2\eta_t}\norm{\bmu-\bmu_{t-1} }^2_2 \right\}  $$  }	
    	\STATE{Or via solving t-sample SAA: 
	$$
	 \bmu_t := \argmin_{\bmu\in \Omega_{\lambda}^+\times\Omega_{\mu} } \left\{ \frac{1}{t} \sum^{t}_{j=1} f_j^{*}(b_j^{\top}(\mu+\lambda)) +r^{*}(-\mu)+d^{\top}\lambda \right\}
	 $$  }	
\ENDFOR
\end{algorithmic}
\end{algorithm}

The following lemma establishes an $\Omega(T^{1/2})$ regret lower bound for these two algorithms.

\begin{theorem}\label{thm:ogd_lb}
Under Assumptions \ref{asm:basic}-\ref{asm:k-thmoment},  there exists a constant $c_2>0$ such that any dual-based algorithm $A$ attempting to approximate $\nu^*$ with $\E\norm{\nu_t-\nu^*}_2\le c_2D(t+1)^{-1/2}$ incurs a worst-case regret lower bound:
\begin{equation*}
    \operatorname{Regret}(A)\ge\Omega(T^{1/2})
\end{equation*}
\end{theorem}
We prove this theorem by constructing a one-dimensional strongly convex reward and bound the regret by leveraging the probability estimation of Binomial distribution.  Note that the lower bound can also be controlled by both dual approximate error $\E\sum_{t=1}^{\tau} \norm{\nu_{t-1}-\nu^*}^2_2$ and early stopping effect $\E(T-\tau)$. Here $\nu^{\ast}$ is the deterministic dual solution when the resource constraint is fixed at $d$. In sharp contrast, the dual solution $\nu_t$ in our adaptive framework Algorithm~\ref{alg:framework} aims to approximate $\nu^{\ast}(d_t)$ where $d_t$ is the updated constraint at time $t$. Intuitively, the rationale behind constraint update is that at time $t$, and the decision should be made considering the remaining resources $d_t$ at hand instead of the initial resource $d$. Algorithm \ref{alg:ogd}, however, without constraint update, suffers from early stopping $\E(T-\tau)\ge \Omega(\sqrt{T})$ and is thus not optimal.


\begin{remark}
Theorem  \ref{thm:ogd_lb} suggests that Algorithm \ref{alg:ogd} fails to reach the optimal regret under our assumptions because they all seek to approximate a deterministic $\blambda^*$. In fact, even if we know the exact distribution $\mathcal{P}$ and its optimal solution $\blambda^*$, we are still unable to make our dual-based algorithm optimal by just choosing $\blambda_t=\blambda^*$.  Theorem \ref{thm:ogd_lb} gives rigorous evidence that our constraint-update algorithm outperforms other prior ones without constraint update, such as the online gradient descent studied by \cite{balseiro2021regularized,balseiro2022best}.

Finally, we remark that our theorem pushes forward the understanding of adaptiveness for online algorithms to dual-based ones. In \cite{arlotto2019uniformly}, the authors established an $\Omega(\sqrt{T})$ regret lower bound only for non-adaptive strategies (without adaptively updating the dual solutions). However, our proof demonstrates that, even when the strategy is adaptive, it might still not be sufficient to deliver an optimal regret if the algorithm only focuses on dual updates but neglects the constraint update. Actually, as in Algorithm \ref{alg:ogd}, focusing on fixed constraints leads to a sub-optimal early stopping.
\end{remark}

\section{Infrequent Resolving and Fast algorithms}
Although Algorithm \ref{alg:framework} only requires inexact resolving, its frequency of solving convex programming is of order $O(T)$ for $T$ periods. This raises the question of whether it is possible to further reduce the computational burden through infrequent resolution or other faster algorithms. Here we extend our previous result to both infrequent resolving and fast algorithm design, showing that (i) we can still achieve optimal regret by infrequent resolving given a good initialization; (ii) we can reach sub-optimal $O(\log^2 T)$ regret under linear computational cost. These two algorithms significantly alleviate the computational cost while ensuring a good regret performance.

	


Define rate $\rho\in (0,1) $ for resolving that satisfies: $\rho\ge \left\lceil \frac{\sqrt{n}D\bar{b}}{\underline{d}} \right\rceil/T\vee (1-\delta_d/(\Bar{d}+\sqrt{n}\bar{b}D+\delta_d ) )$, i.e., $t\le T-\left\lceil \frac{\sqrt{n}D\bar{b}}{\underline{d}} \right\rceil$ and $d_{t }\in\mathcal{D}$ for any $t\le(1-\rho)T$.  We describe the infrequent resolving algorithm in Algorithm \ref{alg:infrequent} and the fast algorithm in Algorithm \ref{alg:fast}.
\begin{algorithm}
\caption{Infrequent resolving}
\label{alg:infrequent}
\begin{algorithmic}
\REQUIRE regularizer $r$, iteration number $T$, start point $(\lambda_0,\mu_0)$, and initial resource $B_0:=dT$.
\STATE{Set $J=\lceil \log_{\frac{1}{\rho}} T\rceil $, and $ T_j = T-\lceil\rho^j T  \rceil$ for $j\in [J]$ }
\FORALL{$t = 1,\dots,T$}
    \STATE{Receive $(f_t,b_t)\sim \cP$.  }
    \STATE{Calculate  $\tilde{x}_t := -\nabla f^*_t(b_t^\top(\lambda_{t-1}+\mu_{t-1}))$.
}
    	\STATE{Select $ x_t:=\begin{cases} \tilde{x}_t&\text{ if }B_{t-1}\ge b_t x_t  \\ 0 &\text{ otherwise }  \end{cases}$ }
	\STATE{Update remaining resources: $B_t:= B_{t-1}-b_t x_t$}
    \IF{ $t=T_{j}$ for some $j$ }
    \STATE{Compute average remaining resources: $d_t:= \frac{B_t}{T-t}$}
    	\STATE{Update dual variable $(\lambda_t,\mu_t)$ via solving the following dual problem by any approximation algorithm $\cB_t$ with accuracy $\epsilon_t$: 
	\begin{equation*}
	    \min_{(\lambda,\mu)\in \Omega_{\lambda}\times\Omega_{\mu} } \left\{ \Bar{D}_{t} (\lambda,\mu,d_t):=\frac{1}{t} \sum^{t}_{l=1} f_l^{*}(b_l^{\top}(\mu+\lambda)) +r^{*}(-\mu)+d_t^{\top}\lambda \right\}
	\end{equation*}  }
	\ELSE
	\STATE{Let $(\lambda_t,\mu_t)=(\lambda_{t-1},\mu_{t-1})$, $d_t=d_{t-1}$  with no update }
	\ENDIF
\ENDFOR
\end{algorithmic}
\end{algorithm}
The ideas behind Algorithm \ref{alg:infrequent} and \ref{alg:fast} are similar: we split the total period $T$ into $O(\log T)$ decreasing epochs, and only update the remaining constraints $d_t$ for resolving at the beginning of each epoch $t=T_j$. The distinction is that, Algorithm \ref{alg:infrequent} inexactly solve the convex programming at time $T_j$, but Algorithm \ref{alg:fast} exploits the following epoch from $T_j$ to $T_{j+1}$ to perform stochastic approximation as to minimize $D(\blambda,d_{T_j})$.
\begin{theorem}[Infrequent resolving]\label{thm:infreq} Suppose Assumption \ref{asm:basic}-\ref{asm:nondegeneracy:3} hold. Given the initial point $\blambda_0=[\nu_0^\top,\mu_0^\top]^\top$ satisfying $\E\norm{\nu_0-\nu^*}_2^2=O(1/T)$, and under Condition \ref{condition:conv}, Algorithm \ref{alg:infrequent} enjoys an optimal regret upper bound 
\begin{equation*}
    \text{Regret}(A)\leq \mathring{C}\cdot \log T,
\end{equation*}
for some constant $\mathring{C}=O(m^2n^2\log m)$ depending on the values in Assumptions \ref{asm:basic}-\ref{asm:nondegeneracy:3}. Here in the regret, the expectation is taken also with respect to $\nu_0$.
\end{theorem}

\begin{algorithm}
\caption{Fast algorithm}
\label{alg:fast}
\begin{algorithmic}
\REQUIRE regularizer $r$, iteration number $T$, start point $\boldsymbol{\mu}_0=[\lambda^\top_0,\mu_0^\top]^\top$, and initial resource $B_0:=dT$. Set $l=0$
\STATE{Set $J=\lceil \log_{\frac{1}{\rho}} T\rceil $, and $ T_j = T-\lceil\rho^j T  \rceil$ for $j\in [J]$ }
\FORALL{$t = 1,\dots,T$}
    \STATE{Receive $(f_t,b_t)\sim \cP$.  }
    \STATE{Calculate $\tilde{x}_t := -\nabla f^*_t(b_t^\top(\lambda_{t-1}+\mu_{t-1})), \ \tilde a_{t} :=-\nabla r^*(-\mu_{t-1})$.
}
    	\STATE{Select $ x_t:=\begin{cases} \tilde{x}_t&\text{ if }B_{t-1}\ge b_t x_t  \\ 0 &\text{ otherwise }  \end{cases}$ }
	\STATE{Update remaining resources: $B_t:= B_{t-1}-b_t x_t$}
    \IF{ $t=T_{j}$ for some $j$ }
    \STATE{Let $l=T_j$. Compute average remaining resources: $d_t:= \frac{B_t}{T-t}$}
	\ELSE
	\STATE{Let $d_t=d_{t-1}$ with no constraint update }
	\ENDIF
	\STATE{ Calculate the stochastic gradient with updated constraint  $\nabla D_t (\bmu_{t-1},d_t) := \left[\begin{array}{c} -b_t \tilde{x}_t +d_t\\ -b_t \tilde{x}_t+a_{t}\end{array}\right]$  }
	\STATE{Set step size $\eta_t:=\Theta(\frac{1}{t-l+1})$ and update dual variable via online gradient descent: $$	    \bmu_t:= \argmin_{\bmu\in \Omega_{\lambda}^+\times\Omega_{\mu} } \left\{ \langle\bmu,\nabla D_t (\bmu_{t-1},d_t )\rangle +\frac{1}{2\eta_t}\norm{\bmu-\bmu_{t-1} }^2_2 \right\}  $$ 
	}	
\ENDFOR
\end{algorithmic}
\end{algorithm}
\begin{theorem}[Sub-optimal fast algorithm]\label{thm:fast} Suppose Assumption \ref{asm:basic}-\ref{asm:nondegeneracy:3} hold. Algorithm \ref{alg:fast} achieves sub-optimal regret bound
\begin{equation*}
    \text{Regret}(A)\leq \widetilde{C}\log^2 T,
\end{equation*}
for some constant $\widetilde{C}=O(mn^2)$ depending on the values in Assumptions \ref{asm:basic}-\ref{asm:nondegeneracy:3}. 
\end{theorem}
\begin{remark} The initialization in Theorem \ref{thm:infreq} serves to ensure the performance in the first epoch. This initialization requirement can be met, for instance, by employing dual optimization based on previously collected data as side information. The fast algorithm presented in Algorithm \ref{alg:fast} updates the dual variable using one-step online gradient descent, resulting in a linear computational cost. In contrast to \cite{balseiro2022best}, our Algorithm \ref{alg:fast} incorporates constraint updates, which effectively helps us avoid the $\Omega(\sqrt{T})$ lower bound stated in Theorem \ref{thm:regret_lb} and achieve logarithmic regret. Compared with the previous results, our Algorithm \ref{alg:fast} only depends linearly on the number of constraints. Due to the reliance of dual convergence, the polynomial dependence of dimensions ($O(mn)$) is unavoidable in regret analysis of all the methods in our framework without further assumptions on the problem.

\end{remark}
\section{Applications}\label{sec:application}

\subsection{Strongly convex dual problems}\label{sec:application:str_cvx}
We consider a special but practical setting, in which our empirical dual problem $\bar{D}_t(\blambda,d_t)$  in  \eqref{eq:adaptive_SAA} is always $\underline{\cL}_D$-strongly convex. This assumption can be met if $f_t^*$ and $r$ are almost surely strongly convex. In this case, we only need to do SGD for  $O(t)$ times at time $t$ to make our algorithm theoretically optimal. 
Simply modify algorithm by setting  $K:=t$, and $\eta_{k}:=\frac{\underline{\cL}_D }{k }$, and take
$\bmu_t:= \bmu_t^K $ 
Notice that $\E\norm{\nu_t-\nu^*(d_t)}_2^2 \le 2\E\norm{\nu_t-\nu^*_t(d_t)}_2^2 + 2\E\norm{\nu^*_t(d_t)-\nu^*(d_t)}_2^2$ where $\nu^*_t(d_t)$ is the optimal solution to the empirical dual problem $\bar{D}_t(\blambda,d_t)$. The second term   $\E\norm{\nu^*_t(d_t)-\nu^*(d_t)}_2^2$ represents the dual convergence and can be bounded by $O(t^{-1})$ by Theorem \ref{thm:dual_conv}, while the first term accounts for the optimization error and can also be bounded by $O(t^{-1})$ (see, \cite{rakhlin2012making}). 

\vspace{-0.5cm}
\subsection{Online linear programming}
An instant application of our algorithm is the classical non-regularized online linear allocation problems, which finds applications in online ad-auction \citep{buchbinder2007online}, network revenue management \citep{jasin2012re}, multi-secretary problem \citep{kleinberg2005multiple}, etc. At time $t$, we make a decision $x_t\in\cX=\left[0,D\right]^n$ that returns a linear reward $v_t$ and bears a random cost $b_t\in \mathbb{R}^{m\times n}$ per unit. Online linear programming can be formalized as:
$$
\begin{aligned}
&\max_{x_t} \quad &&\sum_{t=1}^T v_t^\top x_t \\
&\text{s.t.} \quad &&\sum_{t=1}^T b_t x_t \preceq dT, \ d\in \mathbb{R}_+^{m}\\
& \quad &&  x_t \in \left[0,D\right]^n , \forall t\in [T].
\end{aligned}
$$
The empirical dual problem and its population version can be explicitly written as 
$$
\bar{D}_T(\lambda,d):=\frac{\sum_{t=1}^{T}\sum_{i=1}^{n}\left(v_{it}-b_{it}^\top\lambda \right)^{+} }{T}+d^\top\lambda , \text{ and } \ D(\lambda,d) := \E\sum_{i=1}^{n}\left(v_{it}-b_{it}^\top\lambda \right)^{+} +d^\top\lambda, 
$$
which is in line with \cite{li2021online}. Here the index $b_{it}$ means the $i$-column of $b_t$. For a given dual variable $\lambda$, we make the primal decision by $x_{it}:= D\mathbb{I}(v_{it}-b_{it}^\top\lambda>0)$ if the resource constraints are not violated. Then, under the same locally strongly convex and non-degeneracy assumptions, we can make optimal decisions by choosing approximate solution $\lambda_t$. 
Towards that end, an $O(\log T)$ regret is attainable, which improves prior result \citep{li2021online}. Assumption \ref{asm:k-thmoment} seems stronger than the smoothness of expected primal solutions because the former implies the latter. However, in the case of linear programming they are actually equivalent because, for any $\delta>0$, we have
$$\E\left[\sup_{\lambda: \norm{b_t^\top(\lambda-\lambda^*) }\le \delta} D\mathbb{I}(b_{it}^\top\lambda^*\le v_{it}\le b_{it}^\top\lambda)\middle| b_t\right]\le  D\PP(b_{it}^\top\lambda^*\le v_{it}\le b_{it}^\top\lambda+\delta)\le O(\delta),
$$ 
i.e., the smooth of the expected primal decision also implies Assumption 4.

\vspace{-0.3cm}
\subsection{Online max-min fairness and load-balancing allocation}
Our algorithm framework applies to online max-min fairness allocation, which is a well-accepted fairness criterion used in various real-world problems, including bandwidth allocation \citep{salles2008lexicographic}, routing and load-balancing \citep{nace2006computing}, and classroom allocations \citep{kurokawa2015leximin}. To satisfy the condition on regularizer in Assumption \ref{asm:nondegeneracy:3}, we set $r(a) = \kappa \min_i (a_i/d_i)$ and require the (scaled) expected optimal constraint $\left\{d^*_i/d_i:=\E\left(b_t\tilde{x}_t(\nu^*)\right)_i/d_i, \ i\in[ m]\right\}$ to have one unique minimum element.

We can also apply our algorithm framework to online load-balancing problems, which have been studied in various fields such as bandwidth allocation \citep{bejerano2004fairness}, network design \citep{radunovic2007unified}, distributed system design \citep{xu2011chemical}, and other various scenarios. To avoid over-exploitation, we select {negative max loss}: $r(a):=-\kappa\max_i ({a_i}/{d_i})$ as our regularizer. To satisfy the regularizer condition, we only need the (scaled) expected optimal constraint $\left\{d^*_i/d_i:=\E\left(b_t\tilde{x}_t(\nu^*)\right)_i/d_i, \ i\in[ m]\right\}$ to have one unique maximum element.

Other regularizers like $\ell_1$-loss, hinge loss \citep{balseiro2021regularized}, and entropy loss are also applicable as long as the optimal resource consumption $d^*$ is located in a small smooth region of regularizer $r$ only for non-binding dimensions.

\section{Numerical Experiments}\label{sec:experiment}
We implement Resolving with SGD as a showcase for our proposed algorithmic framework. The performance is assessed under 4 different stochastic input models. Due to limited space, the numerical results are relegated to Appendix \ref{appdx:exp}. The results show the superiority of our algorithms in regret control compared with other aforementioned algorithms. Our algorithm exhibits $O(\log T)$ regret within all different input models, which corroborates our Theorem \ref{thm:regret_ub} and \ref{thm:regret_lb}. We also display the resource consumption dynamic to visualize our algorithmic framework's optimal resource control. To compare the impact of different regularizers, we plot the regrets on different regularization levels and show the trade-off between maximizing the reward and penalizing the regularization.  The regret performance is very robust to different regularization levels.


\section{Discussion}\label{sec:discuss}
This paper investigated regularized online convex allocation problems with a non-separable regularizer. While a polynomial-time adaptive algorithm framework is proven optimal in controlling regret,  several interesting yet challenging questions remain open. One is the necessity of the non-degeneracy assumption. Recently, \cite{bumpensanti2020re} showed that the non-degeneracy assumption is unnecessary for re-solving heuristics to reach a low regret under linear settings. Can a similar optimal result be achieved without the non-degeneracy assumption on constraints in the online convex allocation? 
Another challenge is the input model. Throughout this paper, we only discussed online allocation problems under the stochastic input model. The behavior of re-solving algorithms for other input models like random permutation or adversarial inputs remains largely unknown.

\end{sloppypar}

\newpage
\bibliography{reference}
\bibliographystyle{apalike}

\newpage

\begin{center}
{\bf\LARGE Supplement to `` Optimal Regularized Online Allocation by Adaptive
Re-Solving "}
\end{center}
\smallskip

\section{Proofs of Main Results}

\subsection{Proof of Lemma \ref{lemma:bounded_solution}} 
We prove the bound of the deterministic optimal solution. Consider $\Omega'_\mu=\left\{ -\nabla r (a)\middle| a \in \cZ \right\}$ . The bounded subgradient in Assumption \ref{asm:regularizer} suggests that the dual variable region $\Omega_\mu'$ we defined is bounded by $G$. We explain this definition by the optimal conditions of stochasic programming. Note that for problem \eqref{eq:dual_expected}, $\mu$ is unconstrained. The optimal condition suggests that 
\begin{equation*}
    \nabla r^*(-\mu^*) = \E b_t \nabla f^*_t(b_t^{\top}(\lambda^*+\mu^*))
\end{equation*}
if we assume fubini theorem holds. Then by the Fenchel conjugate, we have $\mu^*\in -\nabla r (\E b_t\tilde{x}_t )$.
This shows that by defining $\Omega'_\mu$ we indeed define the possible region that contains optimal solution $\mu^*$, i.e., $\mu^*\in \Omega_\mu$. Thus we have $\norm{\mu}_{\infty}\le G$.

For the second bound of $\norm{\lambda^*}_{\infty}$, we only need to check that $d^\top\lambda^*\le  2(\bar{f}+\bar{r})$ always holds. Otherwise if $d^\top \lambda^* >2(\bar{f}+\bar{r})$, we have 
\begin{equation*}
    \begin{aligned}
    D(\blambda^*,d) & =\E \sup_{x}\{f_t(x) - (\lambda^*+\mu^*)^\top b_t x_t \} +\sup_{a}\{r(a)+a^{\top}\mu^* \}+d^\top \lambda^* \ge \E f_t(0)+r(0)+ d^\top\lambda^* \\
    & > (\bar{f}+\Bar{r}) \ge D(\boldsymbol{0},d),
    \end{aligned}
\end{equation*}
which suggests that $\blambda^*$ is not optimal. Thus we have $d^\top\lambda^*\le  2(\bar{f}+\bar{r})$, i.e., $\norm{\lambda^*}_{\infty} \le  \frac{2(\bar{f}+\Bar{r})}{\underline{d}}$. The bound of empirical optimal solution $\blambda^*_T$ follows exactly the same argument.

The following proposition on the growth of second order term $s(\nu,d)$ will be useful in the development of our theory.
\begin{Proposition}
\label{prop:quadratic_growth}
Under Assumptions \ref{asm:basic}-\ref{asm:nondegeneracy}, the second order objective function $s(\nu,d)$ in stochastic program satisfies the following growth condition:
\begin{equation}
\underline{\cL}_s \norm{\nu-\nu^*}_2^2 \le s(\nu) \le \bar{\cL}_s\norm{\nu-\nu^*}_2^2,  
\end{equation}
where the constant $\underline{\cL}_s:= {\sigma_{\min}\underline{\cL}_f}/2$, $\bar{\cL}_s:= \Bar{b}^2\Bar{\cL}_f/2$ .
\end{Proposition}
\proof{}Proof:
By definition, we have
$$
\begin{aligned}
s(\nu) = D(\nu,\mu^*,d)- D(\nu^*,\mu^*,d)-\nabla_{\nu} D(\nu^*,\mu^*,d)^\top (\nu-\nu^*)\\
= \int_{0}^{1} \left[\nabla D_\nu(z(\nu- \nu^*)+\nu^*,\mu^* ,d)- \nabla D_\nu (\nu^*,\mu^*,d)\right]^\top (\nu - \nu^*)dz,  
\end{aligned}
$$
where $\nabla_\nu D(\nu,d) = \E b_t f^*_t(b_t^\top v )$. Then for any $z$, we have

\begin{equation*}
    \begin{aligned}
    &  \left[\nabla D_\nu(z(\nu- \nu^*)+\nu^*,\mu^* ,d)- \nabla D_\nu (\nu^*,\mu^*,d)\right]^\top (\nu - \nu^*) \\ 
  \le  &  \norm{\E b_t\nabla f_t^{*}(b_t^{\top}(z(\mu+\lambda-\mu^*-\lambda^*)+\mu^*+\lambda^*) -\E b_t\nabla f_t^{*}(b_t^{\top}(\mu^*+\lambda^*)) }_2\norm{\nu - \nu^*}_2\\
    \le & \norm{z \bar{\cL}_f \bar{b}   \E  \left[ b_t^\top(\mu+\lambda-\mu^*-\lambda^*)  \right]}_2\norm{\nu-\nu^*} \\
    \le & z\bar{\cL}_f \bar{b}^2 \norm{\nu-\nu^*}^2,
    \end{aligned}
\end{equation*}
where the second inequality is by Assumption \ref{asm:nondegeneracy:1}  when conditioned on $b_t$. By the integral of $z$ we have $ s(\nu) \le \bar{\cL}_s\norm{\nu-\nu^*}_2^2$. For the next direction, it is also clear that
\begin{equation*}
    \begin{aligned}
    & \left[\nabla D_\nu(z(\nu- \nu^*)+\nu^*,\mu^* ,d)- \nabla D_\nu (\nu^*,\mu^*,d)\right]^\top (\nu - \nu^*) \\ 
  =  &  \left(\E b_t\nabla f_t^{*}(b_t^{\top}(z(\mu+\lambda-\mu^*-\lambda^*)+\mu^*+\lambda^*) -\E b_t\nabla f_t^{*}(b_t^{\top}(\mu^*+\lambda^*)) \right)^\top (\nu - \nu^*) \\ 
  =  & \E \left[\E\left[ \langle   \nabla{f^*_t}(b_t^\top (z(\mu+\lambda-\mu^*-\lambda^*)+\mu^*+\lambda^*))-\nabla{f^*_t}(b_t^\top (\mu^*+\lambda^*)),  b_t^\top (\mu+\lambda-\mu^*-\lambda^*)\rangle\right] \middle| b_t\right] \\
    \ge &  z \underline{\cL}_f  \E \norm{ b_t^\top(\mu+\lambda-\mu^*-\lambda^*) }_2^2 \ge z \underline{\cL}_f \sigma_{\min}\norm{\mu+\lambda-\mu^*-\lambda^*}_{2}^2.
    \end{aligned}
\end{equation*}
\endproof

\subsection{Proof of Proposition \ref{prop:rademacher-prob}: Empirical Risk Minimization}\label{sec:erm}
In this proof, we generalize our discussion to a broader setting: we consider the convergence of classical ERM method. In many areas of statistics and machine learning research, we aims to solve the following problem, named Empirical Risk Minimization (ERM):  given $T$ empirical convex risk functions $\ell(\boldsymbol{\lambda},\xi_t): \mathbb{R}^m\to \mathbb{R},t\in[T] $ where $\xi_t$ are i.i.d realizations from an unknown distribution, with its population version $\mathfrak{L} (\boldsymbol{\lambda})= \E_{\xi} \ell(\boldsymbol{\lambda},\xi_t)$,  we seek to find a good parameter $\widehat{\boldsymbol{\lambda}} $ by minimizing the empirical risk 

\begin{equation*}
    \widehat{\boldsymbol{\lambda}} =\argmin_{\lambda \in \mathbb{R}^m} \Bar{\ell}(\boldsymbol{\lambda},\{\xi_t\}_{t=1}^{T} ) =\argmin_{\lambda \in \mathbb{R}^m} \frac{1}{T}\sum_{t=1}^{T}  \ell(\boldsymbol{\lambda},\xi_t)
\end{equation*}
as a proxy of the parameter that minimize the population risk $\boldsymbol{\lambda}^*=\argmin_{\boldsymbol{\lambda}} \mathfrak{L} (\boldsymbol{\lambda}) $. In statistic, such $\widehat{\boldsymbol{\lambda}}$ is also called M-estimator.

The following approach will show that, under second order growth condition of $\mathfrak{L} (\boldsymbol{\lambda})$, the ERM method provides a estimate $\widehat{\boldsymbol{\lambda}}$ with optimal convergence rate $\E \norm{\widehat{\boldsymbol{\lambda}} -{\boldsymbol{\lambda}}^* }^2=O(\frac{1}{T}) $


\begin{assumption}[Lipschitz continuity]\label{asm:erm-lip} Suppose the subgradient of each $\ell(\boldsymbol{\lambda},\xi_t)$ satisfies 
\begin{equation*}
\norm{\nabla\ell(\boldsymbol{\lambda},\xi_t)}\le L
\end{equation*}
\end{assumption}

\begin{assumption}[Second order growth]\label{asm:erm-growth} The population risk satisfies the following second order growth
\begin{equation*}
\begin{gathered}
\left\langle   \nabla\mathfrak{L} (\boldsymbol{\lambda})-\nabla\mathfrak{L} (\boldsymbol{\lambda}^*), \boldsymbol{\lambda}-\boldsymbol{\lambda}^* \right\rangle \ge \underline{\cL}_{\ell} \norm{\boldsymbol{\lambda}-\boldsymbol{\lambda}^*}^2_2 
\end{gathered}   
\end{equation*}
\end{assumption}

\begin{assumption}[Smoothness of the first moment]\label{asm:erm-moment} For any $\boldsymbol{\lambda}\in \mathbb{R}^m$, with $\delta =\norm{\boldsymbol{\lambda}-\boldsymbol{\lambda}^*} $, we have
\begin{equation*}
\E\sup_{\boldsymbol{\lambda}'\in \mathbb{B} ({\boldsymbol{\lambda}}^* , \delta) }  \norm{\nabla\ell(\boldsymbol{\lambda}',\xi_t)- \nabla\ell(\boldsymbol{\lambda}^*,\xi_t) }\le M \delta,
\end{equation*}
where $\mathbb{B}(\boldsymbol{\lambda}^*,r)=\left\{ \boldsymbol{\lambda}' : \norm{\boldsymbol{\lambda}'-\boldsymbol{\lambda}^*}\le r \right\} $.
\end{assumption}

To investigate the convergence of $\widehat{\boldsymbol{\lambda}}$, one classical approach is to compute the statistical complexity of function group $\left\{\ell(\boldsymbol{\lambda},\xi_t), \boldsymbol{\lambda}\in\Theta \right\}$ to get an uniform bound of $\sup_{\boldsymbol{\lambda}\in\Theta} \left[\Bar{\ell}(\boldsymbol{\lambda},\{\xi_t\}_{t=1}^{T} )- \mathfrak{L} (\boldsymbol{\lambda})\right]$. However, this approach may fail to reach the optimal convergence rate because it neglects the second order information. Instead, we consider the second order part of losses and improve our analyses by a localized argument near the optimal solution $\boldsymbol{\lambda}^*$. Equipped with localized Rademacher complexity which shares a similar idea as \cite{bartlett2005local}, we are able to derive a sharp local probabilistic bound of $\norm{\boldsymbol{\lambda}-\boldsymbol{\lambda}^*}$. To fix this idea, we define the second order part of our loss function:

$$
s(\boldsymbol{\lambda},\xi_t)= \ell(\boldsymbol{\lambda},\xi_t)- \langle \nabla \ell(\boldsymbol{\lambda}^*,\xi_t),  \boldsymbol{\lambda}-\boldsymbol{\lambda}^* \rangle - \ell(\boldsymbol{\lambda}^*,\xi_t), \ \bar{s}(\boldsymbol{\lambda})= \frac{1}{T}\sum_{t=1}^{T}s(\boldsymbol{\lambda},\xi_t)
$$
with its population version $S(\boldsymbol{\lambda}) = \E s(\boldsymbol{\lambda},\xi_t) =\mathfrak{L} (\boldsymbol{\lambda}) - \langle \nabla \mathfrak{L} (\boldsymbol{\lambda}),  \boldsymbol{\lambda}-\boldsymbol{\lambda}^* \rangle - \mathfrak{L} (\boldsymbol{\lambda}^*)$. Define localized Rademacher complexity of $s$ within a small neighbourhood of $\boldsymbol{\lambda}^*$ as

\begin{equation*}
    \mathcal{R}_\varepsilon = \E_{\xi} \E_\sigma \left[ \sup_{\boldsymbol{\lambda}\in \mathbb{B}(\boldsymbol{\lambda}^*,\varepsilon) } \frac{1}{T}\sum_{t=1}^{T} \sigma_t s(\boldsymbol{\lambda},\xi_t)  \right],
\end{equation*}
where $\sigma_t$ are independent Rademacher random variables. We have the following result:
\begin{Proposition}\label{prop:rademacher}
Under Assumption \ref{asm:erm-lip}-\ref{asm:erm-moment}, the following inequality holds
\begin{equation*}
     \mathcal{R}_\varepsilon \le \sqrt{2m \log(3K) }\frac{2L\varepsilon}{\sqrt{T}} +\frac{{M} \varepsilon^2 }{K},
\end{equation*}
for any constant $K>0$. Consequently, if $\varepsilon \ge \frac{64\sqrt{2}L }{\sqrt{T} \underline{\cL}_{\ell} } \sqrt{\log \frac{100{M} }{\underline{\cL}_{\ell}} }$, we have the following probabilistic bound:

\begin{equation*}
    \PP\left( \norm{ \widehat{\boldsymbol{\lambda}} -\boldsymbol{\lambda}^*}\ge \varepsilon \right)\le m\exp\left({-\frac{T\underline{\cL}_{\ell}^2\varepsilon^2  }{8L^2m}}\right)+ \exp\left({-\frac{T\underline{\cL}_{\ell}^2\varepsilon^2  }{5000L^2}}\right)
\end{equation*}

\end{Proposition}
\begin{proof}
Define $\mathcal{K}$ as a $\frac{\varepsilon}{K}$-cover of the set $\boldsymbol{\lambda}\in \mathbb{B}(\boldsymbol{\lambda}^*,\varepsilon)$, then by the covering number of a ball, it is valid that $\log \abs{\mathcal{K}} \le m\log(3 K)  $. Define a projection $\mathcal{K}(\boldsymbol{\lambda})$ that project each $\boldsymbol{\lambda}\in \mathbb{B}(\boldsymbol{\lambda}^*,\varepsilon)$ onto the closest element in the cover $\mathcal{K}$. By Assumption \ref{asm:erm-lip}, we have a uniform bound $\abs{s(\boldsymbol{\lambda},\xi_t)}\le 2L \varepsilon$. Then it follows that:

\begin{equation*}
    \begin{aligned}
     \mathcal{R}_\varepsilon = &\E_{\xi} \E_\sigma \left[ \sup_{\boldsymbol{\lambda}\in \mathbb{B}(\boldsymbol{\lambda}^*,\varepsilon) } \frac{1}{T}\sum_{t=1}^{T} \sigma_t s(\boldsymbol{\lambda},\xi_t)  \right] \\
     \le &\E_{\xi} \left[\E_\sigma \sup_{\boldsymbol{\lambda}\in \mathcal{K}}  \frac{1}{T}\sum_{t=1}^{T} \sigma_t s(\boldsymbol{\lambda},\xi_t) +\E_\sigma\sup_{\boldsymbol{\lambda}\in \mathbb{B}(\boldsymbol{\lambda}^*,\varepsilon),\boldsymbol{\lambda}'= \mathcal{K}(\boldsymbol{\lambda})}  \frac{1}{T}\sum_{t=1}^{T} \sigma_t (s(\boldsymbol{\lambda},\xi_t)-s(\boldsymbol{\lambda}',\xi_t))  \right] \\
      \le & \sqrt{2m \log(3K) }\frac{2L\varepsilon}{\sqrt{T}}+ \E_{\xi} \left[E_\sigma\sup_{\boldsymbol{\lambda}\in \mathbb{B}(\boldsymbol{\lambda}^*,\varepsilon),\boldsymbol{\lambda}'= \mathcal{K}(\boldsymbol{\lambda})}  \frac{1}{T}\sum_{t=1}^{T} \sigma_t (s(\boldsymbol{\lambda},\xi_t)-s(\boldsymbol{\lambda}',\xi_t))  \right],
    \end{aligned}
\end{equation*}
where the second inequality is by Massart's finite class lemma. We focus on controlling the second term by computing the first order moment of $\abs{s(\boldsymbol{\lambda},\xi_t)-s(\boldsymbol{\lambda}',\xi_t)}$:

\begin{equation*}
    \begin{aligned}
     \E &\sup_{\boldsymbol{\lambda}\in \mathbb{B}(\boldsymbol{\lambda}^*,\varepsilon),\boldsymbol{\lambda}'= \mathcal{K}(\boldsymbol{\lambda})}\abs{s(\boldsymbol{\lambda},\xi_t)-s(\boldsymbol{\lambda}',\xi_t)} =  \E\sup_{\boldsymbol{\lambda}\in \mathbb{B}(\boldsymbol{\lambda}^*,\varepsilon),\boldsymbol{\lambda}'= \mathcal{K}(\boldsymbol{\lambda})} \abs{ \ell(\boldsymbol{\lambda},\xi_t) - \ell(\boldsymbol{\lambda}',\xi_t)- \langle \nabla\ell(\boldsymbol{\lambda}^*,\xi_t)  , \boldsymbol{\lambda}-\boldsymbol{\lambda}'\rangle } \\
     &= \E \sup_{\boldsymbol{\lambda}\in \mathbb{B}(\boldsymbol{\lambda}^*,\varepsilon),\boldsymbol{\lambda}'= \mathcal{K}(\boldsymbol{\lambda})}
     \left(\int_{0}^1 v^\top(\nabla\ell(\boldsymbol{\lambda}'+vz,\xi_t)- \nabla\ell(\boldsymbol{\lambda}^*,\xi_t))  dz\right), \text{ where }v=\boldsymbol{\lambda}-\boldsymbol{\lambda}' \\
     &\le \sup \norm{v} \cdot \E\sup_{\boldsymbol{\lambda}\in \mathbb{B}(\boldsymbol{\lambda}^*,\varepsilon),\boldsymbol{\lambda}'= \mathcal{K}(\boldsymbol{\lambda})}
       \int_{0}^1 \norm{\nabla\ell(\boldsymbol{\lambda}'+vz,\xi_t)- \nabla\ell(\boldsymbol{\lambda}^*,\xi_t)} dz \\
       &\le \frac{\varepsilon}{K} \int_{0}^1 \E\sup_{\boldsymbol{\lambda}\in \mathbb{B}(\boldsymbol{\lambda}^*,\varepsilon),\boldsymbol{\lambda}'= \mathcal{K}(\boldsymbol{\lambda})}\norm{\nabla\ell(\boldsymbol{\lambda}'+vz,\xi_t)- \nabla\ell(\boldsymbol{\lambda}^*,\xi_t)} dz 
       \le \frac{{M}\varepsilon^2}{K},
    \end{aligned}
\end{equation*}
where the last inequality we use the Assumption \ref{asm:erm-moment}. Then it follows that 

\begin{equation*}
    \begin{aligned}
      \E_{\xi} &\left[E_\sigma\sup_{\boldsymbol{\lambda}\in \mathbb{B},\boldsymbol{\lambda}'= \mathcal{K}(\boldsymbol{\lambda})}  \frac{1}{T}\sum_{t=1}^{T} \sigma_t (s(\boldsymbol{\lambda},\xi_t)-s(\boldsymbol{\lambda}',\xi_t))  \right] \le \frac{ { \sum_{t=1}^T \E_\xi \sup_{\boldsymbol{\lambda}\in \mathbb{B},\boldsymbol{\lambda}'= \mathcal{K}(\boldsymbol{\lambda})} \abs{s(\boldsymbol{\lambda},\xi_t)-s(\boldsymbol{\lambda}',\xi_t)}  } }{{T}}\\
      &\le \frac{{M}\varepsilon^2}{K},
    \end{aligned}
\end{equation*}
which proves the first statement. We then prove the second statement by choosing a suitable $K$.

If $\widehat{\boldsymbol{\lambda}}$ which minimizes the empirical risk satisfies  $\norm{ \widehat{\boldsymbol{\lambda}} -\boldsymbol{\lambda}^*}\ge \varepsilon$, then by the convexity of $\bar{\ell}$, there exists a $\boldsymbol{\lambda}\in \mathbb{B}(\boldsymbol{\lambda}^*,\varepsilon) $ such that $\bar{\ell}(\boldsymbol{\lambda})-\bar{\ell}(\boldsymbol{\lambda}^*)\le 0$. Together with
the second order growth Assumption \ref{asm:erm-growth},  we have

\begin{equation*}
   \begin{aligned}
    \bar{s}(\boldsymbol{\lambda}) -S(\boldsymbol{\lambda}) & = \bar{\ell}(\boldsymbol{\lambda})-\bar{\ell}(\boldsymbol{\lambda}^*)- (\mathfrak{L} (\boldsymbol{\lambda}) - \mathfrak{L} (\boldsymbol{\lambda}^*)) - \langle \nabla\bar{\ell}(\boldsymbol{\lambda}^*)- \nabla \mathfrak{L} (\boldsymbol{\lambda}^*), \boldsymbol{\lambda} -\boldsymbol{\lambda}^* \rangle \\
    &\le -\frac{\underline{\cL}_{\ell}}{2}\varepsilon^2+ \norm{\nabla\bar{\ell}(\boldsymbol{\lambda}^*)- \nabla \mathfrak{L} (\boldsymbol{\lambda}^*)} \varepsilon.
    \end{aligned}
\end{equation*}
By Hoeffding's concentration inequality, we have

\begin{equation*}
    \PP\left( \norm{\nabla\bar{\ell}(\boldsymbol{\lambda}^*)- \nabla \mathfrak{L} (\boldsymbol{\lambda}^*)} \ge \frac{\underline{\cL}_{\ell}}{4}\varepsilon \right) \le m\exp\left({-\frac{T\underline{\cL}_{\ell}^2\varepsilon^2  }{8L^2m}}\right)
\end{equation*}
Define the event that inequality $\norm{\nabla\bar{\ell}(\boldsymbol{\lambda}^*)- \nabla \mathfrak{L} (\boldsymbol{\lambda}^*)} \ge \frac{\underline{\cL}_{\ell}}{4}\varepsilon$ holds as $\mathcal{E}_1$. Then under $\{\norm{ \widehat{\boldsymbol{\lambda}} -\boldsymbol{\lambda}^*}\ge \varepsilon\}\cap\mathcal{E}_1^c  $, we have 

\begin{equation*}
     \sup_{\boldsymbol{\lambda}\in \mathbb{B}(\boldsymbol{\lambda}^*,\varepsilon)  } \abs{\bar{s}(\boldsymbol{\lambda}) -S(\boldsymbol{\lambda})} \ge  \frac{\underline{\cL}_{\ell}}{4}\varepsilon^2.
\end{equation*}
Choosing $K=\frac{32 {M} }{\underline{\cL}_{\ell} }$. When $\varepsilon \ge \frac{64\sqrt{2}L }{\sqrt{T} \underline{\cL}_{\ell} } \sqrt{\log \frac{100{M} }{\underline{\cL}_{\ell}} } $, we have

\begin{equation*}
    2 \mathcal{R}_\varepsilon \le \frac{\underline{\cL}_{\ell}}{8}\varepsilon^2,
\end{equation*}
thus we have the following inequality:
\begin{equation*}
    \sup_{\boldsymbol{\lambda}\in \mathbb{B}(\boldsymbol{\lambda}^*,\varepsilon)  } \abs{\bar{s}(\boldsymbol{\lambda}) -S(\boldsymbol{\lambda})} \ge 2\mathcal{R}_\varepsilon + \frac{\underline{\cL}_{\ell}}{8}\varepsilon^2.
\end{equation*}
By the convergence theory of empirical process \citep{koltchinskii2011oracle,boucheron2005theory}, 
\begin{equation*}
     \PP\left(\sup_{\boldsymbol{\lambda}\in \mathbb{B}(\boldsymbol{\lambda}^*,\varepsilon)  } \abs{\bar{s}(\boldsymbol{\lambda}) -S(\boldsymbol{\lambda})} \ge 2\mathcal{R}_\varepsilon + \frac{6L\varepsilon z}{\sqrt{T}} \right)\le \exp{(-\frac{z^2}{2})},
\end{equation*}
thus we conclude that $\PP(\{\norm{ \widehat{\boldsymbol{\lambda}} -\boldsymbol{\lambda}^*}\ge \varepsilon\}\cap\mathcal{E}_1^c  )\le \exp\left({-\frac{T\underline{\cL}_{\ell}^2\varepsilon^2  }{5000L^2}}\right) $, i.e.,

\begin{equation*}
    \PP\left( \norm{ \widehat{\boldsymbol{\lambda}} -\boldsymbol{\lambda}^*}\ge \varepsilon \right)\le m\exp\left({-\frac{T\underline{\cL}_{\ell}^2\varepsilon^2  }{8L^2m}}\right)+ \exp\left({-\frac{T\underline{\cL}_{\ell}^2\varepsilon^2  }{5000L^2}}\right).
\end{equation*}
\end{proof}

\begin{theorem}
The following bound holds for the convergence rate of $\widehat{\boldsymbol{\lambda}}$:

\begin{equation*}
    \E\norm{ \widehat{\boldsymbol{\lambda}} -\boldsymbol{\lambda}^*}^2\le \left(\frac{512L^2 }{\underline{\cL}_{\ell}^2 }\log \frac{100{M} }{\underline{\cL}_{\ell}} + \frac{8 m^2 L^2 +5000L^2}{\underline{\cL}_{\ell}^2 }\right)\frac{1}{T}
\end{equation*}
\end{theorem}

\begin{proof}
This is a direct consequence of Proposition \ref{prop:rademacher}. By the integral formula of expectation, we have
\begin{equation*}
    \begin{aligned}
      &\E\norm{ \widehat{\boldsymbol{\lambda}} -\boldsymbol{\lambda}^*}^2 = \int_{0}^\infty \PP(\norm{ \widehat{\boldsymbol{\lambda}} -\boldsymbol{\lambda}^*}\ge \sqrt{z} )dz \\
      &\le \left(\frac{512L^2 }{\underline{\cL}_{\ell}^2 }\log \frac{100{M} }{\underline{\cL}_{\ell}}\right)\frac{1}{T} + \int_{c}^\infty \PP(\norm{ \widehat{\boldsymbol{\lambda}} -\boldsymbol{\lambda}^*}^2\ge z )dz, \text{ where } c=\left(\frac{512L^2 }{\underline{\cL}_{\ell}^2 }\log \frac{100{M} }{\underline{\cL}_{\ell}}\right)\frac{1}{T} \\
      &\le \left(\frac{512L^2 }{\underline{\cL}_{\ell}^2 }\log \frac{100{M} }{\underline{\cL}_{\ell}}\right)\frac{1}{T}+ \int_{0}^\infty m\exp\left({-\frac{T\underline{\cL}_{\ell}^2 z }{8L^2m}}\right)+ \exp\left({-\frac{T\underline{\cL}_{\ell}^2 z }{5000L^2}}\right) dz \\
      &\le \left(\frac{512L^2 }{\underline{\cL}_{\ell}^2 }\log \frac{100{M} }{\underline{\cL}_{\ell}} + \frac{8 m^2 L^2 +5000L^2}{\underline{\cL}_{\ell}^2 }\right)\frac{1}{T},
    \end{aligned}
\end{equation*}
which finishes the proof.
\end{proof}

By simply equating $\ell(\boldsymbol{\lambda},\xi_t)$ with Fenchel conjugate $f^*_t(\boldsymbol{\lambda})$ in online convex allocation, we are able to prove the optimal dual convergence rate $O(\frac{1}{T})$. Notice that, here our Assumption \ref{asm:k-thmoment} is equivalent to the Assumption \ref{asm:erm-moment} we used in the proof.

\subsection{Proof of Proposition \ref{prop:dual_conv_prob}}\label{app:dual_conv}
For any given $\varepsilon>0$, we define the neighbourhood of $\nu^*$ for given $\varepsilon$ as $$\Omega_{\nu}(\varepsilon):=\left\{\nu: \norm{\nu-\nu^*}_\infty \le 4H \varepsilon \right\}.$$ We then construct a good event $\mathcal{E}(\varepsilon)$ with pro only depends on $\varepsilon$ that under this good event, the convex function $\bar s_T(\nu,d) $ is larger than a quadratic function in $ \Omega_{\nu}(\varepsilon)$, which serves as a lower bound of dual function.
The construction of this good event $\mathcal{E}(\varepsilon)$ is based on the following splitting scheme and concentration of objective function:
\begin{enumerate}
    \item We first split $\Omega_{\nu}(\varepsilon)$ into multiple cubes layer by layer and in each single cube, we control the difference of second order terms between all the $\nu$ in the cube and the central point of the cube.
    \item Then we uniformly control the deviation of second order terms for all central points.
\end{enumerate}




We now discuss the second order term $\Bar{s}_T(\nu,d)$ defined in \eqref{eq:decomp}. To derive an uniform lower bound of $\bar s_T(\nu,d)$, we do the following split on $\Omega_{\boldsymbol{\nu}}(\varepsilon)$ according to \cite{huber1967under}. 

Define set $\Omega^k_{\nu}(\varepsilon)=\left\{\nu|\norm{\nu-\nu^*}_{\infty}\le q^k 4H \varepsilon,\norm{\mu-\mu^*}_{\infty}\le q^k 4H \varepsilon\right\}, 0\le k\le N  $, where $q\in (0,1)$ and $N\in \mathbb{N}_+$ will be identified later. This split divides $\Omega_{\nu}(\varepsilon)$ into $N$ layers $\{\Omega^{k-1}_{\nu}(\varepsilon)\backslash \Omega^k_{\nu}(\varepsilon)\}_{k=1}^{N}$ and a center cube $\Omega^N_{\nu}(\varepsilon)$. We then split each layer into disjoint cubes $\{\bar\Omega^{kl}(\varepsilon)\}_{l=1}^{l_k}$ with edges of length $(1-q)q^{k-1}4H \varepsilon$, and denote the center cube by $\bar\Omega^{N1}(\varepsilon)$. \cite{huber1967under} shows that there are at most $(2N)^{m}$ cubes. This split is not unique to get the desired convergence order but it makes our result tighter. The center of each cube $\bar\Omega^{kl}(\varepsilon)$ is defined as $\nu_{kl}$.
Define $\bar\nu_{kl}=\arg\max_{\nu\in \bar\Omega^{kl}(\varepsilon)}\norm{\nu-\nu^*}_2$, and 
\begin{equation}
  \begin{aligned}
     \Gamma^{kl}_t &=\max_{\nu\in \bar\Omega^{kl}(\varepsilon)}\left[s_t(\nu_{kl},d)-s_t(\nu,d)\right] \\
  \end{aligned}  
\end{equation}
Then for $k \in \{0,\dots,N-1\}$, and $\forall \nu\in \bar\Omega^{kl}(\varepsilon)$, $\bar{s}_T$ can be decomposed as 
\begin{equation}
    \label{eq:sec_decomp}
    \begin{aligned}
       \bar{s}_T(\nu,d)=&\frac{1}{T}\sum_{t=1}^{T}s_t(\nu,d)- \frac{1}{T}\sum_{t=1}^{T}s_t(\nu_{kl},d)  
       + \frac{1}{T}\sum_{t=1}^{T}s_t(\nu_{kl},d)\\
       \ge & \underbrace{\E s_t(\nu_{kl},d) -\E \Gamma^{kl}_t  }_{\ref{eq:sec_decomp}.1} 
       + \underbrace{-\frac{1}{T}\sum_{t=1}^{T}\Gamma^{kl}_t +\E \Gamma^{kl}_t}_{\ref{eq:sec_decomp}.2}
       + \underbrace{\frac{1}{T}\sum_{t=1}^{T}s_t(\nu_{kl},d)-\E s_t(\nu_{kl},d)}_{\ref{eq:sec_decomp}.3}
    \end{aligned}
\end{equation}
We study the lower bounds of these 3 terms in \eqref{eq:sec_decomp} respectively.

\textbf{Lower bound of \ref{eq:sec_decomp}.1}:
\begin{equation}
\label{gamma_1}
\begin{aligned}
   \E \Gamma^{kl}_t=&\E \max_{\nu\in \bar\Omega^{kl}(\varepsilon)} \left[f_t^*(b_t^\top(\lambda_{kl}+\mu_{kl}))-f_t^*(b_t^\top(\lambda_{}+\mu_{})) -\nabla f_t^*(\lambda^*+\mu^*)^\top b_t^\top (\lambda_{kl}+\mu_{kl}-\lambda-\mu)\right]\\
   = & \E \max_{\nu\in \bar\Omega^{kl}(\varepsilon)} \left[\int_0^1v_1^\top(\nu)\left[\nabla f_t^*(b_t^\top(\lambda+\mu)+v_1\cdot z)-\nabla f_t^*(b_t^\top(\lambda^*+\mu^*))\right]dz  \right]\\
   \le & \Bar{b}\max_{\nu\in \bar\Omega^{kl}(\varepsilon)} \norm{\nu-\nu_{kl} }_2\cdot \left[\int_0^1\E \max_{\nu\in \bar\Omega^{kl}(\varepsilon)}\norm{\nabla f_t^*(b_t^\top(\nu)+v_1\cdot z)-\nabla f_t^*(b_t^\top(\nu^*))}dz  \right] \\
   \le & L_1\bar b^2 (\max_{\nu\in \bar\Omega^{kl}(\varepsilon)}\norm{\nu_{kl}-\nu}_2)\cdot \norm{\bar\nu_{kl}-\nu^*}_2
\end{aligned}
\end{equation}
where $v_1(\nu)=b_t^\top(\lambda_{kl}+\mu_{kl}-\lambda_{}-\mu_{})$ is the direction vector, and the second inequality is obtained by Assumption \ref{asm:k-thmoment}.

According to Proposition \ref{prop:quadratic_growth}, we have
\begin{equation*}
    \begin{aligned}
       \E s_t(\nu_{kl},d) \ge \underline{\cL}_{s}\norm{\nu_{kl}-\nu^*}_2^2 
    \end{aligned}
\end{equation*}
So for the first term, it is clear that
\begin{equation}
\label{eq:sec_term_1}
    \begin{aligned}
        -\E \Gamma^{kl}_t+\E s_t(\nu_{kl},d) \ge  \underline{\cL}_{s}(\norm{\nu_{kl}-\nu^*}_2^2-L_1\bar b^2 (\max_{\nu\in \bar\Omega^{kl}(\varepsilon)}\norm{\nu_{kl}-\nu}_2)\cdot \norm{\bar\nu_{kl}-\nu^*}_2 \\
    \end{aligned}
\end{equation}

\textbf{Lower bound of \ref{eq:sec_decomp}.2}: Since the gradients $\norm{\nabla f_t^*}_{\infty}$ is bounded by $D$, by the integral form of $\Gamma_{kl}$ in the second equality of \ref{gamma_1}, we also have:

\begin{equation*}
    \begin{aligned}
       \norm{\Gamma^{kl}_t}_2\le& 2\sqrt{n} \bar b D \max_{\nu\in \bar\Omega^{kl}(\varepsilon)} \norm{\nu-\nu_{kl}}_2,
    \end{aligned}
\end{equation*}
for any $t\in [T]$. Define event 
\begin{equation}
\label{eq:event_gamma}
    \mathcal{E}_{kl,1}(\varepsilon_1)=\left\{-\frac{1}{T}\sum_{t=1}^{T}\Gamma^{kl}_t+ \E \Gamma^{kl}_t< -2\varepsilon_1 \sqrt{n} \bar b D \max_{\nu\in \bar\Omega^{kl}(\varepsilon)} \norm{\nu-\nu_{kl}}_2 \right\}.
\end{equation}

Then according to Hoeffding's inequality, $\PP(\mathcal{E}_{kl,1}(\varepsilon_1))\le \exp(-\frac{T\varepsilon^2_1 }{2})$

\textbf{Lower bound of \ref{eq:sec_decomp}.3}:
We calculate the norm of each $s_t(\nu_{kl},d)$:

\begin{equation}
\label{eq:sec_term_3}
    \begin{aligned}
    \norm{s_t(\nu_{kl},d)}_2=&\left\| \left[\int_0^1v_2^\top\left[\nabla f_t^*(b_t(\lambda^*+\mu^*)+v_2\cdot z)-\nabla f_t^*(\lambda^*+\mu^*)\right]dz \right]dz\right\|_2 \\
   \le &2\sqrt{n} \bar b D \norm{\bar\nu_{kl}-\nu^*}_2,
\end{aligned}
\end{equation}
for any $t\in [T]$, where $v_2=b_t^\top(\nu_{kl}-\nu^*_{})$ is the direction vector. Define event 
\begin{equation}\label{eq:event_s}
    \mathcal{E}_{kl,2}(\varepsilon_2)=\left\{ \frac{1}{T}\sum_{t=1}^{T}s_t(\nu_{kl},d)-\E s_t(\nu_{kl},d)<-2\varepsilon_2 \sqrt{n}\bar b D \norm{\bar\nu_{kl}-\nu^*}_2\right\}.
\end{equation}
Then we have $\PP(\mathcal{E}_{kl,2})\le\exp(-\frac{T\varepsilon^2_2}{2})$ by Hoeffding's inequality. Now we would like to make all the quantities in the lower bound uniform by leveraging the splitting scheme. From the split, we have 

\begin{equation*}
    \begin{aligned}
        \max _{\nu \in \bar\Omega^{k l}(\varepsilon)}\left\|\nu-\nu_{k l}\right\|_{2}&=\sqrt{m}(1-q) q^{k-1} 4H\varepsilon,\\
         \left\|\nu^{*}-\nu_{k l}\right\|_{2} &\geq q^{k} 4H\varepsilon.
    \end{aligned}
\end{equation*}
And also 
\begin{equation*}
    \begin{aligned}
\left\|\nu^{*}-\bar{\nu}_{k l}\right\|_{2} & \leq\left\|\nu^{*}-\nu_{k l}\right\|_{2}+\max _{\bar\Omega^{k l}(\varepsilon)}\left\|\nu-\nu_{k l}\right\|_{2} \\
\max _{\nu \in \bar\Omega^{k l}(\varepsilon)}\left\|\nu-\nu_{k l}\right\|_{2}&\le \frac{\sqrt{m}(1-q)}{q}\norm{\nu^*-\nu_{kl}}_2\le \frac{\sqrt{m}(1-q)}{q}\norm{\nu^*-\bar\nu_{kl}}_2.
    \end{aligned}
\end{equation*}

Thus  we have the following result for the \ref{eq:sec_decomp}.1 term in \eqref{eq:sec_term_1}.

$$
\begin{aligned}
        -\E \Gamma^{kl}_t+\E s_t(\nu_{kl},d) \ge & \underline{\cL}_{s}(\norm{\nu_{kl}-\nu^*}_2^2-L_1\bar b^2 (\max_{\nu\in \bar\Omega^{kl}(\varepsilon)}\norm{\nu_{kl}-\nu}_2)\cdot \norm{\bar\nu_{kl}-\nu^*}_2\\
        \ge & \frac{\underline{\cL}_{s}}{\left(1+\frac{\sqrt{m}(1-q)}{q}\right)^2} \norm{\Bar{\nu}_{kl}-\nu^*}_2^2- \frac{\sqrt{m}(1-q)}{q} \cdot L_1\bar b^2\norm{\Bar{\nu}_{kl}-\nu^*}_2^2
\end{aligned}
$$

So there exists $\underline{q}=\frac{\sqrt{m}}{\sqrt{m}+ 1 \wedge \frac{\underline{\cL}_s}{4L_1\bar b^2} } $ such that when $q\ge \underline{q}$, $\frac{\sqrt{m}(1-q)}{q}\le 1 \wedge \frac{\underline{\cL}_s}{4L_1\bar b^2}$, and  
$$ \frac{\underline{\cL}_{s}}{(1+\frac{\sqrt{m}(1-q)}{q})^2}-\frac{\sqrt{m}(1-q)}{q} \cdot L_1\bar b^2 \ge \underline{\cL}_{s}/2.
$$

Choose $q=\underline{q}\vee\frac{1}{2}$. Then for the \ref{eq:sec_decomp}.1 term in \eqref{eq:sec_term_1} we have 
\begin{equation}
    \label{eq:sec_lb_1}
   -\E \Gamma^{kl}_t+\E s_t(\nu_{kl},d)\ge \frac{\underline{\cL}_{s}}{2}\norm{\bar\nu_{kl}-\nu^*}_2^2.
\end{equation}

Let $\varepsilon_1=\varepsilon_2=\sqrt{m\log m}\varepsilon$. For \ref{eq:sec_decomp}.2, under event $\mathcal{E}_{kl,1}^c(\varepsilon_1)$ in \eqref{eq:event_gamma} we have

\begin{equation}\label{eq:sec_lb_2}
    \begin{aligned}
         -\frac{1}{T}\sum_{t=1}^{T}\Gamma^{kl}_t+ \E \Gamma^{kl}_t &\ge -2\varepsilon \sqrt{nm\log m} \bar b D  (\max_{\nu\in \bar\Omega^{kl}(\varepsilon)}\norm{\nu-\nu^*}_2) \\
         & \ge -2 \varepsilon  \sqrt{nm\log m} \bar b D \frac{\sqrt{m}(1-q)}{q} \norm{\bar\nu_{kl}-\nu^*}_2.
    \end{aligned}
\end{equation}

For \ref{eq:sec_decomp}.3, under event $\mathcal{E}_{kl,2}^c(\varepsilon)$ in \eqref{eq:event_s} we have

\begin{equation}\label{eq:sec_lb_3}
    \frac{1}{T}\sum_{t=1}^{T}s_t(\nu_{kl},d)-\E s_t(\nu_{kl},d)\ge-2\varepsilon \sqrt{nm\log m}\bar b D \norm{\bar\nu_{kl}-\nu^*}_2.
\end{equation}

Now we combine second order lower bounds in \eqref{eq:sec_lb_1}, \eqref{eq:sec_lb_2}, \eqref{eq:sec_lb_3} together under the desired good event
\begin{equation*}
    \begin{aligned}
         \mathcal{E}(\varepsilon)= \cap_{k=1}^{N}\cap_{l}(\mathcal{E}_{kl,1}^c(\varepsilon)\cap\mathcal{E}^c_{kl,2}(\varepsilon)),
    \end{aligned}
\end{equation*}
where we choose $N$ by setting the radius of $\bar\Omega^{N1}(\varepsilon)$: $\sqrt{m} q^{N} 4H\varepsilon \le 2H\varepsilon$, i.e., $$N= \lceil\log_{q}(\frac{1}{2\sqrt{m}}) \rceil\le \frac{4L_1\bar b^2 }{ \underline{\cL}_s}\sqrt{m}\log\sqrt{m}.$$ 
Under $ \mathcal{E}(\varepsilon)$, for any $\nu\in\Omega_{\nu}(\varepsilon)$ satisfying $\norm{\nu-\nu^*}_2> 2H\varepsilon$, there exists $k=\{0,\dots,N-1\}$ and $l$ such that $\nu\in \bar\Omega^{kl}(\varepsilon)$, and 
\begin{equation*}
    \begin{aligned}
             \bar s_T(\nu,d) \ge& \frac{\underline{\cL}_{s}}{2}\norm{\bar\nu_{kl}-\nu^*}_2^2 -2\varepsilon \sqrt{n}\Bar{b}D(1+\frac{\sqrt{m}(1-q)}{q})\norm{\bar\nu_{kl}-\nu^*}_2 \\
             \ge & \frac{\underline{\cL}_{s}}{2}\norm{\bar\nu_{kl}-\nu^*}_2^2 - 4\varepsilon \sqrt{n m\log m}\Bar{b}D \norm{\bar\nu_{kl}-\nu^*}_2
     \\
    \end{aligned}
\end{equation*}

Compute the probability of $\mathcal{E}(\varepsilon)$ we can show that 
$$
\begin{aligned}
     \PP(\mathcal{E}(\varepsilon))\ge & 1 - \sum_{0\le k \le N-1,\ l} \left(\PP(\mathcal{E}_{kl,1}(\varepsilon))+ \PP(\mathcal{E}_{kl,2}(\varepsilon))\right) \\
    \ge & 1- 2(2\lceil\log_{q}(\frac{1}{2\sqrt{m}}) \rceil)^{m} \exp(-\frac{m\log m T\varepsilon^2}{2})\ge 1- 2\exp({-\frac{m\log m(T\varepsilon^2-1) }{4}})
\end{aligned}
$$
The following Lemma can show the concentration of first order term:

\begin{lemma}\label{lemma:first_order}
Under Assumptions \ref{asm:basic}-\ref{asm:nondegeneracy}, the concentration of the gradient in the first order term $\bar \phi_{T,\nu}(\nu^*,d)$ satisfies
\begin{equation}
    \begin{aligned}
    \PP(\norm{\bar \phi_T(\nu^*,d)-\nabla D_{\nu}(\nu^*,d)}_2>\epsilon)\le 2m\exp (-\frac{T\epsilon^2}{2m \sqrt{n}\bar b D}),
    \end{aligned}
\end{equation}
for any $\epsilon>0$.
\end{lemma}
\proof{Proof:} According to Hoeffding's inequality, we have 
\begin{equation*}
\begin{aligned}
     \PP (\abs{\left(\bar  \phi_T(\nu^*,d)\right)_i-\left(\nabla D_{\nu}(\nu^*,d)\right)_i}>\epsilon/\sqrt{m})\le 2 \exp (-\frac{T\epsilon^2}{2m\sqrt{n}\Bar{b}D })
    \end{aligned}
\end{equation*}
for $\forall i \in [m]$. Combining all $m$ dimensions together we conclude that
$$
\PP(\norm{\frac{1}{T}\sum_{t=1}^{T}\phi_t(\blambda^*,d)-\nabla D(\blambda^*,d)}>\varepsilon)\le 2m\exp (-\frac{T\varepsilon^2}{2m\sqrt{n}\Bar{b}D}).
$$
\endproof

For the first order term, denote event $\mathcal{E}_0(\varepsilon_0)=\{\norm{\Bar{\phi}_T(\blambda,d)-\nabla D(\blambda^*,d)}_2>\varepsilon_0\}$. Take $\varepsilon_0=\varepsilon \sqrt{n m\log m}\Bar{b}D$ Then by Lemma \ref{lemma:first_order}, we have $$\PP(\mathcal{E}_0(\varepsilon_0))\le 2m\exp (-\frac{T\sqrt{n}\log m\Bar{b}D\varepsilon^2 }{2 }).$$ 
Under event $\mathcal{E}_0^c(\varepsilon)\cap \mathcal{E}(\varepsilon)$, we have 
\begin{equation}\label{eq:first_order_lb}
    \begin{aligned}
     \Bar{D}_T(\blambda,d)-\Bar{D}_T(\blambda^*,d)
     &\ge \Bar{s}_T(\nu,d)+ \left\langle\Bar{\phi}_{T,\nu}(\nu^*,d)-\nabla_{\nu} D(\blambda^*,d), \nu -\nu^*\right\rangle \\
     &\ge \frac{\sigma_{\min} \underline{\cL}_f}{4} \norm{ \nu'-\nu^*}_2^2 - 5\varepsilon \sqrt{n m\log m}\Bar{b}D \norm{\bar\nu_{kl}-\nu^*}_2 \\
     &=\frac{\sigma_{\min} \underline{\cL}_f}{4} \norm{ \nu'-\nu^*}_2^2 -\frac{\sigma_{\min} \underline{\cL}_f}{4}\cdot 2H\varepsilon \norm{ \nu'-\nu^*}_2,
 \end{aligned}
\end{equation}
where we define $H=10\sqrt{n m\log m}\Bar{b}D/(\sigma_{\min} \underline{\cL}_f)$.

We now show how the first inequality leads to the probabilistic bound of $\norm{\nu-\nu^*}_2$. By the definition of $\Bar{s}_T(\nu,d)$, if the first inequality holds, an argument similar to \eqref{eq:sT-ub} will lead to 
\begin{equation}\label{eq:dual-conv-partition}
    \begin{aligned}
     \Bar{D}_T(\blambda,d)-\Bar{D}_T(\blambda^*,d)&\ge \Bar{s}_T(\nu,d)+ \left\langle\Bar{\phi}_T(\blambda^*,d), \blambda-\blambda^*\right\rangle \\
     &\ge \Bar{s}_T(\nu,d)+ \left\langle\Bar{\phi}_{T,\nu}(\nu^*,d)-\nabla_{\nu} D(\blambda^*,d), \nu -\nu^*\right\rangle \\
     &\ge \frac{\sigma_{\min} \underline{\cL}_f}{4}\left( \norm{ \nu'-\nu^*}_2^2 -2 H\varepsilon \norm{ \nu'-\nu^*}_2\right)
 \end{aligned}
\end{equation}
 with probability at least $1-  2m\exp (-\frac{T\sqrt{n}\log m\Bar{b}D\varepsilon^2 }{2 })$,
where we use the optimality of $\blambda^*$ and concentration of gradient. If $\nu_T^{\ast}$ is part of the optimal solution, then we claim that, the dual optimal solution $\nu_T^{\ast}$ must have $\norm{\nu^*_T-\nu^*}_2\le 2H\varepsilon$. Otherwise:
\begin{enumerate}
    \item If has $2H\varepsilon<\norm{\nu^*_T-\nu^*}_2\le 4H\varepsilon$, then there will be a ${\nu^*_T}'$ such that $\norm{{\nu^*_T}'-\nu^*}_2\ge \norm{\nu^*_T-\nu^*}_2 > 2H\varepsilon$, and 
    \begin{equation*}
    \begin{aligned}
         \Bar{D}_T(\blambda^*_T,d)-\Bar{D}_T(\blambda^*,d)\ge \frac{\sigma_{\min} \underline{\cL}_f}{4}\left( \norm{{\nu^*_T}'-\nu^*}_2^2 -2 H\varepsilon \norm{{\nu^*_T}'-\nu^*}_2\right)>0,
    \end{aligned}
\end{equation*}
which contradicts the optimality of $\blambda^*_T$.
    \item If  $\norm{\nu^*_T-\nu^*}_2> 4H\varepsilon$, since $\Bar{D}_T(\blambda^*,d)-\Bar{D}_T(\blambda^*,d)=0$ and $\Bar{D}_T(\blambda^*_T,d)-\Bar{D}_T(\blambda^*,d)\le 0$, by the convexity of $\Bar{D}_T$ we can always find a $\tilde{\blambda}$ such that  $2H\varepsilon<\norm{\tilde{\nu}-\nu^*}_2\le 4H\varepsilon$ and $\Bar{D}_T(\tilde{\blambda},d)-\Bar{D}_T(\blambda^*,d)\le 0$.
    However, according to  \eqref{eq:dual-conv-partition}, we have $$\Bar{D}_T(\tilde\blambda,d)-\Bar{D}_T(\blambda^*,d)\ge \frac{\sigma_{\min} \underline{\cL}_f}{4}\left( \norm{{\tilde{\nu}}'-\nu^*}_2^2 -2 H\varepsilon \norm{{\tilde{\nu}}'-\nu^*}_2\right)>0,$$
    which also ends up with a contradiction.
\end{enumerate}
To better present our idea to readers, we draw a figure here. This clearly shows that when our empirical function is lower bounded by a quadratic function with $H\varepsilon$ as the axis of symmetry, we essentially have $\norm{\nu^*_T-\nu^*}_2\le 2H\varepsilon$. 
\begin{figure}[!htbp]
    \centering
    \includegraphics[width = 0.7\linewidth]{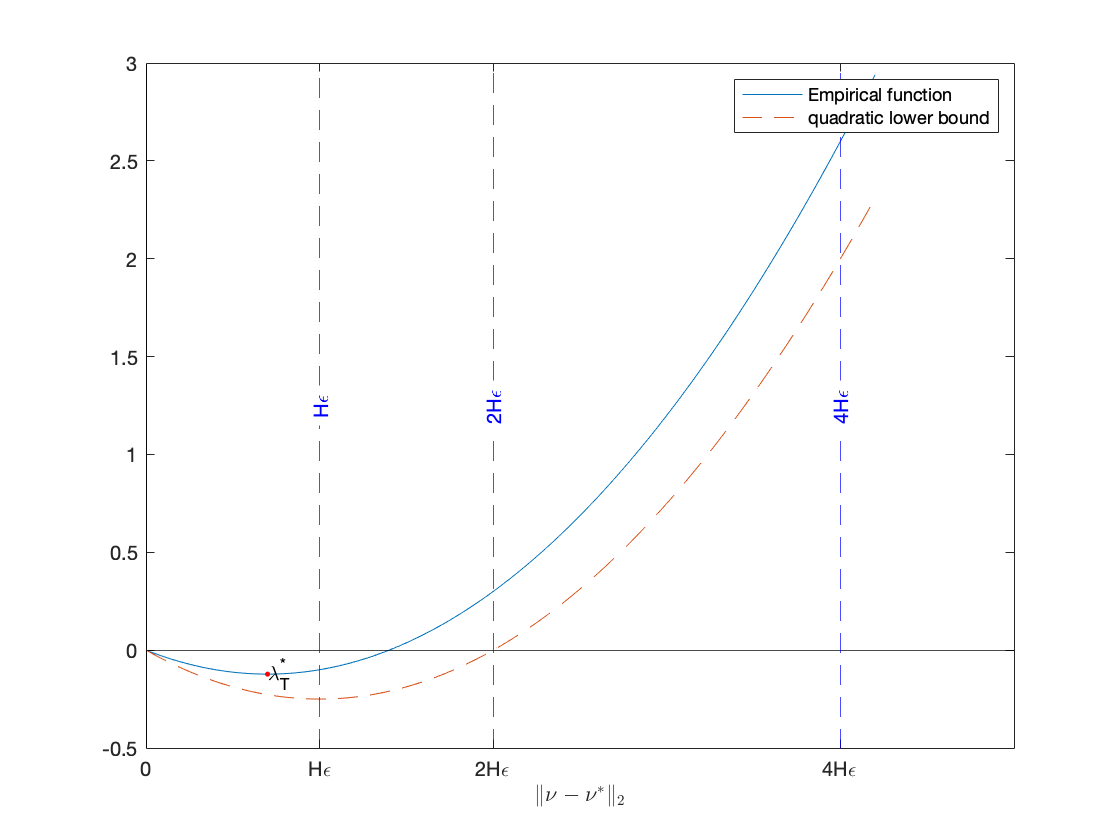}
    \caption{The value of $\Bar{D}_T$ with respect to $\norm{{{\nu}}-\nu^*}_2$. Since the quadratic lower bound has $H\varepsilon$ as the axis of symmetry, we have $ \Bar{D}_T(\blambda,d)-\Bar{D}_T(\blambda^*,d)>0$ for $\norm{\nu-\nu^*}_2> 2H\varepsilon$. }
    \label{fig:draw-quad}
\end{figure}

\subsection{Proof of Theorem \ref{thm:dual_conv}}\label{proof:dual_conv}
By the tail expectation formula, for constant $H>0$, we have
$$
\E \norm{\nu^*_{T}-\nu^*}^2_2=4H^2\int_{0}^{\infty}\PP(\norm{\nu^*_{T}-\nu^*}^2_2 > 4H^2 z)d z
$$
According to the probabilistic bound in Proposition~\ref{prop:dual_conv_prob}, for any $z>0$,
$$\PP(\norm{\nu^*_{T}-\nu^*}^2_2 > 4H^2 z)\le 2m\exp \left(-\frac{T\sqrt{n}\log m\Bar{b}D\varepsilon^2 }{2 }\right)\vee 1 + 2\exp\left({-\frac{m\log m(T\varepsilon^2-1) }{4}}\right)\vee 1.$$
Then, calculating the integral,  we get
$$
\begin{aligned}
    \E &(\norm{\nu^*_{T}-\nu^*}^2_2) = H^2\int_{0}^{\infty}\PP(\norm{\nu^*_{T}-\nu^*}^2_2\ge  4H^2 z)dz \\
 \le &\frac{400 n m\log m\Bar{b}^2D^2}{\sigma_{\min}^2 \underline{\cL}_f^2}  \int_{2/(T\sqrt{n}\Bar{b}D)}^{\infty} \left[ 2\exp \left(-\frac{T\sqrt{n}\log m\Bar{b}D z }{2 }+\log m\right)+\frac{2}{T\sqrt{n}\Bar{b}D} \right] d z   \\
& + \frac{400 n m\log m\Bar{b}^2D^2}{\sigma_{\min}^2 \underline{\cL}_f^2}  \int_{ \frac{1}{T} }^{\infty} \left[2\exp\left({-\frac{m\log m(Tz-1) }{4}}\right)+\frac{1}{T}\right] d z  \\ 
\le & C_1 \frac{\Bar{b}^2 D^2}{\sigma_{\min}^2 \underline{\cL}_f^2} \frac{nm\log m }{T}  \\
\end{aligned}
$$

For the optimality of the $O(T^{-1})$ order, let us consider a non-regularized case when $x\in[0,1]$ and $f_t(x):=f(x,\xi_t):=-(x-2\xi_t)^2/4+\xi_t^2$, with the single constraint $d=1/2$ and cost $b_t=1$. The dual problem is 
\begin{equation*}
    D_t(\lambda)=\begin{cases}
    \frac{1}{2}\lambda & \text{ if } \lambda>\xi_t \\
    -\frac{1}{4}+\xi_t-\frac{1}{2}\lambda &\text{ if }  \lambda<\xi_t-\frac{1}{2} \\
    \lambda^2-2(\xi_t-\frac{1}{4})\lambda+\xi_t^2 &\text{ if } \xi_t-\frac{1}{2}\le\lambda\le\xi_t.
    \end{cases}
\end{equation*}
Let $\xi_t$ be any distribution varies within $[1/2,\ 3/4]$ with variance $\sigma^2_{\xi}>0$. Then, for any $t$, we have $\xi_t-1/4\in [1/4,\ 1/2]\subseteq [\xi_t-1/2,\xi_t]$. Thus, for the sample average $\Bar{D}_T(\lambda):=T^{-1}\sum_{t=1}^{T}D_t(\lambda)$, when $\lambda\in[1/4,\ 1/2]$, $\Bar{D}_T(\lambda):=\lambda^2-2(\Bar{\xi_T}-1/4)\lambda+\Bar{\xi^2_T}$ with the optimal solution being $\lambda^*_T:=\Bar{\xi_T}-1/4$. We have $\E (\lambda^*_T-\lambda^*)^2\ge \operatorname{Var}(\Bar{\xi_T})=\sigma^2_{\xi}/T$. This shows that our $O(T^{-1})$ dual convergence rate is indeed optimal.

\subsection{Proof of  Theorem \ref{thm:dual_conv_approx}}

Recall that, by the proof of Proposition \ref{prop:dual_conv_prob}, the convex function $\Bar{D}_T$ is larger than a quadratic function in a neighborhood of $\nu^*$ with a high probability claimed there. Then, for any $\epsilon$ satisfying $\epsilon< 2H^2\varepsilon^2{\sigma_{\min} \underline{\cL}_f}$, with the same high probability, the $\epsilon$-optimal solution must belong to $\Omega_{\nu}(\varepsilon)$, because, for all the points in the border $\norm{\nu-\nu^*}_2=4H\varepsilon$, we already have $\Bar{D}_T(\blambda,d)-\Bar{D}_T(\blambda^*,d)\ge 2H^2\varepsilon^2{\sigma_{\min} \underline{\cL}_f}$. Then, with the same high probability, it follows that 
\begin{equation*}
    \begin{aligned}
         \epsilon\ge \Bar{D}_T(\blambda_T^{\epsilon},d)-\Bar{D}_T(\blambda^*,d)\ge \frac{\sigma_{\min} \underline{\cL}_f}{4} \norm{ \nu_T^{\epsilon\prime }-\nu^*}_2^2 -\frac{\sigma_{\min} \underline{\cL}_f}{4}\cdot 2H\varepsilon \norm{ \nu_T^{\epsilon\prime}-\nu^*}_2,
    \end{aligned}
\end{equation*}
which suggests that $\norm{\nu_T^{\epsilon}-\nu^*}_2\le\norm{ \nu_T^{\epsilon\prime }-\nu^*}_2\le H\varepsilon+\big(H^2\varepsilon^2+4\epsilon/(\sigma_{\min}\underline{\cL}_f) \big)^{1/2}$. Still, applying the tail expectation formula, we get
$$
\begin{aligned}
    \E (\norm{\nu^{\epsilon}_{T}-\nu^*}^2_2)& =4H^2\int_{0}^{\frac{2\epsilon}{H^2\sigma_{\min} \underline{\cL}_f}}\PP(\norm{\nu^{\epsilon}_{T}-\nu^*}_2\ge  2H \sqrt{z})dz \\
    &+4H^2\int_{\frac{2\epsilon}{H^2\sigma_{\min} \underline{\cL}_f}}^{\infty}\PP(\norm{\nu^{\epsilon}_{T}-\nu^*}_2\ge  2H \sqrt{z})dz \\
& \le  \frac{8\epsilon}{\sigma_{\min}\underline{\cL}_f } +4H^2\int_{\frac{\epsilon}{H^2\underline{\cL}_D}}^{\infty}\PP(\norm{\nu^{\epsilon}_{T}-\nu^*}_2\ge  2H \sqrt{z})dz.
\end{aligned}
$$
Let $2H\sqrt{z}=H\varepsilon + \sqrt{H^2\varepsilon^2+\frac{4\epsilon}{\sigma_{\min}\underline{\cL}_f}}$.  When $z>\frac{\epsilon}{H^2\sigma_{\min}\underline{\cL}_f}$, we have $\epsilon< 2H^2\varepsilon^2\sigma_{\min}\underline{\cL}_f$, thus  $\PP(\norm{\nu^{\epsilon}_{T}-\nu^*}_2\ge  2H \sqrt{z})$ can be bounded by Proposition \ref{prop:dual_conv_prob}. Also when  $2H\sqrt{z}=H\varepsilon + \sqrt{H^2\varepsilon^2+\frac{2\epsilon}{\sigma_{\min}\underline{\cL}_f}}$, we have $\varepsilon^2\ge z-\frac{2\epsilon}{H^2\sigma_{\min}\underline{\cL}_f}$. By the integral of $z$, we get the second part of the bound.

\subsection{Proof of Corollary \ref{cor:stocha_dual_conv}}

Recall the proof of Theorem \ref{thm:dual_conv_approx} that when $\varepsilon$ satisfying $\epsilon< 2H^2\varepsilon^2{\sigma_{\min} \underline{\cL}_f}$, with high probability the deterministic $\epsilon$-optimal solution must be in $\Omega_{\nu}(\varepsilon)$. Similarly, for the stochastic  $\epsilon$-optimal solution, we try to confine it in a larger region so that with high probability $ \E\left[\norm{\nu_T^{\epsilon}-\nu_T^{*}}_2^2\middle|\Bar{D}_T \right]$ can still be bounded by $\varepsilon$. Notice that, although our Proposition \ref{prop:dual_conv_prob} only focus on $\Omega_{\nu}(\varepsilon)$, it also bring us information outside $\Omega_{\nu}(\varepsilon)$. For any $\varepsilon$ and $\epsilon$,  under the event when Proposition \ref{prop:dual_conv_prob} holds, and any  $\bar D_T $ we have:

\begin{enumerate}
    \item If $\Bar{D}_T(\blambda_T^{\epsilon},d)-\Bar{D}_T(\blambda^*,d) \le 2H^2\varepsilon^2{\sigma_{\min} \underline{\cL}_f}$, then $\norm{\nu_T^{\epsilon}-\nu_T^{*}}_2\le 4H\varepsilon$.
    \item If $\Bar{D}_T(\blambda_T^{\epsilon},d)-\Bar{D}_T(\blambda^*,d) > 2H^2\varepsilon^2{\sigma_{\min} \underline{\cL}_f}$, then we have $\norm{\nu_T^{\epsilon}-\nu_T^{*}}_2\le \frac{2}{H\varepsilon\sigma_{\min}\underline{\cL}_f}(\Bar{D}_T(\blambda_T^{\epsilon},d)-\Bar{D}_T(\blambda^*,d))$. Because the convex function $\Bar{D}_T(\blambda,d)-\Bar{D}_T(\blambda^*,d)=0$ when $\blambda=\blambda^*$, and when $\norm{\nu-\nu^*}_2=4H\varepsilon$,  $\Bar{D}_T(\blambda,d)-\Bar{D}_T(\blambda^*,d)\ge 2H^2\varepsilon^2{\sigma_{\min} \underline{\cL}_f}$.
\end{enumerate}

We conclude that under the event when Proposition \ref{prop:dual_conv_prob} holds, for any  $\epsilon< 2H^2\varepsilon^2{\sigma_{\min} \underline{\cL}_f}$, 
$$ \E_{\cB}\left[\norm{\nu_T^{\epsilon}-\nu^{*}}_2^2
\middle|\Bar{D}_T \right]\le 16H^2\varepsilon^2+ \frac{4\sqrt{m\left(2\frac{\Bar{f}+\Bar{r}}{\underline{d}} +G\right) }}{H\varepsilon\sigma_{\min}\underline{\cL}_f}\cdot \epsilon
$$ 
because $\norm{\nu_T^{\epsilon}-\nu^{*}}_2\le 4\sqrt{m\left(2\frac{\Bar{f}+\Bar{r}}{\underline{d}} +G\right) }$. The RHS term has a minimum value 
$$ z_0= 3\cdot 8\epsilon^{\frac{2}{3}}\left(m\left(2\frac{\Bar{f}+\Bar{r}}{\underline{d}} +G\right)\right)^{\frac{1}{3}}/ (\sigma_{\min}\underline{\cL}_f )^{\frac{2}{3}} $$ 
when $\varepsilon_0= \epsilon^{\frac{1}{3}} \left(m\left(2\frac{\Bar{f}+\Bar{r}}{\underline{d}} +G\right)\right)^{\frac{1}{6}} / (2 H  (\sigma_{\min}\underline{\cL}_f )^{\frac{1}{3}})$. When the RHS term is larger than is minimum value, we can always take the corresponding $\varepsilon$ at the right side where $\varepsilon>\varepsilon_0$ and it follows that 
$$z= 16H^2\varepsilon^2+ \frac{4\sqrt{m\left(2\frac{\Bar{f}+\Bar{r}}{\underline{d}} +G\right) }}{H\varepsilon\sigma_{\min}\underline{\cL}_f}\cdot \epsilon \le 48H^2\varepsilon^2.$$   

Then by the tail expectation formula we have
$$
\begin{aligned}
    \E_{\cB,\cP} \norm{\nu^{\epsilon}_{T}-\nu^*}^2_2 & =\int_{0}^{z_0}\PP(\E_{\cB}\left[\norm{\nu_T^{\epsilon}-\nu^{*}}_2^2\middle|\Bar{D}_T \right]\ge z)dz  +\int_{z_0}^{\infty}\PP(\E_{\cB}\left[\norm{\nu_T^{\epsilon}-\nu^{*}}_2^2\middle|\Bar{D}_T \right]\ge   z)dz \\
& \le  z_0 +\int_{z_0}^{\infty} \left[2m\exp \left(-\frac{T\sqrt{n}\log m\Bar{b}D z/(48H^2) }{2 }\right)\vee 1\right]dz \\
& +\int_{z_0}^{\infty} \left[2\exp\left({-\frac{m\log m(Tz/(48H^2)-1) }{4}}\right)\vee 1\right]dz. \\
& \le   z_0 + C_2 \frac{\Bar{b}^2 D^2}{\sigma_{\min}^2 \underline{\cL}_f^2} \frac{nm\log m }{T} .
\end{aligned}
$$

\subsection{Proof of Lemma \ref{lemma:offline_ub}}
Recall the Lagrangian of program \eqref{eq:lagrangian}. By duality, we have

\begin{equation*}
    \begin{aligned}
        R^*(\mathcal{P}) &:= \E_{\mathcal{P}} \left[\max_{x_t\in \cX } \sum_{t=1}^{T} f_t(x_t) +T\cdot r(\frac{\sum_{t=1}^T b_t x_t}{T}), \ s.t. \ \sum_{t=1}^T b_t x_t \preceq dT\right]\\
        &\le \E \sum_{t=1}^{T}\left[ f_t (\tilde x_t(\nu^*)) + r(\tilde a (\mu^*)) + (\tilde a(\mu^*)-b_t \tilde x_t(\nu^*))^{\top}\mu^* +(d-b_t \tilde x_t(\nu^*))^{\top}\lambda^*\right] \\
        & = T\cdot g(\nu^*)
    \end{aligned}
\end{equation*}

\subsection{Proof of Proposition \ref{prop:regret_decomp}}

Since $r$ is proper, by Fenchel conjugate, the definition of $\hat{\mu}_T$ implies 
$$
\begin{aligned}
    r(\frac{\sum_{t=1}^{T}b_t x_t}{T})+\hat{\mu}_T^{\top}\frac{\sum_{t=1}^{T}b_t x_t}{T} = & r^*(-\hat{\mu}_T) -\hat\mu_T^\top \E b_t \tilde{x}_t(\nu^*)+ \hat\mu_T^\top \E b_t \tilde{x}_t(\nu^*) \\
    \ge & r(\tilde{a}(\mu^*) )+\hat{\mu}_T^\top \tilde{a}(\mu^*) \\
\end{aligned}
$$

Combined with $R\left(A\middle| \mathcal{P}\right)=\E_{A,\mathcal{P}} \left[ \sum_{t=1}^{T} f_t(x_t) +T\cdot r(\frac{\sum_{t=1}^T b_t x_t}{T}) \right]$, we have

\begin{equation*}
    \begin{aligned}
        R\left(A\middle| \mathcal{P}\right) & \ge \E\left[\sum_{t=1}^{T}f_t(x_t)+Tr(\tilde{a}(\mu^*))+T\hat{\mu}_T^{\top}\tilde{a}(\mu^*)-T\hat{\mu}_T^{\top}\frac{\sum_{t=1}^{T}b_t x_t}{T}\right]. \\
    \end{aligned}
\end{equation*}

The Assumption \ref{asm:regularizer} suggests that
$$
\norm{\hat{\mu}_T}_2 \le \sqrt{m}G , \text{ and } \norm{a}_2=\norm{\nabla r^*(-\mu)}_2\le \sqrt{n}D\Bar{b}.
$$
Thus 
\begin{equation*}
    \begin{aligned}
        R\left(A\middle| \mathcal{P}\right)  \ge  \E & \left[  \sum_{t=1}^{T} \left[ f_t(\tilde x_t(\nu_{t-1}))+r(\tilde{a}(\mu^*))\right]  + \left\langle \hat{\mu}_T,\sum_{t=1}^{T}(\tilde{a}(\mu^*)-b_t x_t) \right\rangle\right] \\
        = &  \E \left[ \sum_{t=1}^{T}g(\nu_{t-1}) + \left\langle \hat{\mu}_T-\mu^*,\sum_{t=1}^{T}(\tilde{a}(\mu^*)-b_t x_t) \right\rangle - \left\langle \lambda^* , \sum_{t=1}^{T}(d- b_t x_t)
        \right\rangle  \right] \\
    \end{aligned}
\end{equation*}
Combined with \eqref{eq:upp_1}, we can show that
\begin{equation*}
    \begin{aligned}
        R^*(\mathcal{P}) -  R\left(A\middle| \mathcal{P}\right)  \le \E \left[ \sum_{t=1}^{T}g(\nu_{}^*)-g(\nu_{t-1})  +\left\langle \lambda^* , \sum_{t=1}^{T}(d- b_t x_t)
        \right\rangle  \right],
    \end{aligned}
\end{equation*}
or, equivalent, in a two-phase form:
\begin{equation*}
    \begin{aligned}
        R^*(\mathcal{P}) -  R\left(A\middle| \mathcal{P}\right)  \le \E \left[ \sum_{t=1}^{\tau}g(\nu_{}^*)-g(\nu_{t-1})  +\left\langle \lambda^* , \sum_{t=1}^{\tau}(d- b_t x_t)
        \right\rangle  \right]+ 2(\Bar{f}+\Bar{r}+\sqrt{mn}GD\Bar{b})(T-\tau).
    \end{aligned}
\end{equation*}
We conclude the proof.

\subsection{Proof of Lemma \ref{lemma:region_d}}

For technical convenience, we assume that, for each non-binding dimensions $i\in I_{\textsf{NB}}$, the updated constraint $d_{it}$ never exceeds the threshold $\bar{d}$ (the uniform bound defined in Assumption \ref{asm:basic}) at all iterations. This is a mid assumption both for theory and in practice. Indeed, if $d_{it}$ is larger than the $\bar{d}$, this means that the constraint $d_{it}$ is very loose so that its impact to the optimization problem is negligible. In this case, such a constraint can essentially be discarded. We start with a lemma on the continuity of dual optimal solution to prove Lemma \ref{lemma:region_d}.

\begin{lemma}[Continuity of dual optimal solution]
\label{lemma:continuity_lambda}
Under Assumption \ref{asm:basic}, \ref{asm:regularizer}, \ref{asm:nondegeneracy}, for the stochasitc program $\min\limits_{\mu,\lambda\succeq 0} D(\blambda,d')=\mathbb{E} f_t^{*}(b_t^{\top}(\mu+\lambda)) +r^{*}(-\mu)+{d'}^{\top}\lambda$, let $d'$ be $d'_1, d'_2 \in \Omega_d$ separately,  then the corresponding optimal solution $\nu^*(d'_1),\nu^*(d'_2)$ satisfies 
$$
\norm{\nu^*(d'_1)-\nu^*(d'_2)}^2_2 \le \frac{1}{\sigma_{\min}^2 \underline{\cL}_f^2} \norm{d'_1-d'_2}^2_2.
$$
If further $d'_1, d'_2$ identify the same binding/non-binding dimensions, then 
$$
\norm{\nu^*(d'_1)-\nu^*(d'_2)}^2_2 \le  \frac{1}{\sigma_{\min}^2 \underline{\cL}_f^2} \sum_{i\in I_{\textsf{B}} }(d'_{1i}-d'_{2i})^2,
$$
where the binding dimension $I_{\textsf{B}}$ is with respect to $d_1'$ and $d'_2$.

\end{lemma}

\proof{Proof of Lemma \ref{lemma:continuity_lambda}}
 By Proposition \ref{prop:quadratic_growth} and the uniform assumption on $d$, we have
$$
\begin{aligned}
 D(\nu^*(d_2'),\mu^*(d_1'), d_1')-D(\blambda^*(d_1'),d_1') &\ge \frac{1}{2}\sigma_{\min}\underline{\cL}_f \norm{\nu^*(d_2')-\nu^*(d_1')}_2^2 \\
 D(\nu^*(d_1'),\mu^*(d_2'),d_2')-D(\blambda^*(d_2'),d_2') &\ge \frac{1}{2}\sigma_{\min}\underline{\cL}_f \norm{\nu^*(d_1')-\nu^*(d_2')}_2^2.
\end{aligned}
$$

Summing up two inequality we have
\begin{equation}\label{eq:dual_opt_continuity_1}
    \begin{aligned}
    (d_1'-d_2')^\top (\nu^*(d_2')- \nu^*(d_1')) \ge  \sigma_{\min}\underline{\cL}_f \norm{\nu^*(d_2')-\nu^*(d_1')}_2^2,
\end{aligned}
\end{equation}
or equivalently, $\sum_{i\in I_{\textsf{B}} }(d'_{1i}-d'_{2i})  (\nu^*_i(d_2')- \nu^*_i(d_1')) \ge  \sigma_{\min}\underline{\cL}_f \norm{\nu^*(d_2')-\nu^*(d_1')}_2^2$ if further $d'_1, d'_2$ share the same binding/non-binding dimensions. From \eqref{eq:dual_opt_continuity_1} we can show that 
$$
\begin{aligned}
  \sigma_{\min}\underline{\cL}_f \norm{\nu^*(d_2')-\nu^*(d_1')}_2^2  & \le (d_1'-d_2')^\top (\nu^*(d_2')- \nu^*(d_1')) \le \norm{d_1'-d_2'}_2 \norm{\nu^*(d_2')- \nu^*(d_1')}_2 \\
\norm{\nu^*(d_2')-\nu^*(d_1')}_2 & \le \frac{1}{\sigma_{\min}\underline{\cL}_f } \norm{d_1'-d_2'}_2.
\end{aligned}
$$
Thus we get the first statement. For the second statement, we focus on the binding dimensions:
$$
\begin{aligned}
\sigma_{\min}\underline{\cL}_f \norm{\nu^*(d_2')-\nu^*(d_1')}_2^2 & \le \sum_{i\in I_{\textsf{B}} }(d'_{1i}-d'_{2i})  (\nu^*_i(d_2')- \nu^*_i(d_1')) \\ 
& \le \sqrt{\sum_{i\in I_{\textsf{B}} }(d'_{1i}-d'_{2i})^2  } \sqrt{\sum_{i\in I_{\textsf{B}} }(\nu^*_i(d_2')- \nu^*_i(d_1'))^2 } \\
& \le  \sqrt{\sum_{i\in I_{\textsf{B}} }(d'_{1i}-d'_{2i})^2  } \norm{\nu^*(d_2')-\nu^*(d_1')}_2,
\end{aligned}
$$
which completes the proof of Lemma \ref{lemma:continuity_lambda}.
\endproof
Then, we return to Lemma \ref{lemma:region_d} and consider the original constraints $d$ and the its binding/non-binding dimensions:  $I_{\textsf{B}}= \left\{ i\middle| d_i-\E\left(b_t\tilde{x}_t(\nu^*)\right)_i=0\right\}$, and $I_{\textsf{NB}}= \left\{ i\middle| d_i-\E\left(b_t\tilde{x}_t(\nu^*)\right)_i>0\right\}$. Here we write the corresponding optimal solution to $\min\limits_{\mu,\lambda\succeq 0} D(\blambda,d) $ as $\blambda^*$, and write $\blambda^*(d')$ if we change $d$ to $d'$. Then if $i\in I_{\textsf{B}}$  and $i$ changes to non-binding dimensions for  $d'$, by Lemma \ref{lemma:continuity_lambda} and Assumption \ref{asm:nondegeneracy:3-2}, for any $\norm{d'-d}_2\le \delta_0\wedge \frac{ \underline{\lambda} }{2 \Bar{\cL}_r }  $ we have

\begin{equation}\label{eq:nb_to_b}
\norm{d-d'}_2 \ge \sigma_{\min}\underline{\cL}_f \norm{\nu^*(d')-\nu^*}_2 \ge \sigma_{\min}\underline{\cL}_f\left(\abs{\lambda_i^*-0}-\abs{\mu_i-\mu_i(d') } \right)\ge \frac{1}{2}\sigma_{\min}\underline{\cL}_f \underline{\lambda},
\end{equation}
where $\underline{\lambda}= \min\left\{\lambda^*_i\middle|  i \in I_{\textsf{B}}  \right\}$. If on the other hand, $i\in I_{\textsf{NB}}$  and $i$ changes to binding dimensions for  $d'$, by Assumption \ref{asm:k-thmoment}, we have

\begin{equation*}
    \begin{aligned}
    \E \norm{\nu^*(d') - \nu^*}_2  & \ge  \frac{1}{2\Bar{b}^2 L_1 } \left| \E\left(b_t\tilde{x}_t(\nu^*(d'))\right)_i -\E\left(b_t\tilde{x}_t(\nu^*)\right)_i \right|=  \frac{1}{2\Bar{b}^2 L_1 }  \left| d'_i -\E\left(b_t\tilde{x}_t(\nu^*)\right)_i \right|  \\
    & \ge \frac{1}{2\Bar{b}^2 L_1 } \left( \left| d_i-\E\left(b_t\tilde{x}_t(\nu^*)\right)_i \right| - \abs{ d'_i -d_i} \right).
    \end{aligned}
\end{equation*}

Denote the minimum of remaining resources in non-binding dimensions by $$\gamma =\min_{i\in I_{\textsf{NB}}} \left\{ d_i-\E\left(b_t\tilde{x}_t(\nu^*)\right)_i \right\}.$$ By Lemma \ref{lemma:continuity_lambda} we have

\begin{equation*}
    \begin{aligned}
     \norm{d-d'}_2 \ge \sigma_{\min}\underline{\cL}_f \E \norm{\nu^*(d') - \nu^*}_2  & \ge \frac{\sigma_{\min}\underline{\cL}_f}{2 \Bar{b}^2 L_1 } \left( \gamma - \abs{ d'_i -d_i} \right)\ge \frac{\sigma_{\min}\underline{\cL}_f}{2 \Bar{b}^2 L_1 } \left( \gamma - \norm{d-d'}_2 \right),
    \end{aligned}
\end{equation*}
i.e., $\norm{d-d'}_2 \ge \frac{\gamma \sigma_{\min}\underline{\cL}_f}{ \sigma_{\min}\underline{\cL}_f+2\Bar{b}^2 L_1 }$. Combined with \eqref{eq:nb_to_b}, taking $$\delta_d= \frac{1}{\sqrt{m}} \cdot \left( \frac{\gamma \sigma_{\min}\underline{\cL}_f}{ \sigma_{\min}\underline{\cL}_f+2\Bar{b}^2 L_1 }\right) \wedge  \left( \frac{1}{2}\sigma_{\min}\underline{\cL}_f\underline{\lambda}\right)\wedge\delta_0\wedge \frac{ \underline{\lambda} }{2 \Bar{\cL}_r } ,$$ 
we can conclude that when $\abs{d_i-d'_i}\le \delta_d$, the binding/non-binding dimensions will never change. Moreover, enlarging the constraint in a non-binding dimension  will never change this constraint to the binding dimension. So, for the non-binding dimensions, $d_i'-d_i$ can be any large. This finishes the proof.

\subsection{Proof of lemma \ref{lemma:alg_conv}}
Proof of this lemma under frequent resolving is similar in spirit as that in \cite{li2021online}, but with different induction and dual convergence rate. Here we focus on the infrequent re-solving scheme in Algorithm \ref{alg:infrequent}. All the proof for the infrequent re-solving scheme is valid for the frequent resolving case. we define $d_t=d_{T_j}$ if $T_j\le t<T_{j+1}$. Then $d_t$ will only update when $t=T_j$ for some $j$. Without loss of generality, we assume $\frac{1}{\rho}$ is a integer and $T=\left(\frac{1}{\rho}\right)^K$ for some integer $K$. 
Denote the ratio $C_\rho=\rho/(1-\rho)$. 
Also assume $C_4=O(mn\log m)$ in Condition \ref{condition:conv}. Decomposing the perturbation of dual variables will lead to
\begin{equation*}
    \begin{aligned}
    \E\left[\sum_{t=1}^{\tau } \norm{\nu_{t-1}-\nu^*}^2_2\right]\le 2 \E\left[\sum_{t=1}^{\tau } \norm{\nu_{t-1}-\nu^*(d_{t-1})}^2_2+\norm{\nu^*(d_{t-1})-\nu^*}^2_2\right]
    \end{aligned}
\end{equation*}

By the definition of stopping time $\tau$ and Condition \ref{condition:conv}, the first term in the RHS has
\begin{equation*}
    \E\left[\sum_{t=1}^{\tau } \norm{\nu_{t-1}-\nu^*(d_{t-1})}^2_2\right] \le  \sum_{j=1}^{J} 2C_4\frac{ 1 }{T_j}\cdot(T_{j+1}-T_j ) \text{ or } C_4\frac{ 1 }{T-T_j }\cdot(T_{j+1}-T_j ) \\
    \le  2C_4 (\log T+1).
\end{equation*}
Notice that, we need a good initialization: $\E\norm{\nu_0-\nu^*}^2=O(1/T)$ to reach this condition for the first $(1-\rho)T$ terms in infrequent resolving case. For the second term, we apply lemma \ref{lemma:continuity_lambda} to it.
\begin{equation*}
    \begin{aligned}
    2 \E\left[\sum_{t=1}^{\tau}\norm{\nu^*(d_{t-1})-\nu^*}^2_2\right] \le  \frac{2}{\sigma_{\min}^2\underline{\cL}_f^2}  \E\left[\sum_{t=1}^{\tau} \sum_{i\in I_{\textsf{B}} }(d_{it}-d_{i})^2\right].
    \end{aligned}
\end{equation*}
Thus we transform the perturbation of $\nu^*(d_t)$ into the deviation of $d_t$ in the binding dimensions.

To ease our analysis, we define a new sequence $d_{t}'$
$$
d_{t}'=\begin{cases}d_t, &\text { if } t\le \tau \\ d_{t-1}, &\text { if } t> \tau\end{cases}
$$
which shares the same stopping time with $d_{t}$ and define $\tau_i=\min\limits_{t} 
\{T-\left\lceil \frac{m\bar{b}}{\underline{d}} \right\rceil\}\cup\{t|d_{it}'\notin \mathcal{D}_i\}$ for $i\in [m]$ as the stopping time on each dimension with $\tau = \min\{\tau_1,...,\tau_m\}$.  We follow a similar approach as \cite{li2021online} to bound the stopping time. The distinction is that our analysis is more refined and we use a martingale argument that is different from \cite{li2021online} to study the infrequent resolving scheme.

We first consider the binding dimensions. For any $i\in I_{\textsf{B}}$, we  derive:

\begin{equation}\label{eq:dt-induction}
    \begin{aligned}
     d_{i,T_{j+1} }'  & = d_{i, T_{j}}' + \frac{\sum_{k=T_{j}+1}^{T_{j+1}}\left[d_{i,T_{j}}' - \left(b_k\tilde{x}_{k}(\nu_{T_{j}} )\right)_i\right]  }{T-T_{j+1}}\mathbb{I}(\tau> T_{j}) \\
     \E \left(d_{i,T_{j+1}}' - d_{i}\right)^2 & =   \E \left(d_{i,T_{j}}' - d_{i}\right)^2 + \underbrace{\E  \frac{\left(\sum_{k=T_{j}+1}^{T_{j+1}}\left[d_{i,T_{j}}' - \left(b_k\tilde{x}_{k}(\nu_{T_{j}} )\right)_i\right] \right)^2 }{(T-T_{j+1})^2} \mathbb{I}(\tau> T_{j})  }_{A'}  \\ 
    & + \underbrace{2 \E\frac{\left(d_{i,T_{j}}' - d_{i}\right) \left(\sum_{k=T_{j}+1}^{T_{j+1}}\left[d_{i,T_{j}}' - \left(b_k\tilde{x}_{k}(\nu^*(d_{T_{j}}) )\right)_i\right]  \right) }{T-T_{j+1}}\mathbb{I}(\tau> T_j )}_{B'}   \\
    & + \underbrace{ 2 \E\frac{\left(d_{i,T_{j}}' - d_{i}\right) \left(\sum_{k=T_{j}+1}^{T_{j+1}}\left(b_k\tilde{x}_{k}(\nu_{T_{j}} )\right)_i-\left(b_k\tilde{x}_{k}(\nu^*(d_{T_{j}}) )\right)_i \right) }{T-T_{j+1}} \mathbb{I}(\tau> T_j)}_{C'}
    \end{aligned}
\end{equation}

For the term $A'$ we have $$
\begin{aligned}
   A' & = \E  \frac{\left(\sum_{k=T_{j}+1}^{T_{j+1}}\left[d_{i,T_{j}}' -\E\left[\left(b_k\tilde{x}_{k}(\nu_{T_{j}} )\right)_i|\cH_{T_j}\right] +\E\left[\left(b_k\tilde{x}_{k}(\nu_{T_{j}} )\right)_i|\cH_{T_j}\right]- \left(b_k\tilde{x}_{k}(\nu_{T_{j}} )\right)_i\right] \right)^2 }{(T-T_{j+1})^2} \mathbb{I}(\tau> T_{j})\\
   & \le 2\E\frac{\left(\sum_{k=T_{j}+1}^{T_{j+1}} d_{i,T_{j}}' -\E\left[\left(b_k\tilde{x}_{k}(\nu_{T_{j}} )\right)_i|\cH_{T_j}\right] \right)^2 }{(T-T_{j+1})^2}  \\
   & \ +  2\E\frac{   \left(\sum_{k=T_{j}+1}^{T_{j+1}}\E\left[\left(b_k\tilde{x}_{k}(\nu_{T_{j}} )\right)_i|\cH_{T_j}\right]- \left(b_k\tilde{x}_{k}(\nu_{T_{j}} )\right)_i\right)^2 }{(T-T_{j+1})^2}\\
   & \le 2\E\frac{\sum_{k=T_{j}+1}^{T_{j+1}}\left(\E\left[ \left(b_k\tilde{x}_{k}(\nu^*(d_{T_{j}}) )\right)_i-\left(b_k\tilde{x}_{k}(\nu_{T_{j}} )\right)_i \middle| \cH_{T_j} \right]\right)^2 }{T-T_{j+1} } \\ 
   & \le 2 \Bar{b}^4 L_1^2\E \norm{\nu^*(d_{T_{j}})-\nu_{T_{j}} }_2^2 \\ 
   & \le \frac{C_4 \Bar{b}^4 L_1^2+ 4nD^2\Bar{b}^2 }{T-T_j}+ \frac{C_4 \Bar{b}^4 L_1^2}{T_j}
\end{aligned}
$$ 
For the second inequality, since $i\in I_{\textsf{B}} $ and $d_t\in \sigma(\cH_t)$, conditioned on past history $\cH_{T_j}$, we always have $$d_{i,T_{j}}' -\E\left[ \left(b_k\tilde{x}_{k}(\nu^*(d_{T_{j}}) )\right)_i \middle| \cH_{T_j} \right]=0, \text{ for any } k\ge T_j+1.$$ This also indicates that $B'=0$. For the third inequality, we use Assumption \ref{asm:k-thmoment}. For the term $C'$, we apply Assumption \ref{asm:k-thmoment} and Condition \ref{condition:conv}:
\begin{equation*}
    \begin{aligned}
   C' & = 2 \E\left[\E\frac{\left(d_{i,T_{j}}' - d_{i}\right) \left(\sum_{k=T_{j}+1}^{T_{j+1}}\left(b_k\tilde{x}_{k}(\nu_{T_{j}} )\right)_i-\left(b_k\tilde{x}_{k}(\nu^*(d_{T_{j}}) )\right)_i \right) }{T-T_{j+1}} \mathbb{I}(\tau> T_j)\middle| \cH_{T_j} \right]\\
   & \le 2 \Bar{b}^2 L_1\E \frac{\abs{d_{i,T_{j}}' - d_{i}} \sum_{k=T_{j}+1}^{T_{j+1}}\norm{\nu_{T_{j}}-\nu^*(d_{T_{j}}) } }{T-T_{j+1} }\\
   &  \le {2 \sqrt{C_4} \Bar{b}^2 L_1\ \sqrt{\mathbb{E}\left(d_{i ,T_{j}}^{\prime}-d_{i}\right)^{2}} \sqrt{\frac{1}{T-T_j} + \frac{1}{T_j}}  }.
    \end{aligned}
\end{equation*}
Here the first inequality is because of Assumption \ref{asm:k-thmoment}, and the second inequality is from Condition \ref{condition:conv} and Cauchy inequality. Here in the derivation, we can treat $\{\nu_t\}$ as a new sequence generated by $\{d_{t}'\}$, which has the same value with the original one when $t\le \tau$, and takes $\nu_{t}=\mathcal{B}_{t}(\cH_t,d_{t}')$ when $t> \tau$. We then get the recurrence relation of $d_{i,T_{j} }^{\prime}-d_{i}$:
\begin{equation*}
    \begin{aligned}
    \mathbb{E}\left(d_{i, T_{j+1}}^{\prime}-d_{i}\right)^{2} \leq & \mathbb{E}\left(d_{i ,T_{j}}^{\prime}-d_{i}\right)^{2}
     +\frac{C_4 \Bar{b}^4 L_1^2+ 4nD^2\Bar{b}^2 }{T-T_j}+ \frac{C_4 \Bar{b}^4 L_1^2}{T_j}\\
     &+{2 \sqrt{C_4 }\Bar{b}^2 L_1\ \sqrt{\mathbb{E}\left(d_{i t}^{\prime}-d_{i}\right)^{2}} \sqrt{\frac{1}{T-T_j} + \frac{1}{T_j}}  }.
    \end{aligned}
\end{equation*}

Since $d_0=d$, assigning $C_4=O(mn\log m)$ and by induction we have $$\mathbb{E}\left(d_{i ,T_{j}}^{\prime}-d_{i}\right)^{2}\le C_5 C_\rho mn \log m D^2 \Bar{b}^4 L_1^2 \frac{\rho^{-j}-1}{T} $$


So, we have
$$
\begin{aligned}
2 \E\left[\sum_{t=1}^{\tau}\norm{\nu^*(d_{t-1})-\nu^*}^2_2\right] 
\le  & 2 \E\left[\sum_{t=1}^{\tau}\sum_{i\in I_{\textsf{B}}} (d_{i,{t-1}}-d_i)^2\right] \\
\le & 2m \mathbb{E}\sum_{j=1}^{J} (T_j-T_{j-1}) \left[\left(d_{i, T_j}^{\prime}-d_{i}\right)^{2}\right] \le C_5 C_\rho m^2n \log m D^2 \Bar{b}^4 L_1^2 \log T +C, \\
 \text{ and } \E\left[\sum_{t=1}^{\tau} \norm{\nu_{t-1}-\nu^*}^2_2\right]\le  &  \left( 2C_2 +\frac{mC_3}{\underline{\cL}_D ^2}  \right) \log T+  2C_2,
\end{aligned}
$$
Notice that, by Condition 1, $C_4$ can take as small as $O(mn\log m/(\sigma_{\min}^2\underline{\cL}_f^2 ) )$, which completes the proof that
\begin{equation*}
    \E\left[\sum_{t=1}^{\tau}\norm{\nu_t-\nu^*}^2_2\right]  \le C\cdot C_\rho \frac{ m^2n\log m D^2 \Bar{b}^4 L_1^2 \log T}{\sigma_{\min}^2\underline{\cL}_f^2} +C
\end{equation*}
In the following discussion, we will treat $C_\rho$ as a constant and thus can be omitted. In the case of frequent resolving, we only need to substitute the recurrence relation of $d_{it}'-d_i$ with:
\begin{equation*}
    \begin{aligned}
    \mathbb{E}\left(d_{i, t+1}^{\prime}-d_{i}\right)^{2} \leq & \mathbb{E}\left(d_{i t}^{\prime}-d_{i}\right)^{2}
     +\frac{\left(\Bar{d}+\sqrt{n}D\Bar{b}\right)^2 }{(T-t-1)^{2}}  +\frac{2\sqrt{2C_2L_2 }\Bar{b}^2 \sqrt{ \frac{1}{t+1}+\frac{1}{T-t} } \sqrt{\mathbb{E}\left(d_{i t}^{\prime}-d_{i}\right)^{2}}   }{T-t-1},
    \end{aligned}
\end{equation*}
which can be derived by the same analysis as the infrequent resolving case.
Since $d_0=d$, by induction we have $\mathbb{E}\left(d_{it}^{\prime}-d_{i}\right)^{2} \le C_3\frac{t+1 }{(T+1)(T-t) } $, where $C_3= \left(2\cdot \left(\Bar{d}+\sqrt{n}D\Bar{b}\right)^2 \vee \left(2\sqrt{2C_2L_2 }\Bar{b}^2\right)+1  \right)^2$. The lemma still holds.

\subsection{Proof of lemma \ref{lemma:early_stop}}
We prove this Lemma under an infrequent resolving setting. The frequent resolving can be handled analogously. Since $\tau = \min\{\tau_1,...,\tau_m\}$, we only need to show $\E (T-\tau_i) \le C \log T $ for any $i$ in binding dimensions and non-binding dimensions. By the definition of $\rho$, and $\tau$, the algorithm will only hit the stopping time after the first update of $d_t$, i.e, $\tau> T_1$. For the binding dimensions, applying Chebyshev's inequality, we have
\begin{equation}
\label{eq:rem_1}
    \begin{aligned}
\E (T-\tau_i) \le& \sum_{i=1}^{T} \PP\left( \tau_i \le t\right) \le 1+\frac{\sqrt{n}D\bar{b}}{\underline{d}}+\sum_{j=2}^{J} \PP\left( \tau_i \le T_j\right)(T_{j}-T_{j-1}) \\
\le & 1+ \frac{\sqrt{n}D\bar{b}}{\underline{d}} +\sum_{j=2}^{J}\PP\left( \abs{d_{i,T_j}-d_i}\ge \delta_d\right)(T_{j}-T_{j-1})  \\
\le & 1+ \frac{\sqrt{n}D\bar{b}}{\underline{d}} +\sum_{j=2}^{J}\left( \frac{\E \left(d_{i, T_{j+1}}^{\prime}-d_{i}\right)^{2}  }{\delta_d^2}\right)(\rho^{j-1}(1-\rho)T ) \\ 
 \le & C+ \frac{\sqrt{n}D\bar{b}}{\underline{d}} + \frac{C_5 m^2n D^2 \Bar{b}^4 L_1^2}{\delta_d^2} \log T 
\end{aligned}
\end{equation}

For the non-binding dimensions, $\mathcal{D}$ ensures that binding/non-binding dimensions remain unchanged when $d'\in \mathcal{D}$. Then for $d'\in \mathcal{D}$, we define
$$
\tilde{d}'_i= \begin{cases}d_i', &\text{ if }i\in I_{\textsf{B}} \\ d_i-\delta_d, &\text{ if }i\in I_{\textsf{NB}}\end{cases}
$$
We know that $\nu^*(d')=\nu^*(\tilde{d}')$ because the non-binding constraints are loose, then 
$$
\E \left(b_t \tilde{x}_t(\nu^*(d'))\right)_i  = \E \left(b_t \tilde{x}_t(\nu^*(\tilde{d}'))\right)_i   < d_i-\delta_d
$$
Recall that $\tilde{x}_k(\cdot)\indep \cH_{t}$, thus $\E \left[\left(b_k \tilde{x}_k(\nu^*(d_{t})\right)_i\middle| \cH_{t} \right]  < d_i-\delta_d$ for any $k\ge t+1$, $i\in I_{\textsf{NB}}$ and $d_{t}\in\mathcal{D}$. This implies that 

$$
\begin{aligned}
\PP\left(\tau_{i} \leq T_j\right) 
&\le  \PP\left(\sum_{t=1}^{t^{\prime}} (b_t \tilde{x}_t\left(\nu_{t-1} \right))_i \geq t'\left(d_{i}-\delta_{d}\right)+T \delta_{d} \text { for some } 1 \leq t^{\prime} \leq T_j\right)   \\
& \le  \PP\left(\sum_{t=1}^{t^{\prime}} \left[(b_t \tilde{x}_t\left(\nu_{t-1} \right))_i - \E \left[\left(b_t \tilde{x}_t(\nu^*(d_{t-1})\right)_i\middle| \cH_{t-1} \right]\right] \geq T \delta_{d} \text { for some } 1 \leq t^{\prime} \leq T_j\right) \\
& \le  \PP\left(\sum_{t=1}^{t^{\prime}} \left[(b_t \tilde{x}_t\left(\nu_{t-1} \right))_i -\E\left[(b_t \tilde{x}_t\left(\nu_{t-1} \right))_i|\cH_{t-1}\right]\right] + \right.\\ 
& \left. \sum_{t=1}^{t^{\prime}} \abs{\E \left[\left(b_t \tilde{x}_t(\nu^*(d_{t-1})\right)_i\middle| \cH_{t-1} \right]-\E\left[(b_t \tilde{x}_t\left(\nu_{t-1} \right))_i|\cH_{t-1}\right] }  \geq T \delta_{d} \text { for some } 1 \leq t^{\prime} \leq T_j \right) \\
& \le \PP \left( \sum_{t=1}^{t^{\prime}} \left[(b_t \tilde{x}_t\left(\nu_{t-1} \right))_i -\E\left[(b_t \tilde{x}_t\left(\nu_{t-1} \right))_i|\cH_{t-1}\right]\right] \ge \frac{T\delta_d}{2} \text { for some } 1 \leq t^{\prime} \leq T_j\right) \\
& +\PP \left(  \sum_{t=1}^{t^{\prime}} \abs{\E \left[\left(b_t \tilde{x}_t(\nu^*(d_{t-1})\right)_i\middle| \cH_{t-1} \right]-\E\left[(b_t \tilde{x}_t\left(\nu_{t-1} \right))_i|\cH_{t-1}\right] }    \ge \frac{T\delta_d}{2} \text { for some } 1 \leq t^{\prime} \leq T_j\right)\\
\end{aligned}
$$
Since sequences in the last two lines are martingales/sub-martingales, we use Doob's martingale inequality and get the following derivation:
\begin{equation}\label{eq:non-bind-stop}
    \begin{aligned}
\PP\left(\tau_{i} \leq T_j\right)  \le & \frac{4}{T^2\delta_d^2}\sum_{t=1}^{T_j}\E\left[ (b_t \tilde{x}_t\left(\nu_{t-1} \right))_i -\E\left[(b_t \tilde{x}_t\left(\nu_{t-1} \right))_i|\cH_{t-1}\right]\right]^2  \\
& + \frac{4}{T^2\delta_d^2}  \E\sum_{t=1}^{T_j } \left[ \abs{\E \left[\left(b_t \tilde{x}_t(\nu^*(d_{t-1})\right)_i\middle| \cH_{t-1} \right]-\E\left[(b_t \tilde{x}_t\left(\nu_{t-1} \right))_i|\cH_{t-1}\right] }  \right]^2\\
    \end{aligned}
\end{equation}


By Assumption \ref{asm:k-thmoment}, we have
\begin{equation*}
    \begin{aligned}
      \PP\left(\tau_{i} \leq T_j \right) 
\le &\frac{16n\Bar{b}^2D^2 T_j }{T^2\delta_d^2}  + \frac{8L_1^2\Bar{b}^4  }{T^2\delta_d^2} \sum_{t=1}^{T_j} \E\norm{\nu_{t-1}-\nu^*(d_{t-1})}_2^2 \\
\le &\frac{16n\Bar{b}^2D^2 T_j}{T^2\delta_d^2} + \frac{C_5 m^2n\log m D^2 \Bar{b}^4 L_1^2 \log T  }{T^2\delta_d^2},
    \end{aligned}
\end{equation*}
where for $j=1$, we use the assumption on initialization. We now go back to calculate the $\E (T-\tau_i)$:

\begin{equation}\label{eq:rem_2}
    \begin{aligned}
  \E (T-\tau_i) \le&2+ \frac{\sqrt{n}D\bar{b}}{\underline{d}} + \sum_{j=2}^{J} \PP\left( \tau_i \le T_j\right)(T_{j}-T_{j-1})\\
  & \le C+ \frac{\sqrt{n}D\bar{b}}{\underline{d}}+ \sum_{j=2}^{J}\frac{16n\Bar{b}^2D^2 T_j(T_j-T_{j-1}) }{T^2\delta_d^2} + \frac{C_5 (T_j-T_{j-1})}{T^2\delta_d^2}  ( m^2n \log m D^2 \Bar{b}^4 L_1^2 \log T) \\
& \le C+ \frac{\sqrt{n}D\bar{b}}{\underline{d}} + \frac{C_5n\Bar{b}^2D^2\log T}{\delta_d^2}+\frac{C_5 m^2n\log m\Bar{b}^4 D^2 L_1^2\log T}{\delta_d^2}
\end{aligned}
\end{equation}
Notice that, by Condition 1, $C_4$ can take as small as $O(m\log mn/(\sigma_{\min}^2\underline{\cL}_f^2 ) )$. Putting together \eqref{eq:rem_1} and \eqref{eq:rem_2}  we conclude the proof of lemma \ref{lemma:early_stop}:
\begin{equation*}
     \E (T-\tau) \le \frac{C m^2n\log m\Bar{b}^4 D^2 L_1^2\log T }{\delta_d^2\sigma_{\min}^2\underline{\cL}_f^2}
\end{equation*}


\subsection{Proof of Theorem \ref{thm:regret_ub}}\label{proof:thm_regret_ub}

Equipped with Lemma~\ref{lemma:alg_conv} and ~\ref{lemma:early_stop}, we now continue sketching the proof of Theorem~\ref{thm:regret_ub}. By Proposition~\ref{prop:regret_decomp}, it suffices to bound the two terms there. 

\begin{proof}{Proof of Theorem~\ref{thm:regret_ub}}
The proof continues from Proposition~\ref{prop:regret_decomp}.

\noindent\textit{Step 1: bounding $\textsf{R.1}$}.  By Fenchel conjugate, we re-write the bridging function $g(\nu)$ by
\begin{equation*}
    \begin{aligned}
    g (\nu)= & \E \left[ f_t (\tilde x_t(\nu)) + r(\tilde a (\mu)) + (\tilde a(\mu)-b_t \tilde x_t(\nu))^{\top}\mu^* +(d-b_t \tilde x_t(\nu))^{\top}\lambda^*\right] \\
    = & \E\left[f^*_t(b_t^\top(\lambda+\mu) ) +r^*(-\mu)\right] + \E(\mu^*-\mu)^\top (\tilde{a}(\mu)-b_t\tilde{x}_t(\nu))+\E(\lambda^*-\lambda)^\top (d-b_t\tilde{x}_t(\nu)) \\
    = & \E\left[f^*_t(b_t^\top(\lambda+\mu) ) +r^*(-\mu)\right] -  \E\left[\nabla f_t^*(b_t^\top(\lambda+\mu))^\top b_t^\top (\lambda+\mu-\lambda^*-\mu^*) \right.\\
    & \left. -\nabla r^*(-\mu)^\top (\mu-\mu^*)+d^\top (\lambda-\lambda^*) \right] \\
    = & s(\nu,d) -\left\langle\nabla_{\nu} D (\nu,\mu^*,d)- \nabla_{\nu} D (\nu^*,\mu^*,d),  (\nu-\nu^*) \right\rangle 
    \end{aligned}
\end{equation*}

By Assumption \ref{asm:regularizer} and \ref{asm:nondegeneracy},  we get
\begin{equation*}
\begin{aligned}
   g(\nu^*)-g(\nu)= &s(\nu^*,d)-s(\nu,d) + \left\langle\nabla_{\nu} D (\nu,\mu^*,d)- \nabla_{\nu} D (\nu^*,\mu^*,d),  (\nu-\nu^*) \right\rangle  \\ 
    & + \left\langle\nabla s(\nu,d)-\nabla s(\nu^*,d), \nu-\nu^* \right\rangle \le \left(2\Bar{\cL}_s-\underline{\cL}_s \right)\norm{\nu-\nu^*}^2_2 \\
    & = (\Bar{b}^2\Bar{\cL}_f-\frac{1}{2}\sigma_{\min} \underline{\cL}_f )\norm{\nu-\nu^*}_2^2
\end{aligned}
\end{equation*}

Then Lemma \ref{lemma:alg_conv} gives rise to the following bound. 
$$
\E\left[\sum_{t=1}^{\tau} g(\nu^*)-g(\nu_{t-1})\right] \le O( \log T) .
$$

\vspace{2mm}
\noindent\textit{Step 2: bounding $\textsf{R.2}$}.  This term can be controlled by the definition of stopping time and Lemma~\ref{lemma:early_stop}. 

\begin{equation*}
    \begin{aligned}
    \E &\left[  2(\bar{f}+\bar{r}+C_3)(T-\tau) + \left\langle \lambda^*,  \sum_{t=1}^{\tau}( d - b_t x_t) \right\rangle   \right] \\
   & =\E\left[2(\bar{f}+\bar{r}+C_3)(T-\tau)+ \left\langle \lambda^*,  d_{\tau} (T-\tau) -d(T-\tau) \right\rangle \right]\\
   & \le \E\left[2(\bar{f}+2\bar{r}+C_3)(T-\tau) +\sum_{i\in I_{\textsf{B}}}\lambda^*_i (d_i+\delta_d) (T-\tau)\right] \\
   & \le  (2\bar{f}+2\bar{r}+2C_3+ (\norm{d}+\sqrt{m}\delta_d) \frac{2(\bar f+\bar r)}{\underline{d}} ) \E (T-\tau) = O(\log T).
    \end{aligned}
\end{equation*}
\end{proof}
Thus we finish the proof.
\vspace{2mm}

\noindent \textit{Step 3: bounding $\textsf{R.3}$}. This term requires the most effort. It concerns the combined effects of variable splitting and complementary slackness. The following lemma is important for bounding this term. 

\begin{lemma}\label{lemma:c_s}
 Suppose $i\in I_{\textsf{NB}}$. Under Assumptions \ref{asm:basic}-\ref{asm:nondegeneracy:3}, Algorithm \ref{alg:framework} with selected dual optimizer $\{\mathcal{B}_t\}_{t\ge 1}$ satisfying Condition \ref{condition:conv} and stopping time (\ref{eq:tau}) ensures
 \begin{equation*}
 \begin{aligned}
     \E&\norm{\hat{\mu}_{I_{\textsf{NB} }, T}-\mu^*_{I_{\textsf{NB} } }  }^2_2 \le O(\frac{\log T}{T} ),\text{ and  } \ \E \norm{\sum_{t=1}^{\tau}(\tilde{a}(\mu^*)-b_t x_t)}^2_2 \le O(T\log T).   \\
 \end{aligned}
 \end{equation*}
 
\end{lemma}

The proof Lemma \ref{lemma:c_s} exploits the local smoothness of $r$ and $\tilde{x}_t$  with the help of the optimality of $\mu^*$, i.e., $\tilde{a}(\mu^*)=\E b_t \tilde{x}_t(
\nu^*)$ . We first check the binding dimensions: for all the binding dimensions $i\in I_{\textsf{B}} $, since $\E (b_t \tilde{x}_t(
\nu^*))_i=d_i$, by the definition of stopping time, it can be shown that
\begin{equation}
    \begin{aligned}
     \E\left\langle \hat{\mu}_{I_{\textsf{B}},T}-\mu^*_{I_{\textsf{B}}}, \sum_{t=1}^{\tau} \tilde{a}(\mu^*)_{I_{\textsf{B}}}- (b_t x_t)_{I_{\textsf{B}}} \right\rangle=  & \E\left\langle \hat{\mu}_{I_{\textsf{B}},T}-\mu^*_{I_{\textsf{B}}}, \sum_{t=1}^{\tau} d_{I_{\textsf{B}}}- (b_t x_t)_{I_{\textsf{B}}} \right\rangle \\
     & \le \E\sqrt{mn}GD\Bar{b}(d+\delta_d)(T-\tau) \\
     & =O(\log T)
    \end{aligned}
\end{equation}
We then go back to the non-binding dimension $i\in I_{\textsf{NB}} $ with the help of Lemma \ref{lemma:c_s}. By Cauchy–Schwarz inequality, we get
\begin{equation}
\label{eq:regret_decomp_p3}
    \begin{aligned}
    \E\left[  \left\langle\hat{\mu}_{I_{\textsf{NB}},T}-\mu^*_{I_{\textsf{NB}}} , \sum_{t=1}^{\tau}\tilde{a}(\mu^*)_{I_{\textsf{NB}}}-(b_t x_t)_{I_{\textsf{NB}}} \right\rangle \right]\le & \bigg( \E\norm{ \hat{\mu}_{I_{\textsf{NB}},T}-\mu^*_{I_{\textsf{NB}}} }^2_2\E \norm{\sum_{t=1}^{\tau}(\E b_t \tilde{x}_t(
\nu^*)-b_t x_t)_{I_{\textsf{NB}}}}^2_2 \bigg)^{1/2}.\\
    \end{aligned}
\end{equation}
Thus $\textsf{R.3}$ can be controlled by $\log T $. The proof is concluded. The constant factor is as large as 
$$\mathring{C}\lesssim \frac{ m^2n\log m\Bar{b}^6 D^4 L_1^2  }{\delta_d^2\sigma_{\min}^2\underline{\cL}_f^2}\cdot\left( (\bar{f}+\bar{r} + \sqrt{mn}GD\Bar{b} )\vee \frac{n\Bar{b}^2G^2 + m\Bar{\cL}_r^2 \delta_0^2}{\delta_0^2 }\right) $$ 
from the proof of Lemma \ref{lemma:c_s}. Generally, taking $m\lesssim n$ we have the order of $\mathring{C}=$ $O(m^2n^2\log m)$

\subsection{Proof of Lemma \ref{lemma:c_s}}\label{sec:proof_c_s}
For the $\E\norm{\hat{\mu}_{I_{\textsf{NB}},T}-\mu^*_{I_{\textsf{NB}}}}^2_2$, $i\in I_{\textsf{NB}}$, the optimality of $\mu^*$  implies $\tilde{a}(\mu^*)=\E b_t \tilde{x}_t(\nu^*)$, thus by conjugate we have
\begin{equation*}
    \begin{aligned}
    \E&\norm{\hat{\mu}_{I_{\textsf{NB}}, T}-\mu^*_{I_{\textsf{NB}}} }^2_2 =  \E \norm{\nabla_{I_{\textsf{NB}}} r(\tilde{a}(\mu^*)) -\nabla_{I_{\textsf{NB}}} r(\frac{\sum_{t=1}^{T}b_t x_t}{T}) }^2_2 \\
    & \le n\Bar{b}^2G^2\PP\left(\norm{ \E b_t \tilde{x}_t(\nu^*) -\frac{\sum_{t=1}^{T}b_t x_t}{T}}\ge \delta_0\right)  + \E  m\Bar{\cL}_r^2\norm{\tilde{a}(\mu^*)-\frac{\sum_{t=1}^{T}b_t x_t}{T}}^2_2 \\
    &\le \frac{n\Bar{b}^2G^2 + m\Bar{\cL}_r^2 \delta_0^2}{\delta_0^2 }  \E\norm{\frac{\sum_{t=1}^{T}b_t x_t-\E b_t \tilde{x}_t(\nu^*)}{T}-\frac{\sum_{t=1}^{ \tau }\E\left[ b_t \tilde{x}_t(\nu_{t-1})\middle | \cH_{t-1}\right]  }{T}+ \frac{\sum_{t=1}^{ \tau }\E\left[ b_t \tilde{x}_t(\nu_{t-1})\middle | \cH_{t-1}\right]}{T} }^2_2 \\
    &\le  3 C_0 \left. \E\norm{\frac{\sum_{t=1}^{\tau}b_t \tilde{x}_t(\nu_{t-1})- \E\left[ b_t \tilde{x}_t(\nu_{t-1})\middle | \cH_{t-1}\right] }{T}}^2_2 \right. \text{(part \ref{sec:proof_c_s}.1)}\\
    & + 3 C_0 \left.  \E\norm{\frac{\sum_{t=1}^{\tau}\E\left[ b_t \tilde{x}_t(\nu_{t-1})\middle | \cH_{t-1}\right]- \E b_t\tilde{x}_t(\nu^*) }{T}}^2_2\right. \text{(part \ref{sec:proof_c_s}.2)}\\
    & + 3 C_0 \left.  \E\norm{\frac{\sum_{t=\tau+1}^{T}b_t x_t- \E b_t \tilde{x}_t(\nu^*) }{T}}^2_2\right. \text{(part \ref{sec:proof_c_s}.3)}.
    \end{aligned}
\end{equation*}

For the part \ref{sec:proof_c_s}.1, notice that this is a martingale series. Thus we have 
\begin{equation*}
    \begin{aligned}
\E\norm{\frac{\sum_{t=1}^{\tau}b_t \tilde{x}_t(\nu_{t-1})- \E\left[ b_t \tilde{x}_t(\nu_{t-1})\middle | \cH_{t-1}\right] }{T}}^2_2 & \le \frac{\sum_{t=1}^T \operatorname{Var}(b_t \tilde{x}_t(\nu_{t-1})- \E\left[ b_t \tilde{x}_t(\nu_{t-1})\middle | \cH_{t-1}\right])  }{T^2} \\
&\le \frac{ n D^2 \bar{b}^2 }{T}
    \end{aligned}
\end{equation*}

For the part \ref{sec:proof_c_s}.2, applying Assumption \ref{asm:k-thmoment} we can yield 
\begin{equation*}
    \begin{aligned}
    \E &\norm{\frac{\sum_{t=1}^{\tau}\E\left[ b_t \tilde{x}_t(\nu_{t-1})\middle | \cH_{t-1}\right]- b_t \tilde{x}_t(\nu^*) }{T}}^2_2  \le  \E \frac{\tau\sum_{t=1}^{\tau}\norm{ \E\left[ b_t \tilde{x}_t(\nu_{t-1})\middle | \cH_{t-1}\right]- b_t \tilde{x}_t(\nu^*)}^2_2 }{T^2} \\
    & \le \frac{\E \sum_{t=1}^{T} \norm{\E\left[ b_t \tilde{x}_t(\nu_{t-1})\middle | \cH_{t-1}\right]- b_t \tilde{x}_t(\nu^*)}^2_2\mathbb{I}(t\le \tau)}{T} \\
    &= \frac{\sum_{t=1}^{T}\E \norm{\left[\E\left[b_t \tilde{x}_t(\nu_{t-1})- b_t \tilde{x}_t(\nu^*) \middle| \cH_{t-1},b_t\right]\mathbb{I}(t\le \tau)\right]}^2_2 }{T} \\
    & \stackrel{\text { (a) }}{\leq} L_1^2 \bar{b}^2\frac{\sum_{t=1}^{T}\E \left[\norm{\nu_{t-1}-\nu^*}^2_2 \mathbb{I}(t\le \tau)\right] }{T}  =  L_1^2  \bar{b}^2\frac{\E \left[\sum_{t=1}^{\tau}\norm{\nu_{t-1}-\nu^*}^2_2\right] }{T} \\
    &  \stackrel{\text { (b) }}{\leq} \frac{ L_1^2  \bar{b}^2 m^2n\log m D^2 \Bar{b}^4 L_1^2 \log T}{\sigma_{\min}^2\underline{\cL}_f^2 T} 
    \end{aligned}
\end{equation*}

(a) is by Assumption \ref{asm:k-thmoment} and the fact $\{t\le \tau\}\in \sigma(\cH_{t-1})$, and $\tilde{x}_t(\cdot)\indep \nu_{t-1}$. (b) is by Lemma \ref{lemma:alg_conv}.




For the part \ref{sec:proof_c_s}.3, since $\norm{b_t x_t-\E b_t \tilde{x}_t(\nu^*)}^2_2\le n D^2 \bar{b}^2$, by Lemma \ref{lemma:early_stop}, we have
\begin{equation*}
    \begin{aligned}
    \E\norm{\frac{\sum_{t=\tau+1}^{T}b_t x_t-\E b_t  \tilde{x}_t(\nu^*) }{T}}^2_2 \le & \frac{\E\left[(T-\tau)\sum_{t=\tau+1}^{T}\norm{b_t x_t- \E b_t  \tilde{x}_t(\nu^*)}^2\right]}{T^2} \\
    \le & n D^2 \bar{b}^2\frac{\E(T-\tau)}{T}
    \le \frac{C m^2n^2\log m\Bar{b}^6 D^4 L_1^2 }{\delta_d^2\sigma_{\min}^2\underline{\cL}_f^2}  \frac{\log T}{T}
    \end{aligned}
\end{equation*}

We then go back to control the next term $\E \norm{\sum\limits_{t=1}^{\tau}(\tilde{a}(\mu^*)-b_t x_t)}^2_2$. This can be bounded exactly by part \ref{sec:proof_c_s}.1 and \ref{sec:proof_c_s}.2. From the argument above, we show that  $\E \norm{\sum\limits_{t=1}^{\tau}(\tilde{a}(\mu^*)-b_t x_t)}^2_2$ is controlled by $ O(T \log T) $ . Thus we finish the proof.


\subsection{Proof of Theorem \ref{thm:regret_lb} and \ref{thm:ogd_lb}}
 We specify a non-regularized case where $f_t(x)=-\frac{1}{4}(x-2\xi_t)^2+\xi_t^2$, with  fixed cost $b_t=1$, average resource capacity $d=\frac{1}{2}D$, and $\xi_t$ following two-point distribution $\PP(\xi_t=\frac{1}{2}D)=\PP(\xi_t=\frac{3}{4}D)=\frac{1}{2} $. Then the dual problem is
\begin{equation*}
    D_t(\lambda)=\begin{cases}
    \frac{1}{2}D\lambda & \text{ if } \lambda>\xi_t \\
    -\frac{1}{4}D+\xi_t-\frac{1}{2}D\lambda &\text{ if }  \lambda<\xi_t-\frac{1}{2}D \\
    \lambda^2-2(\xi_t-\frac{1}{4}D)\lambda+\xi_t^2 &\text{ if } \xi_t-\frac{1}{2}D\le\lambda\le\xi_t.
    \end{cases}
\end{equation*}

 Suppose $\lambda^*$ is the optimal solution to the deterministic problem $\min_{\lambda\ge 0} D(\lambda)=\E D_t(\lambda)$. Without loss of generality, we assume that our dual variable $\lambda$ is taken within $\left[\frac{1}{4}D,\frac{1}{2}D\right]$ since we know that $\lambda^*=\E\xi_t-\frac{1}{4}D=\frac{3}{8}D \in\left[\frac{1}{4}D,\frac{1}{2}D\right]$.
\begin{equation*}
    D_t(\lambda)=f^*_t(\lambda)+d^\top\lambda =   \lambda^2-2(\xi_t-\frac{1}{4}D)\lambda+\xi_t^2.
\end{equation*}
For the dual-based police $\left\{\lambda_t\right\}_{t=0}^{T-1}$, the corresponding primal variable is $x_t=\tilde{x}_t(\lambda_{t-1})= 2\xi_t-2\lambda_{t-1}$ or void if the resource is depleted. We have the following regret:


$$
\begin{aligned}
     \operatorname{Regret}(A) = & R^*(\mathcal{P}) -R\left(A\middle| \mathcal{P}\right) \\
     = & \E\left[ \max_{x_t\in[0,D]} \left\{ \sum_{t=1}^{T} f_t(x_t) \text{  s.t. }\sum_{t=1}^T x_t \le \frac{1}{2}D T \right\} \right] - \E\left[ \sum_{t=1}^{T} f_t(x_t) \right] \\
     = & \E\left[ \min_{\lambda\ge0} \left\{ \sum_{t=1}^{T} D_t(\lambda) \right\} \right] - \E\left[ \sum_{t=1}^{T} f_t(x_t) \right] \\
     = & \E\left[ \sum_{t=1}^{T} D_t(\lambda^*_T)  \right] - \E\left[ \sum_{t=1}^{T} f_t(x_t) \right]
\end{aligned}
$$
Define the corresponding 
$$g(\lambda)=\E\left[f_t(\tilde x_t(\lambda))+\langle d-b_t\tilde{x}_t(\lambda),\lambda^*\rangle \right]= D(\lambda)-\langle \nabla D (\lambda),\lambda-\lambda^* \rangle$$
We have $g(\lambda^*)=D(\lambda^*)$ and $g(\lambda^*)-g(\lambda)= (\lambda^*-\lambda)^2$. For the quadratic function $D_t$, we always have $D_t(\lambda_1)-D_t(\lambda_2)=\nabla D_t(\lambda_2)(\lambda_1-\lambda_2)+(\lambda_1-\lambda_2)^2$. Thus it follows that

\begin{equation*}
    \begin{aligned}
         \operatorname{Regret}(A) & =  \E\left[ \sum_{t=1}^{T} D_t(\lambda^*_T)  \right] - T D(\lambda^*) + T D(\lambda^*)- \E\left[ \sum_{t=1}^{T} f_t(x_t) \right] \\
         & = \E\left[ \sum_{t=1}^{T} D_t(\lambda^*_T)-D_t(\lambda^*)  \right] +  T g(\lambda^*)- \E\left[ \sum_{t=1}^{T} f_t(x_t) \right]\\
         & = -\E \left[ \sum_{t=1}^{T} \left[\nabla D_t(\lambda^*_T)(\lambda^*-\lambda^*_T)\right]+T(\lambda^*-\lambda^*_T)^2\right] + T g(\lambda^*)- \E\left[ \sum_{t=1}^{T} f_t(x_t) \right]\\
         & = -T\E(\lambda^*-\lambda^*_T)^2 + T g(\lambda^*)- \E\left[ \sum_{t=1}^{T} f_t(x_t) \right].
    \end{aligned}
\end{equation*}

By the dual convergence in Theorem \ref{thm:dual_conv}, we know that the first term $T\E(\lambda^*-\lambda^*_T)^2$ can be bounded by a constant. Now we handle the second term by controlling the stopping time. Define the stopping time $\tau_0=\min\left\{ t\in[T]\middle| \sum_{i=1}^{t}x_t\ge \frac{1}{2}D T-D \right\}\cup\{T\}$. Then when $t\le \tau_0$, we always have $x_t=\tilde{x}_t(\lambda_{t-1})= 2\xi_t-2\lambda_{t-1}$, and $0\le \sum_{t=\tau_0+1}^{T}x_t\le D$ for $t>\tau$. Then we have

\begin{equation*}
    \begin{aligned}
        \E\left[ \sum_{t=1}^{T} f_t(x_t) \right] & \le   \E\left[ \sum_{t=1}^{\tau_0} f_t(\tilde x_t(\lambda_{t-1}))  +\langle \frac{1}{2}D-\tilde{x}_t(\lambda_{t-1}),\lambda^*\rangle\right]   + \E\left[ \sum_{t=\tau_0+1}^{T} f_t(x_t) +\langle \frac{1}{2}D-x_t(\lambda),\lambda^*\rangle \right]  \\
         & \le \E \sum_{t=1}^{\tau} g(\lambda_{t-1}) +   \E \left[\sum_{t=\tau_0+1}^{T} \frac{3}{4}D x_t +\frac{1}{2}D\lambda^*\right] \\
         & \le  \E \sum_{t=1}^{\tau} g(\lambda_{t-1}) + \frac{3}{16}D^2\E[T-\tau_0]+\frac{3}{4}D^2.
    \end{aligned}
\end{equation*}
The first inequality is because of the resource constraint, and the second one is because $f_t(x)\le f_t'(0)(x-0)\le \frac{3}{4}Dx$. If we specify $\lambda_{t-1} =\xi_t$ when the resource constraints are violated, we also have $ \E\left[ \sum_{t=1}^{T} f_t(x_t) \right]\le \E \sum_{t=1}^{T} g(\lambda_{t-1})$. Then

\begin{equation}\label{eq:one_d_lb_stopping}
    \begin{aligned}
        T g(\lambda^*)- \E\left[ \sum_{t=1}^{T} f_t(x_t) \right] & \ge \E \left[ \sum_{t=1}^{\tau_0} g(\lambda^*)-g(\lambda_{t-1}) \right]+ \E (g(\lambda^*)-\frac{3}{16}D^2)\E[T-\tau_0]-\frac{3}{4}D^2\\
       &  = \E\left[ \sum_{t=1}^{\tau_0} (\lambda^*-\lambda_{t-1})^2 \right] + \frac{5}{64}D^2\E[T-\tau_0] -\frac{3}{4}D^2,
    \end{aligned}
\end{equation}
or $T g(\lambda^*)- \E\left[ \sum_{t=1}^{T} f_t(x_t) \right]  \ge \E\left[ \sum_{t=1}^{T} (\lambda^*-\lambda_{t-1})^2 \right]$ . Applying van Trees inequality to the estimation of $\lambda^*$ \citep{li2021online}, we can prove the Theorem \ref{thm:regret_lb}. To prove the Theorem \ref{thm:ogd_lb}, we only need to show the stopping time $\E[T-\tau_0]\ge \Omega(\sqrt{T})$ given the convergence condition. This proof is inspired by \cite{arlotto2019uniformly}. Denote $t'=\lfloor T-\sqrt{T}\rfloor$. We show that $\PP(\tau_0\le t')$ is larger that a constant $c$ so that $\E\tau_0\le (1-c)T+c(T-\sqrt{T})\le T-c\sqrt{T}$.

\begin{equation*}
    \begin{aligned}
        \PP(\tau_0\le t') & =\PP\left(\sum_{t=1}^{t'}2(\xi_t-\lambda_{t-1})\ge \frac{DT}{2}-D \right)\\
        & \ge \PP\left(\left\{\sum_{t=1}^{t'}2(\xi_t-\lambda^*)\ge \frac{DT}{2}-D +\varepsilon D\sqrt{ t'} \right\} \cap \left\{ \sum_{t=1}^{t'} |\lambda_{t-1}-\lambda^*|< \varepsilon D\sqrt{ t'} \right\} \right) \\
        & \ge \PP \left(\left\{\sum_{t=1}^{t'}2(\xi_t-\lambda^*)\ge \frac{DT}{2}-D +\varepsilon D\sqrt{ t'} \right\}\right) -\PP\left(\sum_{t=1}^{t'} |\lambda_{t-1}-\lambda^*|\ge \varepsilon D\sqrt{ t'}  \right)
    \end{aligned}
\end{equation*}

With the condition $\E|\lambda_t-\lambda^*|\le c_2 D/\sqrt{t+1}$, we have $\PP\left(\sum_{t=1}^{t'} |\lambda_{t-1}-\lambda^*|\ge \varepsilon D\sqrt{ t'}  \right)\le \frac{2c_2 }{\varepsilon}$ by Chebyshev's inequality. Then it holds that

\begin{equation*}
    \begin{aligned}
       \PP(\tau_0\le t') 
       & \ge \PP \left(\left\{\sum_{t=1}^{t'}2(\xi_t-\lambda^*)\ge \frac{DT}{2}-D +\varepsilon D\sqrt{ t'} \right\}\right) -\frac{2c_2}{ \varepsilon} \\
        & = \PP \left(\left\{\sum_{t=1}^{t'}\frac{4}{D}(\xi_t-\frac{D}{2})\ge \frac{t'}{2}+(T-t')-2 +2\varepsilon\sqrt{t'} \right\}\right) -\frac{2c_2}{ \varepsilon} \\
        & \ge \PP \left(\left\{\sum_{t=1}^{t'}\frac{4}{D}(\xi_t-\frac{D}{2})\ge \frac{t'}{2}+(1+2\varepsilon)\sqrt{t'} \right\}\right) -\frac{2c_2}{ \varepsilon} ,
    \end{aligned}
\end{equation*}
where $\sum_{t=1}^{t'}\frac{4}{D}(\xi_t-\frac{D}{2})$ follows the binomial distribution $B(t',\frac{1}{2})$, with mean $\mu=\frac{t'}{2}$ and standard deviation $\sigma =\frac{\sqrt{t'}}{2}$. The second inequality is because $T-t'\le \sqrt{T}+1$ and $\sqrt{T}-\sqrt{t'}\le \sqrt{T}-\sqrt{T- \sqrt{T}} = \frac{\sqrt{T}}{\sqrt{T- \sqrt{T}}+\sqrt{T}}\le 1$. For the binomial distribution, $\PP(X\ge \mu+x\sigma)$ converge to $\Phi(-x)$ for any $x$ with known $O(\frac{1}{\sqrt{n}}) $ speed by Berry-Esseen CLT where $\Phi(x)$ is the distribution function of standard normal distribution. We let $c_2=\sup_{\varepsilon>0}\varepsilon \Phi(-2-4\varepsilon)/{4} $. Then there exists $\varepsilon_0>0$ such that when $T$ is large enough, $\PP\left(\left\{\sum_{t=1}^{t'}\frac{4}{D}(\xi_t-\frac{D}{2})\ge \frac{t'}{2}+(1+2\varepsilon_0)\sqrt{t'} \right\}\right)\ge\frac{3c_2}{\varepsilon_0}$, which indicates that $\PP(\tau_0\le t') \ge \frac{c_2}{\varepsilon_0}$. This makes our proof complete.

\subsection{Proof of Theorem \ref{thm:infreq} }
Theorem \ref{thm:infreq} can be proved following the same path as in the proof of Theorem \ref{thm:regret_ub}. The key is that, with a good initialization, Lemma \ref{lemma:alg_conv} and \ref{lemma:early_stop} still hold. This has actually been mentioned in the proof of Lemma \ref{lemma:alg_conv} and \ref{lemma:early_stop} and thus omitted here.
\subsection{Proof of Theorem \ref{thm:fast}}
To prove the $O(\log^2 T)$ bound, we revise the previous proof of Lemma \ref{lemma:alg_conv} and \ref{lemma:early_stop} and re-compute some important rates. For online gradient descent, without loss of generality, suppose $d_t\in \Omega_d$ before the stopping time $\tau$. Then By Assumption \ref{asm:nondegeneracy} and the stochastic gradient descent approach in \cite{rakhlin2012making}, for $t\ge T_j$ ,we have 
\begin{equation*}
\begin{aligned}
       \E &\left( \norm{\nu_{t+1} -\nu^*(d_{T_j}) }_2^2 + \norm{\mu_{t+1} -\mu^*(d_{T_j}) }_2^2\right) \le \E \norm{\nu_{t}-\eta_{t+1} \nabla D_{t+1,\nu}(\nu_{t},\mu_{t},d_{T_j} ) -\nu^*(d_{T_j}) }_2^2 \\
        & \ \ + \E\norm{\mu_{t+1} -\mu^*(d_{T_j})-\eta_{t+1} \nabla D_{t+1,\mu}(\nu_{t},\mu_{t},d_{T_j} ) }_2^2, \\
\end{aligned}
\end{equation*}
and we also have
\begin{equation*}
    \begin{aligned}
         \E &\norm{\nu_{t+1} -\nu^*(d_{T_j}) }_2^2 \le \E \norm{\nu_{t}-\eta_{t+1} \nabla D_{\nu}(\nu_{t},\mu_{t},d_{T_j} ) -\nu^*(d_{T_j}) }_2^2, \text{ by projection} \\
     \E & \norm{\nu_{t+1} -\nu^*(d_{T_j}) }_2^2 \le (1-2\eta_{t+1} \sigma_{\min} \underline{\cL}_f) \E \norm{\nu_{t} -\nu^*(d_{T_j}) }_2^2 + \eta_{t+1}^2 n \Bar{b}^2D^2.
    \end{aligned}
\end{equation*}
And hence, by choosing $\eta_{t}=1/( \sigma_{\min} \underline{\cL}_f  (t-T_j+1))$, we have
\begin{equation}\label{eq:fast-converge}
   \E\left [ \norm{ \nu_t -\nu^*(d_{T_j})  }_2^2 \middle| \cH_{T_j} \right]\le \frac{4 n \Bar{b}^2 D^2 }{\sigma_{\min}^2 \underline{\cL}_f^2  (t-T_j+1)},  
\end{equation}
where $\nu^*(d_{T_j})$ is part of the optimal solution to $D(\blambda,d_{T_j} )$. See  \cite{rakhlin2012making} for a detailed approach. By this convergence rate, we have the following rates
\begin{equation*}
    \E\left[\sum_{t=1}^{\tau } \norm{\nu_{t-1}-\nu^*(d_{t-1})}^2_2\right] \le  \sum_{j=1}^{J} \frac{4 n \Bar{b}^2 D^2(\log (T_{j+1}-T_j ) +1)  }{\sigma_{\min}^2 \underline{\cL}_f^2 } \le \frac{C n \Bar{b}^2 D^2(\log^2 T)  }{\sigma_{\min}^2 \underline{\cL}_f^2 }
\end{equation*}
We now revisit the previous induction of $d_t$ in \eqref{eq:dt-induction}. For the three parts $A'$, $B'$, $C'$, we have the new rates:
\begin{equation*}
    \begin{aligned}
   A' & = \E  \frac{\left(\sum_{k=T_{j}+1}^{T_{j+1}}\left[d_{i,T_{j}}' -\E\left[\left(b_k\tilde{x}_{k}(\nu^*(d_{T_j}) )\right)_i|\cH_{T_j}\right] +\E\left[\left(b_k\tilde{x}_{k}(\nu^*(d_{T_j}) )\right)_i|\cH_{T_j}\right]- \left(b_k\tilde{x}_{k}(\nu_{k-1 } )\right)_i\right] \right)^2 }{(T-T_{j+1})^2} \mathbb{I}(\tau> T_{j})\\
   & \le 2\E\frac{\sum_{k=T_{j}+1}^{T_{j+1}}\left(\E\left[ \left(b_k\tilde{x}_{k}(\nu^*(d_{T_{j}}) )\right)_i-\left(b_k\tilde{x}_{k}(\nu_{k-1 } )\right)_i \middle| \cH_{T_j} \right]\right)^2 }{T-T_{j+1} }\\ 
   & \le \frac{2 n \Bar{b}^4 L_1^2 D^2}{\sigma_{\min}^2 \underline{\cL}_f^2 } \frac{ \log(T_{j+1}-T_{j})  }{T-T_{j+1} } \\ 
   & \le \frac{C_4 n\Bar{b}^4 L_1^2 D^2 \log T   }{ \sigma_{\min}^2 \underline{\cL}_f^2 \rho^{j}T } 
\end{aligned}
\end{equation*}
And for $B'$, we also have: 
 $$d_{i,T_{j}}' -\E\left[ \left(b_k\tilde{x}_{k}(\nu^*(d_{T_{j}}) )\right)_i \middle| \cH_{T_j} \right]=0, \text{ for any } k\ge T_j+1.$$
 For part $C'$, it follows that
 \begin{equation*}
    \begin{aligned}
   C' & = 2 \E\left[\E\frac{\left(d_{i,T_{j}}' - d_{i}\right) \left(\sum_{k=T_{j}+1}^{T_{j+1}}\left(b_k\tilde{x}_{k}(\nu_{k-1 } )\right)_i-\left(b_k\tilde{x}_{k}(\nu^*(d_{T_{j}}) )\right)_i \right) }{T-T_{j+1}} \mathbb{I}(\tau> T_j)\middle| \cH_{T_j} \right]\\
   & \le 2 \Bar{b}^2 L_1\E \frac{\abs{d_{i,T_{j}}' - d_{i}} \sum_{k=T_{j}+1}^{T_{j+1}}\norm{\nu_{k-1 }-\nu^*(d_{T_{j}}) } }{T-T_{j+1} }\\
   &  \le {\frac{C \sqrt{n} \Bar{b}^3 D L_1 }{\sigma_{\min} \underline{\cL}_f  } \sqrt{\mathbb{E}\left(d_{i ,T_{j}}^{\prime}-d_{i}\right)^{2}} \sqrt{\frac{1}{T-T_j} }  } \\
   & \le {\frac{C \sqrt{n} \Bar{b}^3 D L_1 }{\sigma_{\min} \underline{\cL}_f  } \sqrt{\mathbb{E}\left(d_{i ,T_{j}}^{\prime}-d_{i}\right)^{2}} \sqrt{\frac{1}{\rho^j T } }  }.
    \end{aligned}
\end{equation*}
Thus, we then get the recurrence relation of $d_{i,T_{j} }^{\prime}-d_{i}$:
\begin{equation*}
    \begin{aligned}
    \mathbb{E}\left(d_{i, T_{j+1}}^{\prime}-d_{i}\right)^{2} \leq & \mathbb{E}\left(d_{i ,T_{j}}^{\prime}-d_{i}\right)^{2}
     + \frac{C_4 n\Bar{b}^4 L_1^2 D^2 \log T   }{ \sigma_{\min}^2 \underline{\cL}_f^2\rho^{j}T }\\
     &+{\frac{C \sqrt{n} \Bar{b}^3 D L_1}{\sigma_{\min} \underline{\cL}_f} \sqrt{\mathbb{E}\left(d_{i ,T_{j}}^{\prime}-d_{i}\right)^{2}} \sqrt{\frac{1}{\rho^j T } }  }.
    \end{aligned}
\end{equation*}
Since $d_0=d$, by induction we have $$\mathbb{E}\left(d_{i ,T_{j}}^{\prime}-d_{i}\right)^{2}\le C_5 C_\rho n D^2 \Bar{b}^6 L_1^2 \frac{\log T}{\sigma_{\min}^2 \underline{\cL}_f^2\rho^j T} $$

So, we have
$$
\begin{aligned}
2 \E\left[\sum_{t=1}^{\tau}\norm{\nu^*(d_{t-1})-\nu^*}^2\right] 
\le  & 2 \E\left[\sum_{t=1}^{\tau}\sum_{i\in I_{\textsf{B}}} (d_{i,{t-1}}-d_i)^2\right] \\
\le & 2m \mathbb{E}\sum_{j=1}^{J} (T_j-T_{j-1}) \left[\left(d_{i, T_j}^{\prime}-d_{i}\right)^{2}\right] \le C_5 \frac{m n D^2 \Bar{b}^6 L_1^2 \log^2 T}{\sigma_{\min}^2\underline{\cL}_f^2} +C, \\
 \text{ and } \E\left[\sum_{t=1}^{\tau} \norm{\nu_{t-1}-\nu^*}^2_2\right]\le  &  \frac{C mn \Bar{b}^6 D^2 L_1^2 (\log^2 T)  }{\sigma_{\min}^2 \underline{\cL}_f^2 } +  2C_2,
\end{aligned}
$$
Extending this computation to the control of stopping time, we also have 
\begin{equation*}
     \E (T-\tau) \le \frac{C mn \Bar{b}^6 D^2 L_1^2\log^2 T }{\delta_d^2\sigma_{\min}^2\underline{\cL}_f^2},
\end{equation*}
where the proof essentially follows Lemma \ref{lemma:early_stop}. Similarly, following the proof of Lemma \ref{lemma:c_s}, we also have
 \begin{equation*}
 \begin{aligned}
     \E&\norm{\hat{\mu}_{I_{\textsf{NB}},T}-\mu^*_{I_{\textsf{NB}}} }^2_2 \le O(\frac{\log^2 T}{T} ),\text{ and  } \ \E \norm{\sum_{t=1}^{\tau}(\tilde{a}(\mu^*)-b_t x_t)}^2_2 \le O(T\log^2 T).   \\
 \end{aligned}
 \end{equation*}
 Combining these together, we have the regret upper bound 
\begin{equation*}
    \text{Regret}(A)\leq \widetilde{C}\log^2 T,
\end{equation*}
for some constant $\widetilde{C}=O(mn^2)$ depending on the values in Assumptions \ref{asm:basic}-\ref{asm:nondegeneracy:3}. Here the additional $n$ factor comes from the complementary slackness bound by the scale of our problem as is shown in Lemma \ref{lemma:c_s}.

\section{Numerical Experiments}\label{appdx:exp}

The implementation details on multiple input models are as follows: the dual updates are calculated by closed-form solutions to Equation~\eqref{eq:dual_update} under input I-III and by \textit{cvxpy} (\cite{diamond2016cvxpy}) under input IV. See Table~\ref{table:I} for parameter settings of different inputs. For each $T$, we randomly generate $T$ observations from distribution, run each algorithm in an online fashion, and keep record of their output. The regret is calculated as the difference between the offline optimal (solved by \textit{cvxpy}) and the online output. We report the average regret over 10 repetitions for all following graphs.
\begin{table}[!htbp]
    \centering
    \begin{tabular}{c|c|c|c|c}
        \hline
        Input & $f_t(x)$ & $r(x)$ & $b_{it}$ & $d_i$\\
        \hline
        I & $a_t^\top x$ & $-\kappa\norm{x-d/2}_2^2$ & U$(0,1)$ & 0.1\\
        \hline
        II & $a_t^\top x$ & $-\kappa\norm{x-d/2}_2^2$ & Bernoulli($p_i$) & U$(0.25,0.75)$\\
        \hline
        III & $-\frac{1}{4}x^2 + \xi_t x$ & 0 & 1 & 0.5 \\
        \hline
        IV & $-\frac{a_t}{4}x^2+\frac{a_t x}{2}$ & $ \kappa \min_i x_i/d_i $ & U$(0, 0.5)$ & 0.3 \\
        \hline
    \end{tabular}
    \caption{Parameter Settings of Inputs}
    \label{table:I}
\end{table}

\textit{Input model I: Online welfare maximization with costs, independent reward, and resource consumption.} The reward functions are linear as $f_t(x)=a_t^\top x$. The regularization function is the $\ell_2$ loss $r(x)= -\kappa \norm{x-d/2}_2^2$, which corresponds to the application of online welfare maximization with square costs. The reward coefficients $a_t$'s and the constraint coefficients $b_t$'s are i.i.d.~random vectors with dimension $m=6$. Sepecifically, $a_{it}$ is generated from the uniform distribution $U(0,10)$, and $b_{it}$ is generated from the uniform distribution $U(0,1)$. $\kappa$ is set to $0.001$.

\begin{figure}[!htbp]
    \centering    
    \includegraphics[width = 0.5\linewidth]{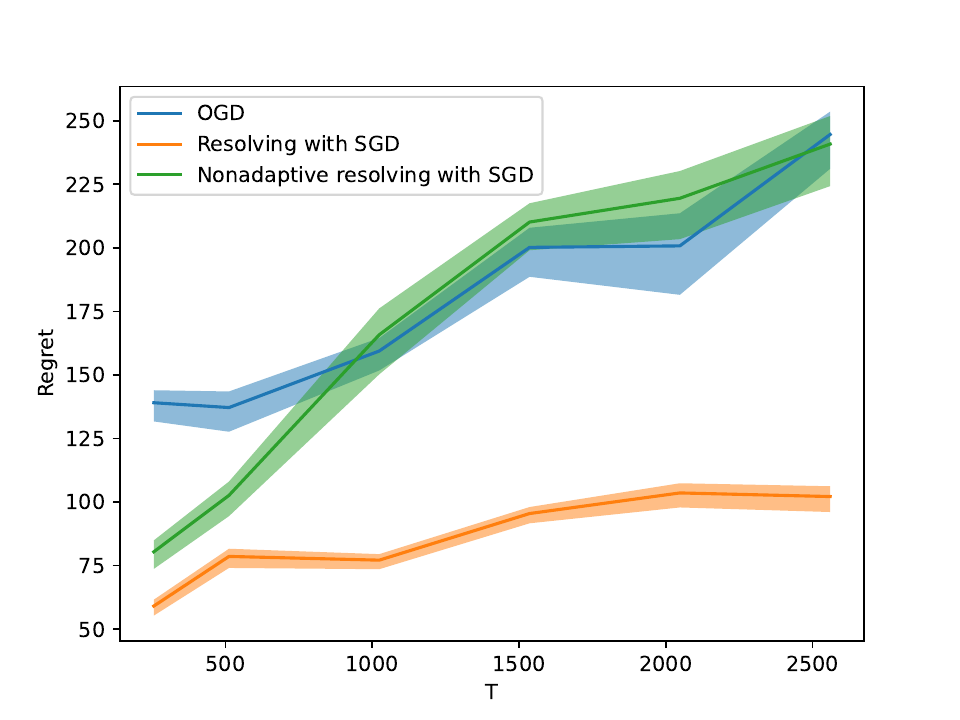}
    \caption{Regret versus horizon (T) under Input I. OGD stands for online gradient descent in \cite{balseiro2020dual}; resolving with SGD is our  Algorithm~\ref{alg:SGD}; nonadaptive resolving with SGD is the nonadaptive version (i.e., without updating the constraints) of Algorithm~\ref{alg:SGD}. }
    \label{fig:regret-input-I}
\end{figure}

\begin{figure}[!htbp]
     \centering
     \begin{subfigure}[t]{0.3\textwidth}
         \centering
         \includegraphics[width=\textwidth]{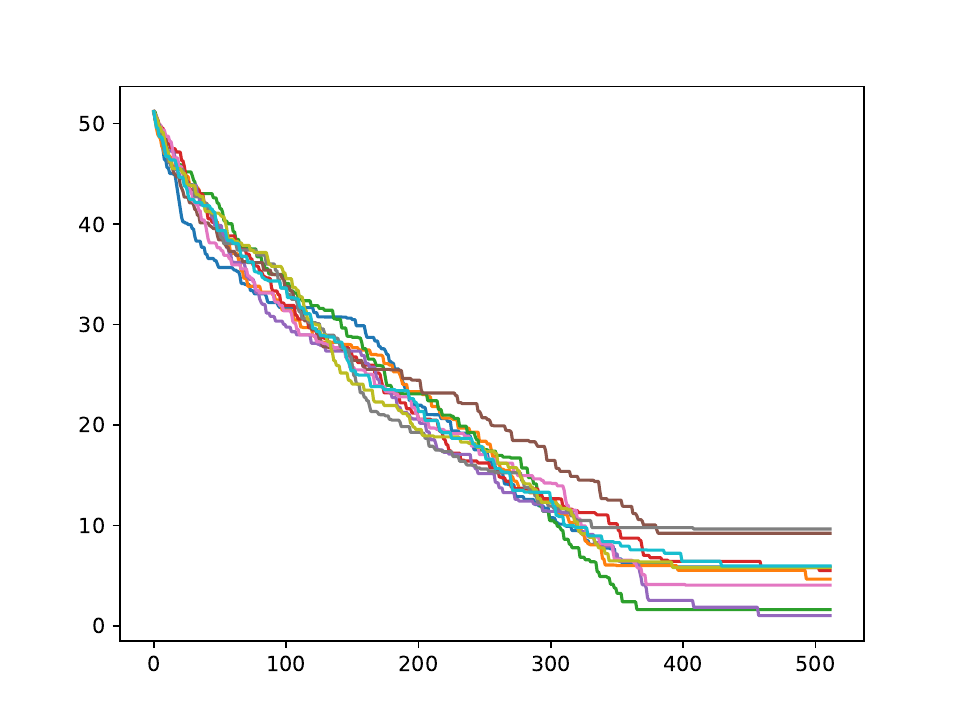}
         \caption{Online gradient descent}
         \label{fig:constraint-ogd-input-I}
     \end{subfigure}
     \hfill
     \begin{subfigure}[t]{0.3\textwidth}
         \centering
         \includegraphics[width=\textwidth]{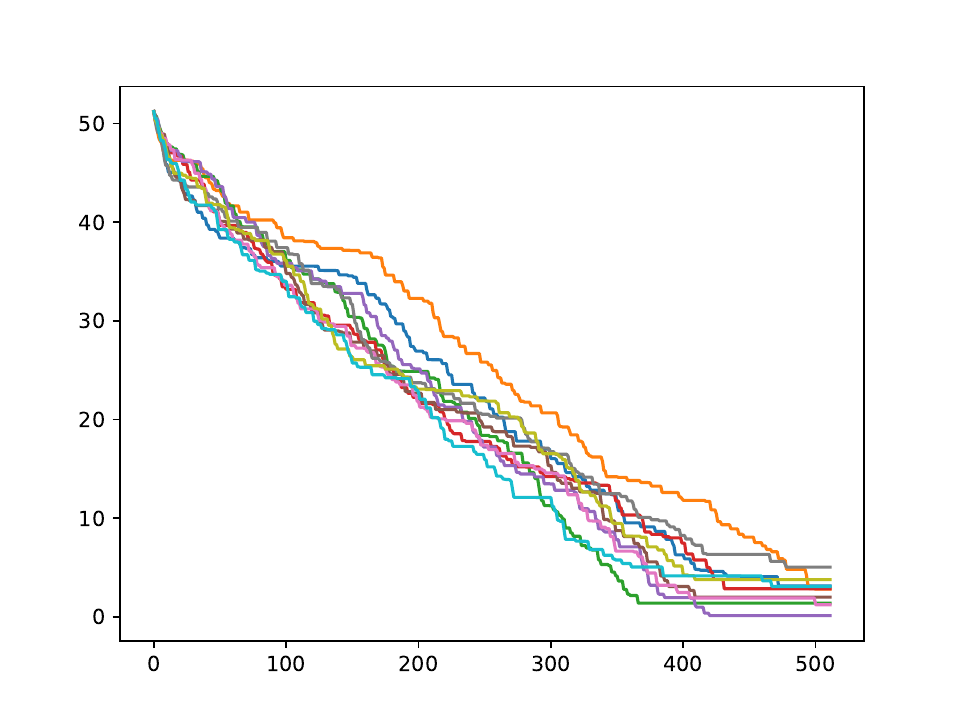}
         \caption{Nonadaptive version of Algorithm~\ref{alg:SGD}}
         \label{fig:constraint-mgd-n-input-I}
     \end{subfigure}
     \hfill
     \begin{subfigure}[t]{0.3\textwidth}
         \centering
         \includegraphics[width=\textwidth]{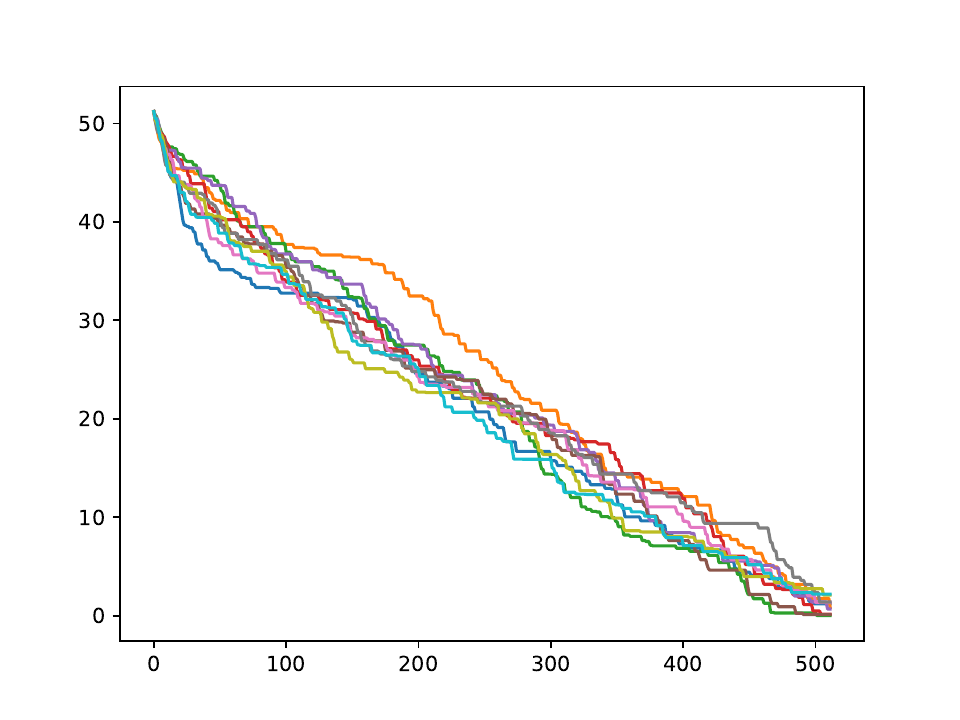}
         \caption{Algorithm~\ref{alg:SGD}}
         \label{fig:constraint-mgd-a-input-I}
     \end{subfigure}
    \caption{Remaining resource of one binding dimension versus time under input model I with $T=512$. Ten curves are displayed, each of which corresponds to one simulation.}
    \label{fig:constraint-input-I}
\end{figure}

To illustrate how the regret scales with the time horizon $T$, we evaluate the algorithms with different $T$ chosen from $\{256, 512, 1024, 1536, 2048, 2560\}$. We find that Resolving with SGD (Algorithm~\ref{alg:SGD}) shows logarithmic regret, while its counterpart without constraint update ($d_t\equiv d$ in Equation~\ref{eq:dual_gradient}) shows a much worse regret. We name the latter algorithm as the ``Nonadaptive resolving with SGD". The online gradient descent (OGD) method  in~\cite{balseiro2020dual} exhibits a $O(\sqrt{T})$ regret as indicated in their theoretical findings. The regret comparison between the algorithms can be found in Figure~\ref{fig:regret-input-I}. In Figure~\ref{fig:constraint-input-I}, we plot the dynamics of resource consumption for one binding dimension of the aforementioned algorithms. Ten curves are displayed, each of which corresponds to one simulation. 
 Being adaptive to the level of remaining resources, Algorithm~\ref{alg:SGD} controls carefully the constraint consumption to ensure that the resources are consumed at a steady rate till they are used up. In comparison, both the OGD and the nonadaptive version of Algorithm~\ref{alg:SGD} stop allocating resources too early, demonstrating the benefits of the constraint updates, which exploit the history of past actions.

\textit{Input model II: Online welfare maximization with costs, dependent reward and resource consumption.} The parameter setting below is based on \cite{balseiro2022best}. The reward functions and the regularization function are the same as in input I, whereas input II considers the case when the reward coefficients $a_t$'s are random variables conditional of the constraint coefficients $b_t$'s. We set $a_t = \text{Proj}_{[0,10]} \{ \theta_t^\top b_t + \delta_t \mathbf{1} \}$, where $\theta_t$ is generated from a multi-variate Gaussian distribution $N(0,\text{diag}(1))$, and $\delta_t$ is generated from the standard Gaussian distribution $N(0,1)$. The constraint coefficients $b_{it}$'s are generated from Bernoulli distribution with probability parameter $p_i$ with $p_i = (1+\alpha)/2$, and $\alpha$ is generated from the beta distribution Beta$(1,3)$. The average resource constraints $d_i$'s are generated from the uniform distribution U$(0.25, 0.75)$. $\kappa$ is set to $0.001$.

\begin{figure}[!htbp]
    \centering
    \includegraphics[width = 0.5\linewidth]{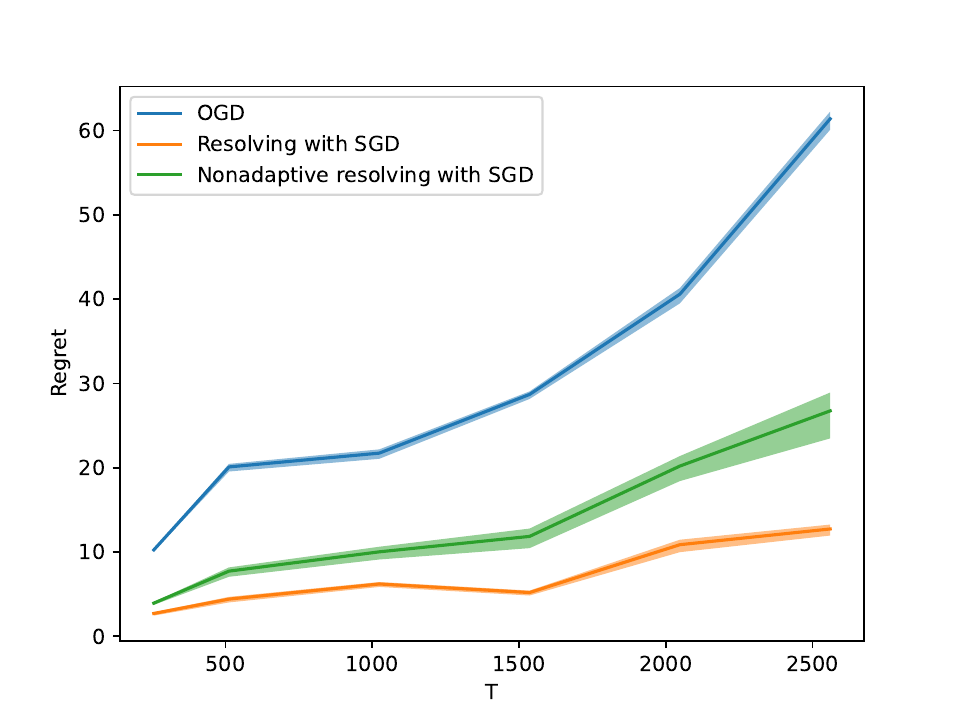}
    \caption{Regret versus horizon (T) under Input II. OGD stands for online gradient descent in \cite{balseiro2020dual}; resolving with SGD is Algorithm~\ref{alg:SGD}; nonadaptive resolving with SGD is the nonadaptive version of Algorithm~\ref{alg:SGD}.}
    \label{fig:regret-input-II}
\end{figure}

Similar to the setting of input I, we evaluate the algorithms under input II with different $T$'s and fix $m=6$. The regret performances and resource consumption are displayed in Figure~\ref{fig:regret-input-II} and Figure~\ref{fig:constraint-input-II}, respectively. Among the three algorithms (Algorithm~\ref{alg:SGD}, the nonadaptive Algorithm~\ref{alg:SGD} and the OGD method in~\cite{balseiro2020dual}),  Algorithm~\ref{alg:SGD} achieves a logarithmic regret, the nonadaptive Algorithm~\ref{alg:SGD} suffers from a higher regret while the regret of OGD grows in a much faster speed. 

\begin{figure}[!htbp]
     \centering
     \begin{subfigure}[t]{0.3\textwidth}
         \centering
         \includegraphics[width=\textwidth]{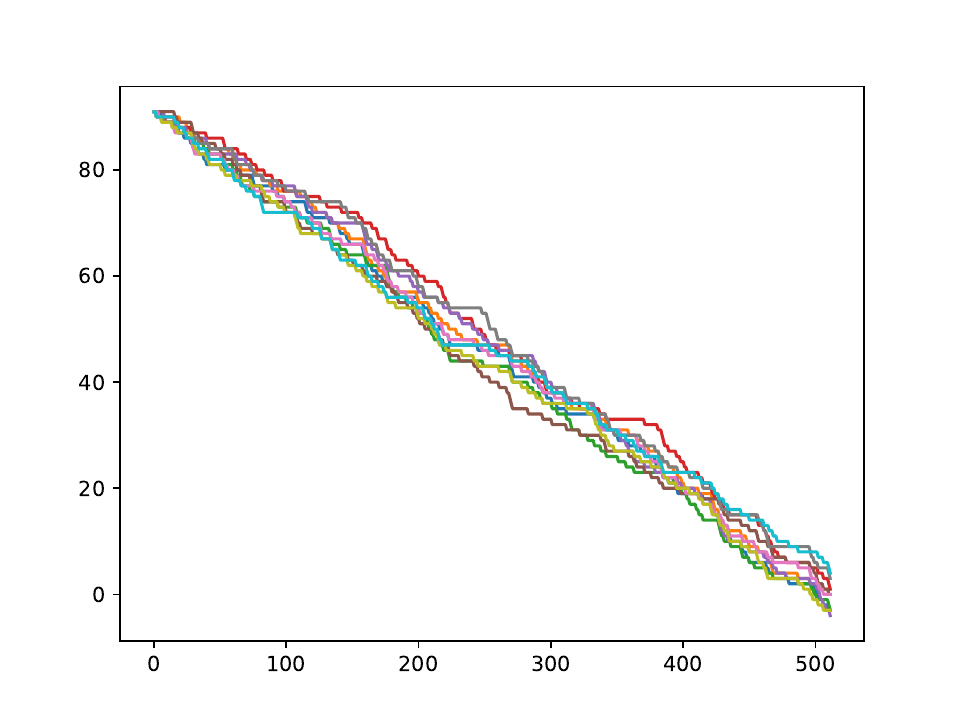}
         \caption{Online gradient descent}
         \label{fig:constraint-ogd-input-II}
     \end{subfigure}
     \hfill
     \begin{subfigure}[t]{0.3\textwidth}
         \centering
         \includegraphics[width=\textwidth]{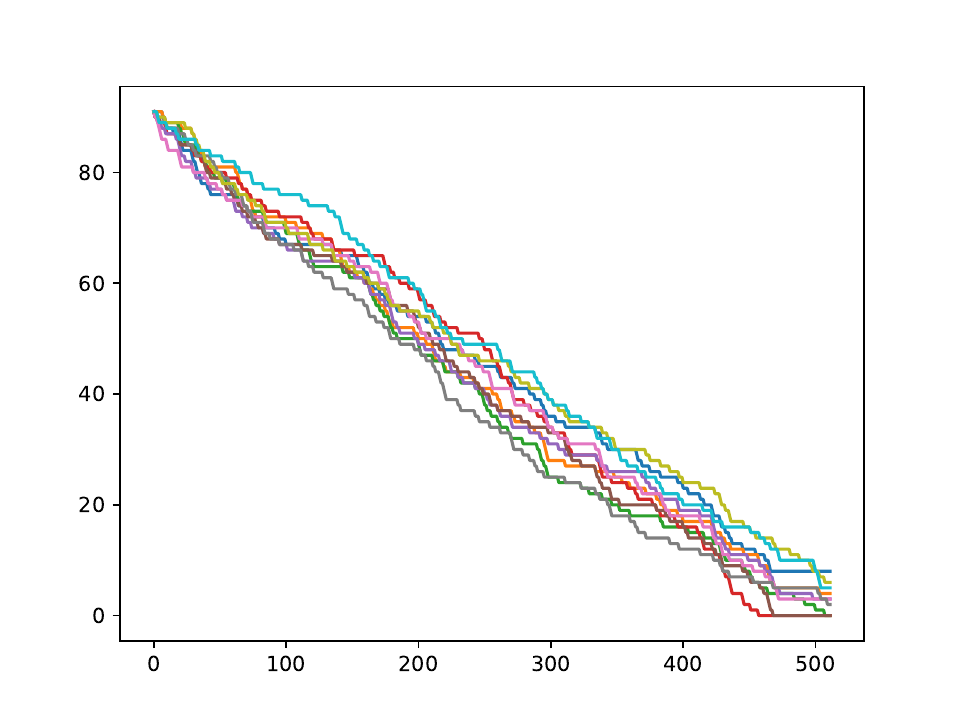}
         \caption{Nonadaptive version of Algorithm~\ref{alg:SGD}}
         \label{fig:constraint-mgd-n-input-II}
     \end{subfigure}
     \hfill
     \begin{subfigure}[t]{0.3\textwidth}
         \centering
         \includegraphics[width=\textwidth]{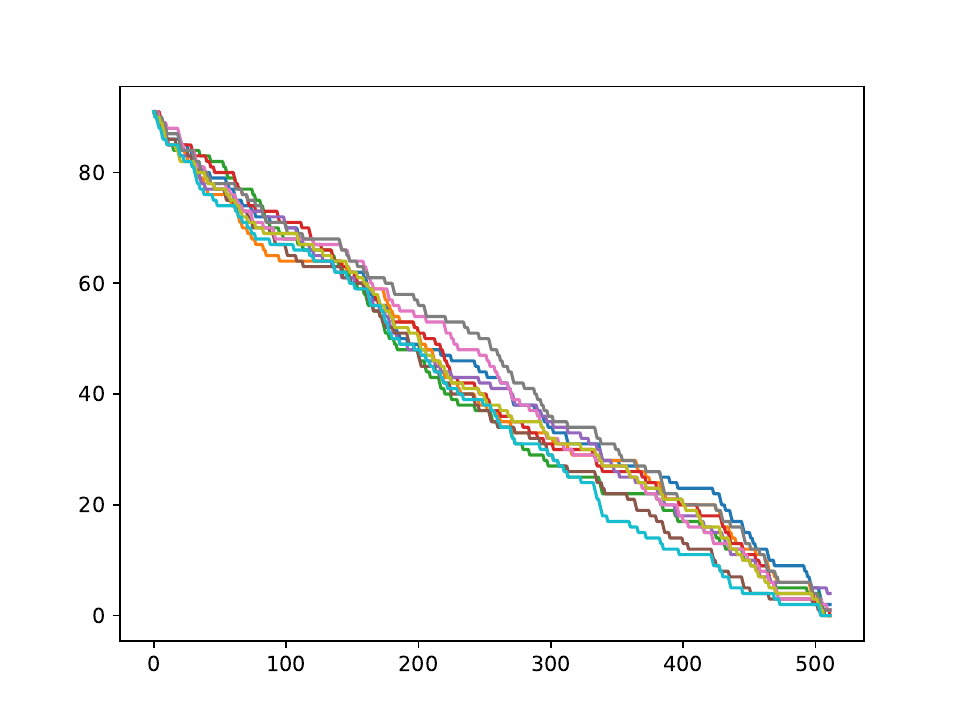}
         \caption{Algorithm~\ref{alg:SGD}}
         \label{fig:constraint-mgd-a-input-II}
     \end{subfigure}
    \caption{Remaining resource of one binding dimension versus time under input model II with $T=512$. Ten curves are displayed, each of which corresponds to one simulation.}
    \label{fig:constraint-input-II}
\end{figure}

\textit{Input model III: Non-regularized online convex resource allocation with one resource.} In this model, we assess the algorithms' performance under a non-regularized special case, where there is only one resource, the reward function $f_t(x) = f_t(x, \xi_t)= -\frac{1}{4}x^2 + \xi_t x$, the constraint $d = \frac{1}{2}$ and cost $b_t = 1$. The random variable $\xi_t$ follows a two-point distribution that takes value in $\{\frac{1}{2}, \frac{3}{4}\}$ with equal probability, i.e., $\PP[\xi_t = \frac{1}{2}] = \PP[\xi_t = \frac{3}{4}] = 0.5$. This special case is used in the proof of Theorem~\ref{thm:ogd_lb}. 

For input model III, the optimal solution to Problem~(\ref{eq:dual_expected}) admits a closed-form due to the simple distribution, which also leads to a relatively small regret compared to other input models. We compare further with the ``No learning" algorithm, which is the convex version of Algorithm 1 in~\cite{li2021online}. It requires the computation of optimal dual solutions, while neither Adaptive (Algorithm~\ref{alg:SGD}) nor OGD needs this information. The regret comparison is shown in Figure~\ref{fig:regret-special-case}. The empirical performance corroborates the theoretical results in that the infrequent resolving saves computation significantly with a relatively small performance loss. We further explain the reason for the performance advantage by plotting the remaining time before stopping in Figure~\ref{fig:remainingTime-special-case}. Benchmark algorithms without constraint update (No learning, OGD) stop allocating resource $O(\sqrt{T})$ steps earlier than Adaptive (Algorithm~\ref{alg:SGD}) and Infreq-adaptive (Algorithm~\ref{alg:infrequent}), which leads to the terrible regret performance. In comparison, Fast (Algorithm~\ref{alg:fast}) uses both the updated constraints and the original constraints, leading to a less accurate dual solution compared to Algorithm~\ref{alg:infrequent} and worse performance.

\begin{figure}[t!]
     \centering
     \begin{subfigure}[t!]{0.44\textwidth}
         \centering
         \includegraphics[width=\textwidth]{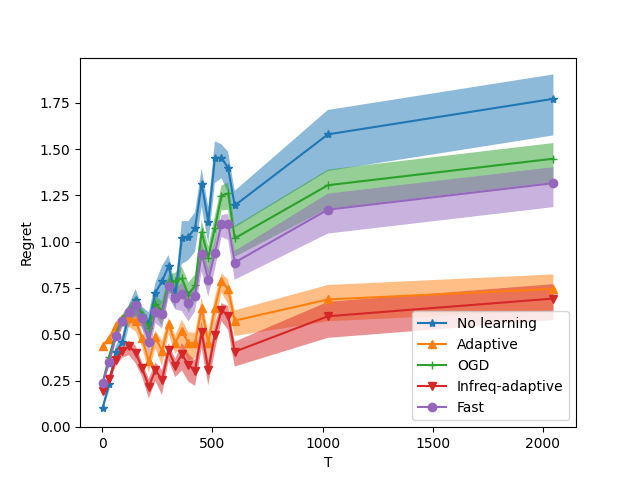}
         \caption{Regret versus horizon ($T$) under input III.}
         \label{fig:regret-special-case}
     \end{subfigure}
     \hfill
     \begin{subfigure}[t!]{0.44\textwidth}
         \centering
         \includegraphics[width=\textwidth]{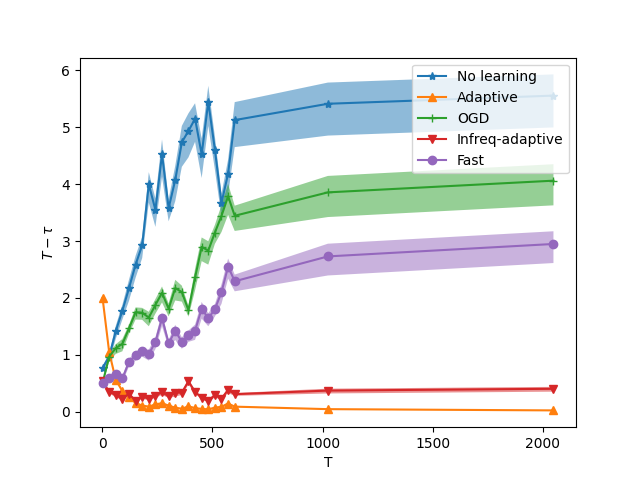}
         \caption{Remaining time ($T-\tau$) versus horizon ($T$) under input III}
         \label{fig:remainingTime-special-case}
     \end{subfigure}
    \caption{Performance evaluation under input III. No learning is the convex version of Algorithm~1 in \cite{li2021online}; OGD is online gradient descent in \cite{balseiro2020dual}; Adaptive is Algorithm~\ref{alg:SGD}; infreq-adaptive is Algorithm~\ref{alg:infrequent}; Fast is Algorithm~\ref{alg:fast}.}
    \label{fig:special-case}
\end{figure}

\textit{Input model IV: Online convex allocation with max-min regularizer.} To demonstrate that our algorithm also works well for nonsmooth regularizers, we set up input model IV. The reward functions are $f_t(x)=-\frac{a_t}{4}x^2+\frac{a_t}{2}x$, where $a_{t}\in \mathbb{R}$ is generated from the uniform distribution $U(0,10)$. Each dimension of the $t$th constraint coefficient, i.e., $b_{it}, i\in [m]$, is generated from the uniform distribution $U(0,0.5)$. We set $m=12$ in this input model. The regularization function is $r(x)=\kappa \min_i x_i/d_i $ so that the minimum allocation is not too small. $\kappa$ is set to $[0, 0.01,0.1]$ to compare the performance of algorithms under different regularization levels.

\begin{figure}
    \centering
    \begin{subfigure}[b]{0.475\textwidth}
            \centering
            \includegraphics[width=\textwidth]{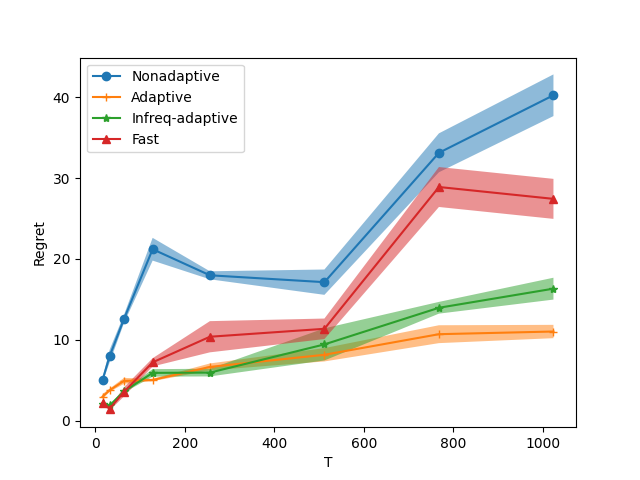}
            \caption[]%
            {$\kappa=0$}    
            \label{fig: maxmin-regret-0}
        \end{subfigure}
        \hfill
        \begin{subfigure}[b]{0.475\textwidth}  
            \centering 
            \includegraphics[width=\textwidth]{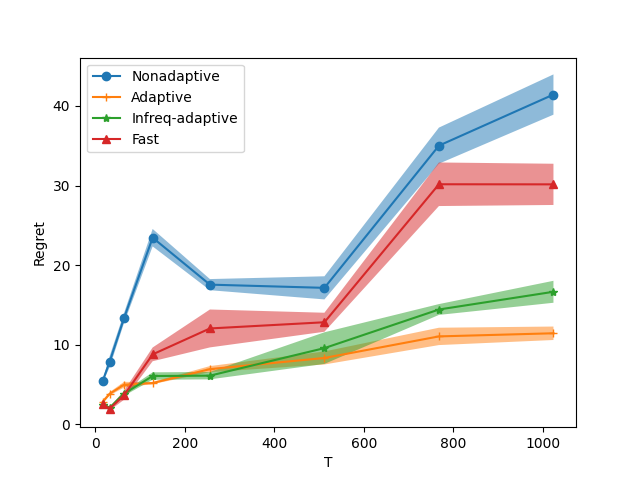}
            \caption[]%
            { $\kappa=0.01$}   
            \label{fig:maxmin-regret-0.01}
        \end{subfigure}
        \vspace{-0.12cm}
        \begin{subfigure}[b]{0.475\textwidth}   
            \centering 
            \includegraphics[width=\textwidth]{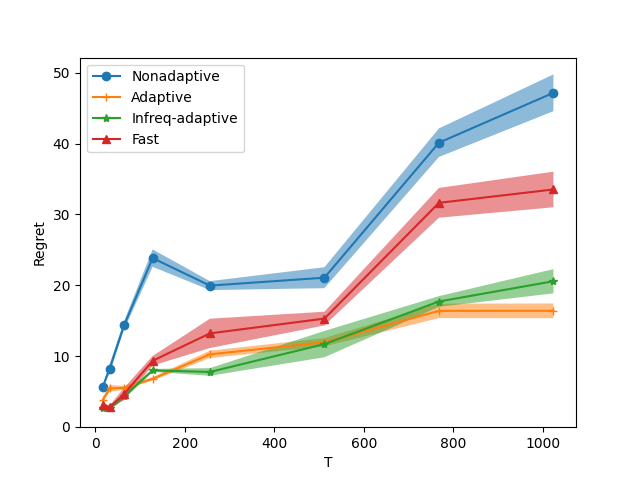}
            \caption[]%
            {$\kappa=0.1$}
            \label{fig:maxmin-regret-0.1}
        \end{subfigure}
        \hfill
        \begin{subfigure}[b]{0.475\textwidth}   
            \centering 
            \includegraphics[width=\textwidth]{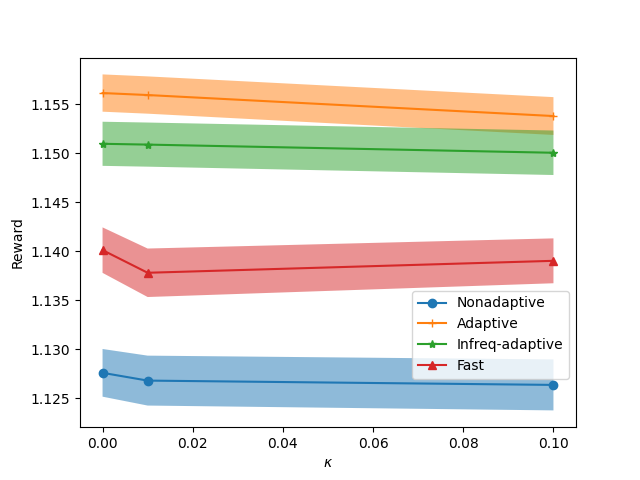}
            \caption[]%
            {Per-step reward ($\sum_t f_t(x_t)/T$) versus $\kappa$ when $T=1024$.}
            \label{fig:maxmin-tradeoff}
        \end{subfigure}
    \caption{Regret versus horizon ($T$) under input IV with different $\kappa$s in Figure~\ref{fig: maxmin-regret-0}. Nonadaptive is Algorithm~\ref{alg:ogd} with OGD; Adaptive is Algorithm~\ref{alg:SGD}; Infreq-adaptive is Algorithm~\ref{alg:infrequent}; Fast is Algorithm~\ref{alg:fast}. Figure~\ref{fig:maxmin-tradeoff} shows the impact of different regularization levels on the per-step reward (with 95\% confidence interval).}
    \label{fig:regret_input_V}
\end{figure}
In Figure~\ref{fig:regret_input_V}, regret curves under different $\kappa$s of Nonadaptive (Algorithm~\ref{alg:ogd} with OGD), Adaptive (Algorithm~\ref{alg:SGD}), Infreq-adaptive (Algorithm~\ref{alg:infrequent}) and Fast (Algorithm~\ref{alg:fast}) are plotted. The regret of Algorithm~\ref{alg:SGD} and Algorithm~\ref{alg:infrequent} grow slower than the other two, which shows the advantage of using updated constraint information. 
It is observed in Figure~\ref{fig:maxmin-tradeoff} that the per-step reward decreases as we regularize the solution by setting $\kappa>0$, which is consistent with the intuition that regularization has a negative impact on the revenue. Also, it is observed that the reward decreases when the regularization level further increases, but we also observe a slight increase for Fast (Algorithm~\ref{alg:fast}) when $\kappa$ increases from $0.01$ to $0.1$, which indicates that a proper regularization level may not decrease the reward too much. It would be interesting to study the problem of choosing the appropriate regularization level in the future.

\section{More Discussions}
\subsection{Computational cost}

Specifically, for strongly convex dual objectives, Our algorithm of frequent resolving requires computing gradients for $O(T^2)$ times in total; for more general dual objectives, it requires $O(T^4)$ times of gradient computation in total.  With a good initialization, we can unevenly divide time horizon $T$ into $\log T$ epochs and only solve empirical dual optimization at the beginning of each epoch. This infrequent algorithm only takes gradient computations for $O(T\log T)$ times for strongly convex problems (which is nearly linear) and $O(T^3\log T)$ for general problems, while delivering an optimal regret bound. The fast algorithm we established has only linear computational cost $O(T)$, which is comparable with OGD but reaches sub-optimal regret $O(\log^2 T)$.

\subsection{Fenchel conjugate of regularizers}
Here we provide Fenchel conjugates for three commonly used regularizers.
\begin{enumerate}
        \item \textbf{$\ell_{1}$-loss}: $r(a):=-\kappa\norm{a}_1$. Define $\cZ := \left\{a \middle| \norm{a}_{\infty}\le L \right\}$, $h(a)=r(a) -\mu^\top a $ then $r^*(\mu)= \max_{a\in\cZ}\{ r(a) -\mu^\top a \}:= \max_{a\in\cZ}\{ h(a) \}$. We have the subgradient: $\nabla h(a) = -\kappa\operatorname{sign}(a)-\mu$. Since $\norm{\mu}_\infty\le G=\kappa$, we know that $h(a)$ takes its maximum only when $a=0$. the conjugate $r^*$ in $\Omega_\mu$ is thus of the form $r^*(\mu)=0$ when $\kappa \ge G =\kappa$.
    \item \textbf{Max-min loss}: $r(a):=\kappa \min_i ({a_i}/{d_i}) $. Define $\cZ := \left\{a \middle| \abs{a_i}\le d_i L, \ i\in [m] \right\}$,  and $z_i=a_i/d_i$. Then we define the function to be maximized as $h(z):=r(a) -\mu^\top a =\kappa \min(z_i)-\sum \mu_i d_i z_i $, for the region $\norm{z}_\infty \le L$. By computing the subgradient of each dimension, we know that the optimal $z$ that maximizes the $h(z)$ must have:
    \begin{equation*}
        z_i=\begin{cases}
            L & \text{ if } \mu_i d_i<0 \\
            -L & \text{ if } \mu_i d_i>\kappa \\
            \min_i(z_i)  & \text{ if } \mu_i d_i\in [0,\kappa]
        \end{cases}. 
    \end{equation*}
    Therefor, we have $z_i=\min_i(z_i)$ if $\mu_i\ge 0$. Moreover, whether $\min_i(z_i)$ takes $L$ or $-L$ depends on the value $\kappa-\sum  d_i \left(\mu_i\right)_+$. If $\kappa-\sum  d_i \left(\mu_i\right)_+>0$, then we will have $\min_i(z_i)=L$, otherwise $\min_i(z_i)=-L$. Thus, the conjugate $r^*$ in $\Omega_\mu$ is of the form $r^*(\mu)=L\cdot\left( \abs{\kappa-\sum  d_i \left(\mu_i\right)_+} -\sum  d_i \left(\mu_i\right)_- \right)  $.
    
    \item \textbf{Negative max loss}: $r(a):=-\kappa\max_i ({a_i}/{d_i})$. Define $\cZ := \left\{a \middle| \abs{a_i}\le d_i L, \ i\in [m] \right\}$, and $z_i=a_i/d_i$. Then we have $h(z):=r(a) -\mu^\top a =-\kappa \max(z_i)-\sum \mu_i d_i z_i$. Following  an analogous argument as in the max-min loss, we have $r^*(\mu)=L\cdot\left( \abs{\kappa+\sum  d_i \left(\mu_i\right)_-} +\sum  d_i \left(\mu_i\right)_+ \right)  $
\end{enumerate}

\end{document}